\crefname{lemma}{Lemma}{Lemmas}
\crefname{theorem}{Theorem}{Theorems}
\crefname{corollary}{Corollary}{Corollaries}
\crefname{proposition}{Proposition}{Propositions}
\crefname{definition}{Definition}{Definitions}
\crefname{claim}{Claim}{Claims}
\crefname{remark}{Remark}{Remarks}
\newcommand{\kushagra}[1]{\todo[color=yellow, inline]{kushagra: #1}}
\newcommand{\tienlong}[1]{\todo[color=green, inline]{long: #1}}
\newtheorem{theorem}{Theorem}
\newtheorem{lemma}{Lemma}
\newtheorem{claim}{Claim}
\newtheorem{corollary}{Corollary}
\theoremstyle{definition}
\newtheorem{definition}{Definition}
\theoremstyle{remark}
\newtheorem{remark}[lemma]{Remark}
\newtheorem*{remark*}{Remark}
\renewcommand{\epsilon}{\varepsilon}
\newcommand{\m}[1]{\mathcal{#1}}
\newcommand{\n}{\nonumber}
\DeclareMathOperator{\dist}{dist}
\renewcommand{\emptyset}{\varnothing}
\newcommand{\fair}{\textit{Fair Clustering}}
\newcommand{\E}[1]{\operatorname{EXT}(#1)}
\newcommand{\I}[1]{\operatorname{INT}(#1)}
\newcommand{\cost}[1]{\operatorname{cost}(#1)}
\newcommand{\pay}[1]{\operatorname{pay}(#1)}
\newcommand{\fptwo}{\texttt{fairpower-of-two}}
\newcommand{\fequi}{\texttt{fair-equi}}
\newcommand{\greedymerge}{\texttt{multi-gm}}
\newcommand{\fcc}{\texttt{fairifyCC}}
\newcommand{\fmulti}{\texttt{make-pdc-fair}}
\newcommand{\fgen}{\texttt{fair-general}}
\newcommand{\costone}{\operatorname{cost}_1}
\newcommand{\costtwo}{\operatorname{cost}_2}
\newcommand{\costthree}{\operatorname{cost}_3}
\newcommand{\costfour}{\operatorname{cost}_4}
\newcommand{\costfive}{\operatorname{cost}_5}
\newcommand{\costsix}{\operatorname{cost}_6}
\newcommand{\costseven}{\operatorname{cost}_7}
\newcommand{\costeight}{\operatorname{cost}_8}
\newcommand{\costnine}{\operatorname{cost}_9}
\newcommand{\cutcase}{\texttt{Cut-Case}(c_j)}
\newcommand{\mergecase}{\texttt{Merge-Case}(c_j)}
\newcommand{\Sn}{T_a^k}
\newcommand{\Sx}{S_j^k}
\newcommand{\cut}{\texttt{CUT}}
\newcommand{\merge}{\texttt{MERGE}}
\newcommand{\cb}{\texttt{CB}}
\newcommand{\mb}{\texttt{MB}}
\newcommand{\card}[1]{\left\lvert #1 \right\rvert}
\newcommand{\cls}[1]{\m{#1}}
\newcommand{\clsfitting}[1][]{\mathrm{ClusterFitting}\ifx#1\empty\else(#1)\fi}
\newcommand{\threeclsf}{3\textsc{-Closest EquiFair}}
\newcommand{\clsf}[1]{#1\textsc{-Closest EquiFair}}
\newcommand{\np}{\mathbf{NP}}
\newcommand{\npc}{\mathbf{NP}\text{-complete}}
\newcommand{\nph}{\mathbf{NP}\text{-hard}}
\newcommand{\thrp}{3\textsc{-Partition}}
\newcommand{\pdca}{\texttt{create-pdc}}
\newcommand{\optcl}[1]{\m{F}^*_{#1}}
\newcommand{\outfptwo}{\m{F}_{\text{fpt}}}
\newcommand{\outmpf}{\m{F}_{\text{mpf}}}
\title{Generalizing Fair Clustering to Multiple Groups: Algorithms and Applications}
\author{
    Diptarka Chakraborty\textsuperscript{\rm 1}, Kushagra Chatterjee\textsuperscript{\rm 1}, Debarati Das\textsuperscript{\rm 2}, Tien-Long Nguyen\textsuperscript{\rm 2}\\
}
\begin{document}

\maketitle

\begin{abstract}
Clustering is a fundamental task in machine learning and data analysis, but it frequently fails to provide fair representation for various marginalized communities defined by multiple protected attributes -- a shortcoming often caused by biases in the training data. As a result, there is a growing need to enhance the fairness of clustering outcomes, ideally by making minimal modifications, possibly as a post-processing step after conventional clustering. Recently, Chakraborty et al. [COLT'25] initiated the study of \emph{closest fair clustering}, though in a restricted scenario where data points belong to only two groups. In practice, however, data points are typically characterized by many groups, reflecting diverse protected attributes such as age, ethnicity, gender, etc.

In this work, we generalize the study of the \emph{closest fair clustering} problem to settings with an arbitrary number (more than two) of groups. We begin by showing that the problem is NP-hard even when all groups are of equal size -- a stark contrast with the two-group case, for which an exact algorithm exists. Next, we propose near-linear time approximation algorithms that efficiently handle arbitrary-sized multiple groups, thereby answering an open question posed by Chakraborty et al. [COLT'25].

Leveraging our closest fair clustering algorithms, we further achieve improved approximation guarantees for the \emph{fair correlation clustering} problem, advancing the state-of-the-art results established by Ahmadian et al. [AISTATS'20] and Ahmadi et al. [2020]. Additionally, we are the first to provide approximation algorithms for the \emph{fair consensus clustering} problem involving multiple (more than two) groups, thus addressing another open direction highlighted by Chakraborty et al. [COLT'25].
\end{abstract}


\section{Introduction}

Clustering, the task of partitioning a set of data points into groups based on their mutual similarity or dissimilarity, stands as a fundamental problem in unsupervised learning and is ubiquitous in applications of machine learning and data analysis. Often, each data point carries certain protected attributes, which can be encoded by assigning a specific color to each point. While traditional clustering algorithms typically succeed in optimizing their target objectives, they frequently fail to ensure \emph{fairness} in their results. This shortfall can introduce or perpetuate biases against marginalized groups defined by sensitive attributes such as gender or race~\cite{kay2015unequal, bolukbasi2016man}. These potential biases arise not necessarily from the algorithms themselves, but from historical marginalization inherent in the data used for training. Addressing and mitigating such biases to achieve fair outcomes has emerged as a central topic in the field, with considerable attention given in recent years to the development of algorithms that guarantee \emph{demographic parity}~\cite{dwork2012fairness} and/or \emph{equal opportunity}~\cite{hardt2016equality}.

In the context of clustering, \cite{chierichetti2017fair} initiated the study of \emph{fair clustering} to address the issue of disparate impact and promote fair representation. Their work initially focused on datasets with two groups, each point colored either red or blue, and aimed to partition the data such that the blue to red ratio in every cluster matched that of the overall dataset. However, restricting attention to only two colors is limiting, as real-world data often involves multiple (and sometimes non-binary) protected attributes, such as age, race, or gender, resulting in several disjoint colored groups. Subsequent research generalized the fair clustering framework to accommodate more than two colors~\cite{rosner2018privacy}, requiring that the proportion of each colored group within clusters reflects the global proportions of colors. Further studies have considered scenarios where each color group is of equal size~\cite{bohm2020fair}. We refer to the related works for different variants of the clustering problems that have been studied under fairness constraints. 

As previously highlighted, a range of effective clustering algorithms are available for various clustering paradigms; however, these methods may yield unfair or biased results when the training data itself is biased. Such skewed clustering outcomes may lead to inequitable treatment, particularly if the clusters serve as the basis for decision-making or analysis. To counteract this, post-processing existing clustering solutions to mitigate bias and achieve fairness is often necessary, ideally with only minimal adjustments to the cluster assignments. Despite the fundamental nature of this problem, it has received limited attention within the broader context of clustering in prior research. However, it has been studied for specific metric spaces, such as ranking~\cite{CelisSV18, chakraborty2022, kliachkin2024fairness}. Only recently~\cite{chakraborty2025towards} did introduce the problem of obtaining a \emph{closest fair clustering} where given an existing clustering, finding a fair clustering by altering as few assignments as possible. Their study, however, was confined to the special case of two colored groups, which, as previously noted, is restrictive in many practical scenarios. In their work,~\cite{chakraborty2025towards} presented $O(1)$-approximation algorithms for the case of blue and red groups with arbitrary ratios, and demonstrated that the problem can be solved exactly in near-linear time when the groups are of equal size. In the end, they posed the general case of more than two colored groups as an intriguing open direction. In this paper, we address this open problem by devising approximation algorithms and establishing computational hardness that holds even for equi-proportioned multiple-colored groups, thereby showing a stark distinction from the two-group case.

Building upon our findings for the closest fair clustering problem, we further investigate their implications for other prominent clustering variants, specifically \emph{correlation clustering} and \emph{consensus clustering}. In correlation clustering, one is presented with a labeled (typically complete) graph where each edge is marked as either $+$ or $-$. The cost of a clustering is calculated as the total number of $+$ edges that span across clusters and $-$ edges that fall within clusters. This problem has widespread applications in fields such as data mining, social network analysis, computational biology, and marketing analysis~\cite{bonchi2014correlation, hou2016new, veldt2018correlation, bressan2019correlation, kushagra2019semi}. The \emph{fair correlation clustering} problem, introduced in~\cite{pmlr-v108-ahmadian20a, ahmadi2020fair}, seeks a fair clustering that minimizes this cost. In their work, approximation algorithms were proposed for cases involving multiple colored groups, with the approximation factor depending on the ratios of the group sizes. In this paper, we improved upon their approximation bound and are the first to achieve an approximation guarantee independent of the group size ratio.

In consensus clustering, the objective is to derive a single, representative (consensus) clustering from a collection of clusterings over the same set of data points, to minimize a chosen objective function. The objective often depends on the application, with the most common being the \emph{median} that minimizes the total distance to all input clusterings and the \emph{center}  that minimizes the maximum distance. Here, the distance between two clusterings is typically defined by the number of point pairs that are co-clustered (together) in one clustering but not in the other. Consensus clustering is widely applicable across various fields, including bioinformatics~\cite{filkov2004integrating, filkov2004heterogeneous}, data mining~\cite{topchy2003combining}, and community detection~\cite{lancichinetti2012consensus}. 
Recently,~\cite{chakraborty2025towards} extended the study of consensus clustering to incorporate fairness constraints, providing constant-factor approximations but only in the special case of two colored groups. In this work, we expand upon these results by establishing approximation guarantees for consensus clustering problems involving more than two colored groups.

\subsection{Our Contribution}

In this work, we present both new algorithms and hardness results for the \emph{Closest Fair Clustering} problem in settings where the input data points come from multiple groups, and also study two well-known applications, namely, the \emph{Fair Correlation Clustering} and \emph{Fair Consensus Clustering} problems, and provide new algorithm results for both. Below, we summarize our main contributions.

 \paragraph{Closest Fair Clustering with Multiple Groups.}
 In this problem, we are given a clustering $\m{D} = \{ D_1, D_2, \ldots, D_m\}$ defined on a dataset $V$ where $V$ is classified into a set $\chi$ of disjoint groups, each represented by a unique color. 
The objective is to compute a fair clustering of $V$, where the proportion of data points from each group (or color) within any output cluster should reflect their overall proportions in the entire dataset, while also minimizing the distance to the input clustering $\mathcal{D}$.



\begin{itemize}
\item Considering $|\chi|$ to be the total number of colors, our first algorithm handles the case where each output cluster must contain the different colored points according to global ratio $p_1:p_2: \cdots : p_{|\chi|}$, and achieves an $O(|\chi|^{3.81})$-approximation.

This study significantly generalizes the work of~\cite{chakraborty2025towards} [COLT '25], which focused solely on the binary (two-group) setting.

\item Next, we consider the special case where each output cluster must contain an equal number of data points from each color group, and we provide an $O(|\chi|^{1.6} \log^{2.81} |\chi|)$-approximation for the Closest Fair Clustering problem.
Furthermore, when $|\chi|$ is a power of two, we present an improved algorithm with an $O(|\chi|^{1.6})$-approximation. 


\item Finally, we show that the problem is NP-hard for any setting involving more than two colors, even when all color groups are equally represented in an output cluster. 

This shows a clear hardness gap between the two-color setting, where an exact algorithm exists, and the multi-color case, where the problem becomes computationally intractable; thus underscoring the necessity of our approximation algorithms.
 \end{itemize}

\paragraph{Application to Fair Correlation Clustering.}

Building on the above result, we study their implications for another key clustering variant: the \emph{Fair Correlation Clustering} problem. In correlation clustering, the input is a graph with edges labeled as + or -, and the goal is to produce a clustering minimizing disagreements; + edges between clusters and - edges within clusters. In the fair version, the clustering must also satisfy group fairness constraints. This problem has been shown to be NP-hard~\cite{ahmadi2020fair}.

\begin{itemize}

\item We begin by designing an algorithm for the setting where each output cluster must contain points from different color groups as per the global ratio $p_1:p_2: \cdots : p_{|\chi|}$, achieving an $O(|\chi|^{3.81})$-approximation.

This eliminates the dependence on the max-min color ratio $q = \frac{\max(p_j)}{\min(p_j)}$ that appeared in the previous $O(q^2 |\chi|^2)$ bound of~\cite{pmlr-v108-ahmadian20a, ahmadi2020fair}, where $q$ can be as large as a polynomial in $|V|$. 

\item For the special case where each output cluster must contain an equal number of data points from each color group, we improve the approximation to $O(|\chi|^{1.6} \log^{2.81} |\chi|)$, and further to $O(|\chi|^{1.6})$ when $|\chi|$ is a power of two. 

This improves upon the previous $O(|\chi|^2)$ bound given by~\cite{pmlr-v108-ahmadian20a, ahmadi2020fair}.

    \end{itemize}

\paragraph{Application to Fair Consensus Clustering.}

Next, we turn to another application: the \emph{consensus clustering} problem. The goal here is to compute a single representative (consensus) clustering from a collection of input clusterings over the same dataset, minimizing a specified objective function (e.g., median or center) while also satisfying group fairness constraints. By combining a triangle inequality argument with our results for Closest Fair Clustering, we obtain new approximation guarantees for this problem. These results generalize the work of~\cite{chakraborty2025towards}, which was limited to the binary (two-group) setting, to the more general multi-group case.


\begin{itemize}

\item Analogous to our results for fair correlation clustering, we obtain an $O(|\chi|^{3.81})$-approximation algorithm for the general case with arbitrary group proportions.

        \item For the equi-proportion case, we again improve the approximation to $O(|\chi|^{1.6} \log^{2.81} |\chi|)$, and further to $O(|\chi|^{1.6})$ when $|\chi|$ is a power of two.

    \end{itemize}
The details are provided in the appendix.

\subsection{Related Works}
Since the introduction of the fair clustering in~\cite{chierichetti2017fair}, recent years have witnessed a significant increase in research focused on different aspects of fair clustering problems. The literature so far encompasses numerous variants of the fair clustering problem with multiple colored groups, such as $k$-center/median/means clustering~\cite{chierichetti2017fair, HuangJV19}, scalable clustering~\cite{BackursIOSVW19}, proportional clustering~\cite{ChenFLM19}, fair representational clustering~\cite{bera2019fair, bercea2019cost}, pairwise fair clustering~\cite{bandyapadhyay2024coresets, bandyapadhyay2024polynomial, bandyapadhyay2025constant}, correlation clustering~\cite{pmlr-v108-ahmadian20a, ahmadi2020fair, ahmadian2023improved}, 1-clustering over rankings~\cite{wei22, chakraborty2022, chakraborty2025improved}, and consensus clustering~\cite{chakraborty2025towards}, among others.

A comprehensive study of the correlation clustering problem was first undertaken in~\cite{bansal2004correlation}. Since then, correlation clustering has been studied across various graph settings, including the extensively examined complete graphs~\cite{ailon2008aggregating, chawla2015near} and weighted graphs~\cite{demaine2006correlation}. The problem, even when restricted to complete graphs, is known to be \texttt{APX}-hard~\cite{charikar2005clustering}, with the best-known approximation algorithm currently achieving a $1.438$-approximation factor~\cite{cao2024understanding}. Its fair variant remains \texttt{NP}-hard even in the case of two color groups of equal sizes~\cite{ahmadi2020fair}, and several approximation algorithms have been developed for both the exact fairness notion~\cite{pmlr-v108-ahmadian20a, ahmadi2020fair} and the relaxed fairness notion~\cite{ahmadian2023improved}. 

The consensus clustering problem, under both the median and center objectives, is known to be \texttt{NP}-hard~\cite{kvrivanek1986np, swamy2004correlation} and in fact, \texttt{APX}-hard (that is it is unlikely to have an $(1+\epsilon)$-factor algorithm for any $\epsilon >0$) even with as few as three input clusterings~\cite{BonizzoniVDJ08}. Currently, the best-known algorithms include an $11/7$-approximation for the median objective~\cite{ailon2008aggregating} and an approximation slightly better than 2 for the center objective~\cite{DK25}. Apart from that, various heuristics have been proposed to produce reasonable solutions (e.g.,\cite{goder2008consensus, monti2003consensus, wu2014k}). More recently,\cite{chakraborty2025towards} began examining fair consensus clustering, focusing on only two colored groups.

\section{Preliminaries}

In this section, we define key terms and concepts that are essential for understanding our proofs and algorithms.

\begin{definition}[$\fair$]
    Given a set of points $V$ and a set of colors $\chi = \{c_1, c_2, \ldots,c_k\}$, suppose $c_j(V) \subseteq V$ be the set of points of color $c_j$ in $V$. We call a clustering $\m{F}$ of $V$ a $\fair$ if for all clusters $F \in \m{F}$ we have
    \[
         \card{ c_{1}(F) } : \cdots: \card{ c_{k}(F) } = \card{c_{1}(V)} :\cdots: \card{c_{k}(V)} .
    \]
\end{definition}

For two clustering $\m{C}$ and $\m{C}'$ of $V$ we define $\dist(\m{C}, \m{C'})$ as the distance between two clustering $\m{C}$ and $\m{C}'$. The distance is measured by the number of pairs $(u,v)$ that are together in $\m{C}$ but separated by $\m{C}'$ and the number of pairs $(u,v)$ that are separated by $\m{C}$ but together in $\m{C}'$. More specifically,
\begin{align*}
\dist(\m{C}, \m{C}') = 
\big| \big\{ \{u,v\} \mid\,& u,v \in V,\; 
[u \sim_{\m{C}} v \land u \not\sim_{\m{C}'} v] \\
&\lor [u \not\sim_{\m{C}} v \land u \sim_{\m{C}'} v] \big\} \big|
\end{align*}
where $u \sim_{\m{C}} v$ denotes whether both $u$ and $v$ belong to the same cluster in $\m{C}$ or not.

\begin{definition}[Closest $\fair$]
    Given an arbitrary clustering $\m{D}$, a clustering $\optcl{\m{D}}$ is called a closest $\fair$ to $\m{D}$ if for all $\fair$ $\m{F}$ we have $\dist(\m{D}, \m{F}) \geq \dist(\m{D}, \optcl{\m{D}})$. 
\end{definition}

We denote a closest $\fair$ to $\m{D}$ by the notation $\optcl{\m{D}}$.

\paragraph{$\gamma$-close $\fair$} We call a $\fair$ $\m{F}$ a $\gamma$-close $\fair$ to a clustering $\m{D}$ if
\[
    \dist(\m{D}, \m{F}) \leq \gamma \dist(\m{D}, \optcl{\m{D}}).
\]

\section{Approximate Closest Fair Clustering for Equi-Proportion Groups}
\label{sec1}

In this section, we provide an approximation algorithm to find a closest {\fair} when all the groups are of equal size.

\begin{theorem} \label{thm:equitheorem}
    There exists an algorithm that, given a clustering $\m{D}$ where each color group contains an equal number of points, computes a $O(|\chi|^{1.6} \log^{2.81} |\chi|)$-close $\fair$ in $O(|V| \log |V|)$ time, where $\chi$ denotes the set of colors. Moreover, when $|\chi|$ is a power of two, the algorithm computes a $O(|\chi|^{1.6})$-close $\fair$.
\end{theorem}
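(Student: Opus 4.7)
The plan is to carry out the hierarchical ``block-merging'' strategy sketched (in comments) in the introduction. Build a binary tree over the color set $\chi$ of depth $T = \lceil \log_2 |\chi| \rceil$: at level $i$ partition $\chi$ into blocks $B^i_1, B^i_2, \dots$ of size $2^i$, with $B^i_j = B^{i-1}_{2j-1} \cup B^{i-1}_{2j}$. I would produce a sequence of intermediate clusterings
\[
\mathcal{D} = \mathcal{N}^0 \to \mathcal{N}^1 \to \cdots \to \mathcal{N}^T
\]
in which $\mathcal{N}^i$ is ``internally fair with respect to level-$i$ blocks,'' meaning that for every cluster $N \in \mathcal{N}^i$ and every block $B^i_j$ the color counts inside $N \cap B^i_j$ are all equal. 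Since the root block is all of $\chi$, $\mathcal{N}^T$ is a \fair{}.

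To produce $\mathcal{N}^{i+1}$ from $\mathcal{N}^i$, process each sibling pair $(B^i_{2j-1}, B^i_{2j})$ independently. Within such a pair the two sub-blocks are already internally fair, so only the balance between the two totals inside each cluster needs correcting. I would abstract this as a two-color equi-proportion closest fair clustering instance, with the two ``super-colors'' being the two sibling blocks themselves, and invoke the exact near-linear-time two-color algorithm of Chakraborty et al.\ (COLT~'25). Whenever that sub-routine moves a super-color quantum, translate it into a coordinated move of one representative from each constituent color of the sibling block, which preserves inner-block fairness at no extra super-color cost.

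For the analysis, I would prove a per-level bound of the form $\dist(\mathcal{N}^i, \mathcal{N}^{i+1}) \leq \alpha \cdot \dist(\mathcal{N}^i, \optcl{\mathcal{D}})$ for a small constant $\alpha$. Combined with the triangle inequality this unrolls to $\dist(\mathcal{D}, \mathcal{N}^T) \leq \bigl((1+\alpha)^T - 1\bigr)\cdot \dist(\mathcal{D}, \optcl{\mathcal{D}})$, so with $T = \log_2 |\chi|$ and $\alpha = 2$ one gets approximation $(1+\alpha)^T = |\chi|^{\log_2 3} = O(|\chi|^{1.6})$, matching the stated bound when $|\chi|$ is a power of two. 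When $|\chi|$ is not a power of two, I would pad the final partial level with at most one incomplete sibling (handled either by phantom colors of negligible mass or by recursing on an unbalanced binary tree); the resulting overhead from correcting the padding at each level is where the extra $\log^{2.81}|\chi|$ factor would come from. For the running time, each level's two-color sub-routine touches each point a constant number of times, giving $O(|V|)$ per level and $O(|V|\log|V|)$ overall, since $\log|\chi| \leq \log|V|$.

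The main obstacle is the per-level bound $\dist(\mathcal{N}^i, \mathcal{N}^{i+1}) \leq \alpha \cdot \dist(\mathcal{N}^i, \optcl{\mathcal{D}})$. The subtlety is that $\dist$ counts co-clustering flips over \emph{all} pairs of $V$, while the sub-routine only rearranges points inside one sibling pair at a time; so I would need a charging argument showing that, restricted to each sibling pair, the sub-routine's cost is within a constant factor of the portion of $\dist(\mathcal{N}^i, \optcl{\mathcal{D}})$ attributable to that pair. Concretely, I expect to prove that the optimal super-color cost on the pair is bounded by the number of points that $\optcl{\mathcal{D}}$ itself must move across clusters to fix the same super-color imbalance, with a factor-$2$ loss coming from the translation between super-color quanta and actual point moves (each super-color quantum corresponds to two point moves, one in each sibling block). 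Summing this charging across sibling pairs and levels, and noting that every pair of $V$ is charged $O(1)$ times at each level, should yield the claimed compounding.
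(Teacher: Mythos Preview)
Your plan for the power-of-two case is essentially the paper's approach: build the binary block tree, at each level equalize each sibling pair via a surplus-extraction-and-merge step, and show a per-level factor-$2$ bound so that the compounding gives $3^{\log|\chi|}=|\chi|^{\log_2 3}=O(|\chi|^{1.6})$. One small difference: the paper does not invoke the two-color exact algorithm as a black box. Instead it removes, from every cluster, the surplus of the larger sibling block (chosen to be internally color-balanced) and then runs a greedy pairwise merge on the pool of removed surpluses. The per-level factor $2$ is obtained by a direct upper bound on this procedure and a matching lower bound on any clustering satisfying the level-$i$ constraint; the two-color exactness result plays no role. Your charging intuition is right, but the ``translation loss'' is not really about super-color quanta versus real moves---it comes from comparing the algorithm's explicit cut/merge cost to a surplus-based lower bound on $\dist(\mathcal{N}^{i-1},\cdot)$ that holds against \emph{any} level-$i$-fair clustering (in particular $\optcl{\mathcal{D}}$).

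The non-power-of-two case is where your proposal diverges from the paper and becomes vague. The paper does \emph{not} pad or use an unbalanced tree. Instead it runs a two-stage algorithm: first, write $|\chi|$ in binary and partition $\chi$ into at most $\log|\chi|$ groups $G_1,\dots,G_r$ each of power-of-two size; apply the power-of-two algorithm to achieve intra-group fairness (cost $O(|\chi|^{1.6})$). Second, treat each $G_\ell$ as a single meta-color with target proportion $|G_\ell|$ and run a \emph{separate} algorithm ($\fmulti$) that handles arbitrary proportions over $r\le\log|\chi|$ meta-colors. That second algorithm also proceeds level by level, but because the meta-color proportions are unequal it incurs a per-level factor of $6$ rather than $2$, giving $7^{\log r}=r^{\log_2 7}=O(\log^{2.81}|\chi|)$. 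This is precisely where the $\log^{2.81}|\chi|$ factor originates---it is the approximation ratio of an arbitrary-proportion subroutine applied to $\log|\chi|$ meta-colors, not an artifact of padding. Your padding/unbalanced-tree suggestion does not obviously yield this specific exponent, and in fact an unbalanced sibling pair becomes a two-color problem with \emph{unequal} proportions, for which no exact algorithm is known; you would then need to redo the per-level analysis with a larger constant, and it is not clear you recover the stated bound.
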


First, let us handle the case when $|\chi|$ is a power of $2$. To do that, we provide an algorithm $\fptwo$, which produces an $O(|\chi|^{1.6})$-close fair clustering $\outfptwo$ to a clustering $\m{D}$ when $|\chi|$ is a power of $2$.

\paragraph{Overview of the Algorithm \(\fptwo\):}
Let the input be a clustering \( \mathcal{D} = \{ D_1, D_2, \ldots, D_m \} \), where each point is colored from a color set \( \chi = \{ c_1, \ldots, c_k \} \), and assume \( |\chi| \) is a power of two. The goal is to output a clustering \( \outfptwo \) in which every cluster contains an equal number of points of each color, i.e., \( c_p(F_a) = c_q(F_a) \) for all \( p \neq q \), $F_a \in \outfptwo$.

The algorithm proceeds in \( \log |\chi| \) iterations. At iteration \( i \), the color set \( \chi \) is partitioned into \( |\chi| / 2^i \) disjoint blocks of size \( 2^i \), defined as:
\[
B_j^i = \{ c_{(j-1)\cdot 2^i + 1}, \ldots, c_{j \cdot 2^i} \}, \quad \text{for } j = 1, \ldots, |\chi| / 2^i.
\]
Let \( \mathcal{N}^i \) be the clustering at iteration \( i \), with \( \mathcal{N}^0 := \mathcal{D} \). The algorithm maintains the invariant that, in every cluster \( N_a^i \in \mathcal{N}^i \), the colors within each block \( B_j^i \) appear equally.

\paragraph{Surplus Definition:}
For adjacent blocks \( B_j^i \) and \( B_{j+1}^i \) in a cluster \( N_a^i \), the surplus \( T_a^j \) is the excess of the larger bucket over the smaller. The surplus is chosen so that all colors in the surplus are equally represented.

\paragraph{Algorithm \(\fptwo\):}
\begin{enumerate}
    \item \textbf{Initialization:} Set \( \mathcal{N}^0 \gets \mathcal{D} \).
    
    \item \textbf{Iterative Refinement:} For each iteration \( i = 1 \) to \( \log |\chi| \):
    \begin{itemize}
        \item Initialize \( \mathcal{N}^i \gets \mathcal{N}^{i-1} \).
        \item For each pair of disjoint adjacent blocks \( (B_j^i, B_{j+1}^i) \):
        \begin{itemize}
            \item For each cluster \( N_a^i \in \mathcal{N}^i \) and odd $j$, compute the surplus \( T_a^j \) between adjacent blocks $B_j^i(N_a^i)$ and $B_{j+1}^i(N_a^i)$, and remove it. Here for a color block $B_j^i$, $B_j^i(N_a^i)$ is the set of points in $N_a^i$ that has a color from the block $B_j^i$.
            \item Store removed surpluses into sets \( S_j \) or \( S_{j+1} \), depending on which block had the surplus.
            \item Call \texttt{multi-GM}\((S_j, S_{j+1})\) to form new fair clusters and add them to \( \mathcal{N}^i \).
        \end{itemize}
    \end{itemize}
    
    \item \textbf{Output:} Return \( \mathcal{N}^{\log |\chi|} \rightarrow \outfptwo \).
\end{enumerate}

\paragraph{Subroutine \texttt{multi-GM}:}
Given two collections of point groups from blocks \( B_j^i \) and \( B_{j+1}^i \), the \texttt{multi-GM} procedure greedily merges pairs of subsets into fair subsets in which each color from \( B_j^i \cup B_{j+1}^i \) is equally represented.

The subroutine:
\begin{itemize}
    \item Iteratively selects one subset from each collection.
    \item Trims the larger subset to match the smaller, preserving equal color counts.
    \item Merges the trimmed subsets into a fair set and adds it to the output.
\end{itemize}
Continue this until no further fair subsets can be formed.

We provide the pseudocode of the algorithms $\fptwo$ and $\greedymerge$ in the appendix.

\subsection{Proof of \cref{thm:equitheorem}}
In this section, we analyze the algorithm $\fptwo$ by first establishing \cref{lem:fair-power-of-two} stated below,

\begin{lemma}\label{lem:fair-power-of-two}
    Given a clustering $\m{D}$ as input, the algorithm $\fptwo$ computes a $O(|\chi|^{1.6})$-close $\fair$, where $|\chi|$ is a power of $2$.
\end{lemma}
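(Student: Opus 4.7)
The plan is to analyze \fptwo\ by induction on the iteration index $i$, tracking how the pair-distance between $\m{D}$ and the intermediate clustering $\m{N}^i$ grows across the $\log|\chi|$ iterations. The argument combines a structural invariant with a per-iteration approximation guarantee and a triangle-inequality amplification, yielding a recurrence of the form $d_i \leq 3 d_{i-1} + O(\optconval)$ whose solution is $O(|\chi|^{\log_2 3}) = O(|\chi|^{1.6})$.

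\textbf{Step 1 (Block-balance invariant).} I would first establish by induction on $i$ that after iteration $i$, every cluster $N_a^i \in \m{N}^i$ contains each color of every level-$i$ block $B_j^i$ in equal numbers. The base case $i=0$ is trivial since each block is a single color. For the inductive step, the surplus $T_a^j$ is, by construction, internally balanced across the colors of one sub-block, so what remains has equal totals from $B_{2j-1}^{i-1}$ and $B_{2j}^{i-1}$; combined with the level-$(i-1)$ invariant inside each sub-block, this yields level-$i$ balance for the merged block $B_j^i$. The clusters produced by \greedymerge\ are level-$i$ balanced by design. At $i = \log|\chi|$ a single block spans all of $\chi$, so $\outfptwo = \m{N}^{\log|\chi|}$ is fair.

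\textbf{Step 2 (Per-iteration approximation).} The core technical claim is that at every iteration,
\[
\dist(\m{N}^{i-1}, \m{N}^i) \leq 2 \cdot \dist(\m{N}^{i-1}, \m{G})
\]
for every clustering $\m{G}$ satisfying the level-$i$ block-balance invariant; in particular $\m{G} = \optcl{\m{D}}$ qualifies since fair clusterings satisfy the invariant at every level. Within iteration $i$ the algorithm reduces to a ``two super-color'' balancing problem per pair of adjacent sub-blocks $(B_{2j-1}^{i-1}, B_{2j}^{i-1})$: the surplus removal equals the minimum number of points that must leave each cluster to equalize the two sub-blocks, and \greedymerge\ repackages those surpluses greedily into new level-$i$ balanced clusters. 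A charging argument would then bound the pair-changes incurred by the surplus removal and greedy regrouping against the pair-changes any $\m{G}$ itself must pay to correct the imbalance at this level.

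\textbf{Step 3 (Recurrence and final bound).} Let $d_i = \dist(\m{D}, \m{N}^i)$ and $\optconval = \dist(\m{D}, \optcl{\m{D}})$. Triangle inequality gives $\dist(\m{N}^{i-1}, \optcl{\m{D}}) \leq d_{i-1} + \optconval$, so combining with Step~2 and $d_i \leq d_{i-1} + \dist(\m{N}^{i-1}, \m{N}^i)$ yields
\[
d_i \leq 3\, d_{i-1} + 2\, \optconval, \qquad d_0 = 0.
\]
Unrolling gives $d_i \leq (3^i - 1)\,\optconval$, hence
\[
d_{\log|\chi|} \leq 3^{\log_2|\chi|}\, \optconval = |\chi|^{\log_2 3}\, \optconval = O(|\chi|^{1.6})\, \optconval,
\]
using $\log_2 3 < 1.6$.

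\textbf{Main obstacle.} The delicate part is Step~2, because the pair-distance metric charges each moved point by the sizes of its old and new clusters, so the per-iteration cost is highly sensitive to how surpluses are scattered across clusters. The analysis must argue that the greedy \greedymerge\ procedure is within a factor of $2$ of the optimal regrouping for the induced two-super-color sub-problem, and that this bound holds uniformly against any level-$i$ balanced competitor rather than merely the closest level-$i$ rounding of $\m{N}^{i-1}$. Obtaining exactly the constant $2$ (so the final exponent is $\log_2 3 < 1.6$ rather than $2$) is precisely what the pair-distance accounting must deliver.
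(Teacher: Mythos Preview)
Your proposal is correct and follows essentially the same route as the paper: the paper likewise proves the invariant, isolates the per-iteration $2$-approximation as the key claim (their Claim~\ref{clm:main}), and then runs the identical induction $\dist(\m{D},\m{N}^i)\le(3^i-1)\dist(\m{D},\m{N}^*)$ to obtain $O(|\chi|^{\log_2 3})$. For Step~2, the paper's concrete mechanism is a pair of explicit bounds in terms of the per-cluster surpluses $T_a$: an upper bound $\dist(\m{N}^{i-1},\m{N}^i)\le \sum_a |T_a|(|N_a^{i-1}|-|T_a|)+\tfrac12|T_a|^2$ and a matching lower bound $\dist(\m{N}^{i-1},\m{N}^{i^*})\ge \tfrac12\sum_a |T_a|(|N_a^{i-1}|-|T_a|)+|T_a|^2$, which together give the factor $2$ you anticipate.
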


We show the above result by generating a sequence of $\log |\chi|$ intermediate clusterings, with each intermediate step involving an approximation factor of 2, and thus finally achieving a factor of $3^{\log |\chi|}=|\chi|^{1.6}$. We provide the proof in the appendix.

The above \cref{lem:fair-power-of-two} proves \cref{thm:equitheorem} when $|\chi|$ is a power of $2$. Now, we prove \cref{thm:equitheorem} for any values of $|\chi|$. To do that, we need to describe the algorithm $\fmulti$.

\paragraph{Overview of the Algorithm \(\fmulti\):}
Given a clustering \( \mathcal{I} = \{ I_1, I_2, \ldots, I_s \} \) over a point set \( V \), where each point is colored from \( \zeta = \{ z_1, \ldots, z_r \} \) and satisfies the global proportion:
\[
z_1(V) : z_2(V) : \cdots : z_r(V) = p_1 : p_2 : \cdots : p_r,
\]
the goal is to construct a fair clustering \( \outmpf \) such that every cluster \( F \in \outmpf \) satisfies this ratio. We assume w.l.o.g. that \( p_1 > p_2 > \cdots > p_r \). Also assume that each input cluster is \emph{\(p\)-divisible}, i.e., \( z_j(I_i) \) is a multiple of \( p_j \). We call such clustering as \emph{$p$-divisble clustering}(\texttt{pdc} for short).

\paragraph{Algorithm $\fmulti$:}
\begin{enumerate}[1.]
\item The algorithm proceeds in \( T = \lceil \log_2 r \rceil \) iterations. Let \( \mathcal{F}^0 := \mathcal{I} \), and define:
\[
\m{I} = \mathcal{F}^0 \rightarrow \mathcal{F}^1 \rightarrow \cdots \rightarrow \mathcal{F}^T = \outmpf,
\]
where each \( \mathcal{F}^t \) enforces proportionality within blocks of colors.

\item \textbf{Hierarchical Block Structure:}
At iteration \( t \), the color set is partitioned into blocks \( \{ B_1^t, B_2^t, \ldots, B_{m_t}^t \} \), constructed hierarchically:
\[
B_i^t = B_{2i-1}^{t-1} \cup B_{2i}^{t-1}, \quad \text{with singleton blocks } B_j^0 = \{ z_j \}.
\]
If \( m_{t-1} \) is odd, the last block is carried forward unchanged.

\item \textbf{Balancing Rule:}
To merge two sub-blocks \( A = B_{2i-1}^{t-1} \) and \( B = B_{2i}^{t-1} \), consider a cluster \( F^{t-1} \in \mathcal{F}^{t-1} \), where
\[
z_c(F^{t-1}) = p_c \cdot x \text{ for } z_c \in A, \quad z_d(F^{t-1}) = p_d \cdot y \text{ for } z_d \in B.
\]
To equalize the scaling factors \( x \) and \( y \), we do:
\begin{itemize}
    \item If \( x > y \): merge \( p_d \cdot (x - y) \) points of color \( z_d \in B \) into \( F^{t-1} \).
    \item If \( x < y \): cut \( p_d \cdot (y - x) \) points of color \( z_d \in B \) from \( F^{t-1} \).
\end{itemize}

This ensures that the merged block \( B_i^t = A \cup B \) in each cluster \( F^t \in \mathcal{F}^t \) satisfies the combined proportionality.

\item \textbf{Output:}
After \( T = \lceil \log_2 r \rceil \) iterations, the final clustering \( \outmpf \) satisfies: For all $F \in \outmpf$,
\begin{align*}
   z_1(F) : z_2(F) : \cdots : z_r(F) = p_1 : p_2 : \cdots : p_r . 
\end{align*}
\end{enumerate}

We provide the pseudocode of $\fmulti$ in the appendix.

To analyse the algorithm $\fmulti$ we need to prove the following lemma.


\begin{lemma} \label{lem:analyze-fmulti}
    The algorithm \(\fmulti\) outputs a clustering \(\mathcal{F}\) that is \( O( r^{2.81}) \)-close $\fair$ to the input clustering \(\mathcal{I} \), where $r$ is the number of colors.
\end{lemma}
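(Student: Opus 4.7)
The plan is to prove the bound by establishing a per-iteration approximation guarantee and then compounding it over the $T = \lceil \log_2 r \rceil$ rounds via the triangle inequality. First I would set up an inductive invariant: after iteration $t$, the clustering $\mathcal{F}^t$ is \emph{block-proportional at level $t$}, meaning for every cluster $F^t \in \mathcal{F}^t$ and every block $B_i^t$, the points of colors in $B_i^t$ appear in $F^t$ in the correct proportions $p_{j_1} : \cdots : p_{j_{|B_i^t|}}$. I would also maintain that $\mathcal{F}^t$ remains a $p$-divisible clustering, which ensures the balancing rule in the next round is well-defined (no fractional point counts). Both invariants follow directly from the construction of the balancing rule and the choice of how many points of each color to cut or merge.

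Next, fix the optimal closest fair clustering $\mathcal{F}^* = \optcl{\mathcal{I}}$. Since $\mathcal{F}^*$ is fair, it is block-proportional at every level, including level $t$. The key per-iteration claim is
\[
    \dist(\mathcal{F}^{t-1}, \mathcal{F}^t) \leq \alpha \cdot \dist(\mathcal{F}^{t-1}, \mathcal{F}^*)
\]
for a constant $\alpha$. This is the main technical hurdle. I would establish it by analyzing each adjacent sub-block pair $(B_{2i-1}^{t-1}, B_{2i}^{t-1})$ and each cluster $F^{t-1}$ separately: any block-proportional clustering at level $t$ is forced to equalize the scaling factors $x$ and $y$ of the two sub-blocks within every cluster, so it must redistribute a certain minimum mass of points of colors in the smaller sub-block across clusters. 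One then shows that the cut/merge operations of $\fmulti$, viewed as a greedy matching between surplus clusters (where $x > y$) and deficit clusters (where $x < y$), pay at most a constant factor times this mandatory cost. The accounting is analogous to that of $\fptwo$, but must accommodate the asymmetric sub-block sizes $s \neq u$ and the nonuniform ratios $p_j$; this is where I expect almost all of the work to lie, and where the constant $\alpha$ (and hence the $2.81 = \log_2(\alpha+1)$ exponent) gets pinned down.

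Granting the per-iteration bound, the triangle inequality yields
\[
    \dist(\mathcal{F}^t, \mathcal{F}^*) \leq \dist(\mathcal{F}^t, \mathcal{F}^{t-1}) + \dist(\mathcal{F}^{t-1}, \mathcal{F}^*) \leq (\alpha+1) \cdot \dist(\mathcal{F}^{t-1}, \mathcal{F}^*),
\]
and iterating over the $T = \lceil \log_2 r \rceil$ rounds gives $\dist(\mathcal{F}^T, \mathcal{F}^*) \leq (\alpha+1)^T \cdot \dist(\mathcal{I}, \mathcal{F}^*)$. One final triangle-inequality step produces
\[
    \dist(\mathcal{I}, \outmpf) \leq \bigl(1 + (\alpha+1)^T\bigr) \cdot \dist(\mathcal{I}, \mathcal{F}^*).
\]
With $\alpha = 6$ from the per-iteration analysis, this evaluates to $7^{\lceil \log_2 r \rceil} = O(r^{\log_2 7}) = O(r^{2.81})$, matching the claimed bound. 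The main obstacle throughout is thus the per-iteration constant-factor step; the remainder is a standard compounding argument driven by the fact that $\mathcal{F}^*$ simultaneously satisfies all levels of block proportionality, so it serves as a common competitor at every round.
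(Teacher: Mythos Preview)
Your proposal is correct and mirrors the paper's proof: the paper likewise establishes the per-iteration bound $\dist(\mathcal{F}^{t-1}, \mathcal{F}^t) \leq 6 \cdot \dist(\mathcal{F}^{t-1}, \mathcal{F}^{t^*})$, where $\mathcal{F}^{t^*}$ is the closest level-$t$ block-proportional clustering, and then compounds via the same triangle-inequality induction to obtain the $7^{\log r} = O(r^{2.81})$ factor. The constant $\alpha=6$ is pinned down exactly as you anticipate, by combining three lower bounds on the mandatory cost of any level-$t$ block-proportional clustering (separate surplus and deficit terms per cluster) with a matching upper bound on the algorithm's cut/merge cost.
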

We show the above result by generating a sequence of $\log r$ intermediate clusterings, with each intermediate step involving an approximation factor of 6, and thus finally achieving a factor of $7^{\log r}=r^{2.81}$. We provide a detailed proof in the appendix.

Using \cref{lem:fair-power-of-two} and \cref{lem:analyze-fmulti}, we can prove \cref{thm:equitheorem}. We provide the full proof of \cref{thm:equitheorem} in the appendix.

\begin{proof}[Proof Sketch of \cref{thm:equitheorem}]
We design an algorithm \(\fequi\) to convert an arbitrary clustering \(\mathcal{D}\), where the color set \(\chi\) is not necessarily a power of two, into a fair clustering \(\mathcal{F}\) in which every color appears equally in each cluster. The algorithm proceeds in two main stages:

\begin{enumerate}
    \item \textbf{Color Grouping and Intermediate Fairness:}
    \begin{itemize}
        \item Partition the color set \(\chi\) into \(\log |\chi|\) disjoint groups \(G_1, \ldots, G_{\log |\chi|}\), where each group’s size is a power of two. This is done greedily by assigning group sizes according to the binary representation of \(|\chi|\).
        \item Apply the algorithm \(\fptwo\) to obtain an intermediate clustering \(\mathcal{I}\), in which each group \(G_\ell\) satisfies intra-group fairness: all colors in \(G_\ell\) appear equally in each cluster.
    \end{itemize}
    
    \item \textbf{Global Fairness via Multi-Group Merging:}
    \begin{itemize}
        \item Treat each group \(G_\ell\) as a single meta-color and apply the algorithm \(\fmulti\) on \(\mathcal{I}\) to obtain the final clustering \(\mathcal{F}\).
        \item \(\fmulti\) ensures that across all clusters, the meta-colors (i.e., groups) are in proportion to their sizes, and also restores uniformity within each group, thus achieving full color-wise fairness.
    \end{itemize}
\end{enumerate}

\textbf{Approximation Bound:} 
\begin{itemize}
    \item By \cref{lem:fair-power-of-two}, the intermediate clustering \(\mathcal{I}\) is \(O(|\chi|^{1.6})\)-close to \(\mathcal{D}\).
    \item By \cref{lem:analyze-fmulti}, the final clustering \(\mathcal{F}\) is \(O(\log^{2.81}|\chi|)\)-close to \(\mathcal{I}\).
    \item Combining via triangle inequality yields:
    \[
    \dist(\mathcal{D}, \mathcal{F}) \leq O(|\chi|^{1.6} \log^{2.81}|\chi|) \cdot \dist(\mathcal{D}, \optcl{\m{D}})
    \]
    where \(\optcl{\m{D}}\) is the closest fair clustering to \(\mathcal{D}\).
\end{itemize}

This completes the proof sketch.
\end{proof}


\section{Approximate Closest Fair Clustering for Arbitrary-Proportion Groups}
\label{sec2}

In this section, we prove the following theorem.

\begin{theorem} \label{thm:arbitrary-proportion}
    There is an algorithm that, given an arbitrary clustering $\m{D}$ over a vertex set $V$ where each vertex $v \in V$ has a color in $\chi = \{ c_1, \ldots, c_k\}$, finds a $O(|\chi|^{3.81})$-close $\fair$ $\m{F}$ in time $O(|V| \log |V|)$.
\end{theorem}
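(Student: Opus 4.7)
The plan is to establish \cref{thm:arbitrary-proportion} by a two-phase reduction: first, preprocess the arbitrary clustering $\m{D}$ into a $p$-divisible clustering $\m{I}$---that is, one in which, for every cluster $I \in \m{I}$ and every color $c_j$, the count $c_j(I)$ is a multiple of $p_j$---and then invoke the already-analyzed $\fmulti$ on $\m{I}$. This reuses the heavy machinery of \cref{lem:analyze-fmulti} and isolates all the ``arbitrary-proportion'' difficulty into the preprocessing step.

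First, I would design a subroutine $\pdca$ that converts $\m{D}$ into a $p$-divisible clustering $\m{I}$ with $\dist(\m{D}, \m{I}) = O(|\chi|) \cdot \dist(\m{D}, \optcl{\m{D}})$. The natural approach is to process one color at a time: for each color $c_j$, compute the residues $c_j(D_i) \bmod p_j$ for every cluster $D_i \in \m{D}$, and then greedily pair clusters with surplus residue to those with deficit residue, swapping $c_j$-colored points across the pair until each cluster's $c_j$-count becomes a multiple of $p_j$. Since the global count $c_j(V)$ is itself a multiple of $p_j$ (a necessary condition for a fair clustering to exist), the residues sum to zero modulo $p_j$, so this rebalancing is always feasible. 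A key observation anchoring the approximation bound is that any fair clustering is automatically $p$-divisible, so the distance from $\m{D}$ to the closest $p$-divisible clustering is at most $\dist(\m{D}, \optcl{\m{D}})$.

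Next, I would apply $\fmulti$ to $\m{I}$, obtaining a fair clustering $\m{F}$ with $\dist(\m{I}, \m{F}) = O(|\chi|^{2.81}) \cdot \dist(\m{I}, \optcl{\m{I}})$ by \cref{lem:analyze-fmulti}. Since $\optcl{\m{D}}$ is fair, the definition of $\optcl{\m{I}}$ yields $\dist(\m{I}, \optcl{\m{I}}) \leq \dist(\m{I}, \optcl{\m{D}}) \leq \dist(\m{I}, \m{D}) + \dist(\m{D}, \optcl{\m{D}}) = O(|\chi|) \cdot \dist(\m{D}, \optcl{\m{D}})$. Combining via the triangle inequality,
\begin{align*}
\dist(\m{D}, \m{F}) &\leq \dist(\m{D}, \m{I}) + \dist(\m{I}, \m{F}) \\
&= O(|\chi|^{3.81}) \cdot \dist(\m{D}, \optcl{\m{D}}),
\end{align*}
establishing the stated approximation ratio.

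The main obstacle will be rigorously establishing the $O(|\chi|)$-approximation guarantee for $\pdca$. The difficulty is that redistributions performed for different colors can interact, and a naive pairing strategy may not align with the structure of any optimal fair clustering. I expect to need a careful charging argument that, for each color $c_j$ independently, compares the redistribution cost of our procedure against the movement of $c_j$-colored points implicit in going from $\m{D}$ to $\optcl{\m{D}}$, while ensuring no double-counting across the $|\chi|$ colors. A secondary challenge is meeting the $O(|V| \log |V|)$ runtime bound; we will rely on bucketed data structures indexed by residue so that each color's processing is essentially linear in the points it touches, with the overall near-linear bound inherited from $\fmulti$.
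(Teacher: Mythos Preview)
Your proposal is correct and follows essentially the same route as the paper: a two-phase pipeline that first applies a color-by-color rebalancing subroutine ($\pdca$) to obtain an $O(|\chi|)$-close $p$-divisible clustering, then invokes $\fmulti$ (\cref{lem:analyze-fmulti}) and combines the two guarantees via the triangle inequality exactly as you wrote. The paper's $\pdca$ differs in low-level mechanics---it classifies clusters into \texttt{CUT}/\texttt{MERGE} sets by whether the residue is at most $p_j/2$, may spawn auxiliary size-$p_j$ clusters rather than only pairing surplus with deficit, and proves the $O(|\chi|)$ bound by summing nine cost terms per color and invoking charging claims from~\cite{chakraborty2025towards}---but your identification of the per-color redistribution idea, the key observation that $\optcl{\m{D}}$ is itself $p$-divisible, and the need for a per-color charging argument against $\dist(\m{D},\optcl{\m{D}})$ matches the paper's strategy.
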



To prove the above theorem, we take a $2$-step approach similar to \cite{chakraborty2025towards}, which was constrained to only two colors.

\begin{enumerate}[(i)]
    \item First, we convert the clustering $\m{D}$ to a clustering $\m{M}$ such that for each cluster $M_i \in \m{M}$, $c_j(M_i)$ is divisible by $p_j$. Recall we call such a clustering a $p$-divisible clustering.
    \item In the second step, we will provide the clustering $\m{M}$ as input to the algorithm $\fmulti$ and get a fair clustering $\m{F}$ as output. 
\end{enumerate}

Now, we provide an algorithm $\pdca$ to convert a clustering $\m{D}$ on a vertex set $V$ to a $p$-divisible clustering $\m{M}$.

\paragraph{Overview of the algorithm $\pdca$:}

\textbf{Input:} A clustering \( \mathcal{D} = \{ D_1, \ldots, D_m \} \) over vertex set \( V \), color set \( \chi = \{ c_1, \ldots, c_k \} \), and a proportion vector \( \mathbf{p} = (p_1, \ldots, p_k) \) satisfying:
\[
c_1(V) : c_2(V) : \cdots : c_k(V) = p_1 : p_2 : \cdots : p_k.
\]

\textbf{Goal:} Convert \( \mathcal{D} \) into a \( p \)-divisible clustering \( \mathcal{M} \) where each cluster contains a multiple of \( p_j \) points of color \( c_j \).

\paragraph{Key Definitions:}
\begin{itemize}
    \item Surplus: \( \sigma(D_i, c_j) \subseteq D_i \) of size \( c_j(D_i) \bmod p_j \) if $p_j \nmid c_j(D_i)$, else it has size $p_j$, denoting excess \( c_j \)-colored vertices in \( D_i \).
    \item Total surplus: \( \sigma_j = \sum_{D_i \in \mathcal{D}} |\sigma(D_i, c_j)| \) (always a multiple of \( p_j \)).
    \item Deficit: \( \delta(D_i, c_j) \subseteq V \setminus D_i \), of size \( p_j - |\sigma(D_i, c_j)| \). Represents the number of $c_j$ colored points required to make it a multiple of $p_j$.
    \item Cut and merge costs:
    \begin{align*}
        &\kappa^j(D_i) = |\sigma(D_i,c_j)| \cdot (|D_i| - |\sigma(D_i,c_j)|) \n \\
    &\mu^j(D_i) = |\delta(D_i,c_j)| \cdot |D_i|
    \end{align*}
    
\end{itemize}

\paragraph{Algorithm $\pdca$:}
\begin{enumerate}
    \item For each color \( c_j \), initialize \( \sigma_j / p_j \) empty auxiliary clusters \( \{ P_1, \ldots, P_{\sigma_j/p_j} \} \).

    \item Classify each cluster \( D_i \in \mathcal{D} \) into:
    \begin{itemize}
        \item \texttt{CUT}, if \( |\sigma(D_i, c_j)| \leq p_j/2 \)
        \item \texttt{MERGE}, otherwise.
    \end{itemize}

    \item \textbf{Cut and redistribute:} While \texttt{CUT} is non-empty:
    \begin{itemize}
        \item For \( D_i \in \texttt{CUT} \), remove \( \sigma(D_i, c_j) \) from $D_k$.
        \item Try to donate surplus to deficits in \( D_\ell \in \texttt{MERGE} \).
        \item If no deficit remains, assign surplus to an available extra cluster \( P_m \), ensuring each \( P_m \) reaches size \( p_j \).
    \end{itemize}

    \item \textbf{Handle remaining merges:} While deficits remain:
    \begin{itemize}
        \item Pick the cluster with minimal \( \kappa^j(D_k) - \mu^j(D_k) \), redistribute its surplus as above. Note that in this step, we can pick a cluster multiple times.
        \item Remove a cluster $D_\ell \in \merge$ if it deficit is satisfied.
        \item Repeat until all deficits are filled.
    \end{itemize}

    \item \textbf{Output:} The final clustering \( \mathcal{M} \) where each cluster is \( p \)-divisible for all colors.
\end{enumerate}

We provide the pseudocode of the algorithm $\pdca$ in the appendix.

Now, we analyze the algorithm $\pdca$. To analyze and state the lemma, let us define some terms

\begin{itemize}
    \item \textbf{Optimal-$p$-divisible clustering}: Given a clustering $\m{D}$, we call a clustering $\m{M}^*$ an optimal-$p$-divisible clustering if $\dist(\m{D}, \m{M}^*) \leq \dist(\m{D}, \m{M})$. 
    
    \item \textbf{$\alpha$-close-$p$-divisible clustering}: Given a clustering $\m{D}$, we call a clustering $\m{M}$ an $\alpha$-close-$p$-divisible clustering if $
        \dist(\m{D}, \m{M}) \leq \alpha \dist(\m{D}, \m{M}^*)$.
\end{itemize}

Now, we state the lemma for analysing the algorithm $\pdca$.

\begin{lemma}\label{lem:main-multiple-of-p}
    Given a clustering $\m{D}$, the algorithm $\pdca$ outputs a clustering $\m{M}$ such that $\m{M}$ is $O(|\chi|)$-close-$p$-divisible clustering to $\m{D}$.
\end{lemma}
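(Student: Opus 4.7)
The plan is to analyze \(\pdca\) color-by-color: I will argue that the cost of processing each color is within a constant factor of the optimum for ensuring \(p_{j}\)-divisibility in that color alone, and then use that \(\optcl{\m{D}}\) satisfies every color's divisibility constraint simultaneously to aggregate across colors with only an \(O(|\chi|)\) factor loss.

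First, I would establish a color-independence observation about the algorithm's execution. While processing color \(c_{j}\), every point displaced by \(\pdca\) has color \(c_{j}\): cut surpluses consist solely of \(c_{j}\)-colored points, which are either donated to clusters in \merge\ to fill \(c_{j}\)-deficits, or deposited into new auxiliary clusters that contain only \(c_{j}\)-colored points. Consequently, for any \(c_{j'}\neq c_{j}\), the count \(c_{j'}(\cdot)\) of every pre-existing cluster is preserved during the \(c_{j}\)-phase, and each auxiliary cluster contributes zero to \(c_{j'}\)-divisibility. Since a pair \(\{u,v\}\) with \(u\) of color \(c_{a}\) and \(v\) of color \(c_{b}\) can only change its co-clustering status during the processing of \(c_{a}\) or \(c_{b}\), a union bound yields
\[
\dist(\m{D},\m{M}) \;\leq\; \sum_{j=1}^{|\chi|} \mathrm{ALG}_{j},
\]
where \(\mathrm{ALG}_{j}\) denotes the number of pair-disagreements introduced during the \(c_{j}\)-phase.

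Second, I would prove the per-color bound \(\mathrm{ALG}_{j}=O(\mathrm{OPT}_{j})\), where \(\mathrm{OPT}_{j}\) is the minimum distance from \(\m{D}\) to any clustering whose \(c_{j}\)-count is a multiple of \(p_{j}\) in every cluster. For each \(D_{i}\) with surplus \(\sigma_{i}:=|\sigma(D_{i},c_{j})|\), any feasible arrangement must either cut \(\sigma_{i}\) points out of (a superset of) \(D_{i}\) (at cost roughly \(\sigma_{i}(|D_{i}|-\sigma_{i})\) broken intra-cluster pairs) or merge \(p_{j}-\sigma_{i}\) additional \(c_{j}\)-points in (at cost roughly \((p_{j}-\sigma_{i})|D_{i}|\) newly-formed intra-cluster pairs). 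The threshold \(p_{j}/2\) governing the \cut/\merge\ classification guarantees that \(\pdca\) always picks the option within a factor of \(2\) of whichever is cheaper. The redistribution that pairs cut surpluses with merge deficits adds only a constant factor on top: routing \(\sigma_{i}\) points from \(D_{k}\) into \(D_{\ell}\) generates roughly \(\sigma_{i}(|D_{k}|+|D_{\ell}|)\) pair-changes, of which \(\sigma_{i}|D_{k}|\) is charged to \(D_{k}\)'s cut cost and \(\sigma_{i}|D_{\ell}|\) is charged to \(D_{\ell}\)'s merge cost (which is feasible because the donations sent to \(D_{\ell}\) never exceed its deficit \(p_{j}-\sigma_{\ell}\)). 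A matching lower bound \(\mathrm{OPT}_{j}=\Omega\bigl(\sum_{i}\min(\sigma_{i}(|D_{i}|-\sigma_{i}),\,(p_{j}-\sigma_{i})|D_{i}|)\bigr)\) follows because any modification of (any superset of) \(D_{i}\) achieving \(p_{j}\)-divisibility must pay at least one of these two quantities.

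Finally, I would aggregate. Because \(\optcl{\m{D}}\) is \(p_{j}\)-divisible for every \(j\), we have \(\mathrm{OPT}_{j}\leq \dist(\m{D},\optcl{\m{D}})\) for every color, so
\[
\dist(\m{D},\m{M}) \;\leq\; \sum_{j=1}^{|\chi|} O(\mathrm{OPT}_{j}) \;\leq\; O(|\chi|)\cdot \dist(\m{D},\optcl{\m{D}}),
\]
proving the lemma. I expect the main obstacle to be the analysis of the \textbf{Handle remaining merges} step, which repeatedly selects the cluster minimizing \(\kappa^{j}(D_{k})-\mu^{j}(D_{k})\) and may revisit the same cluster multiple times to fill deficits that were not covered by donations from the initial \cut\ pass. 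Arguing that this greedy rule is within a constant of an optimal surplus-to-deficit matching for color \(c_{j}\) -- rather than the straightforward per-cluster accounting used for the first \cut--\merge\ pass -- will require an exchange-type argument that ensures the aggregated cost over repeated visits still stays within \(O(\mathrm{OPT}_{j})\).
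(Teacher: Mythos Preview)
Your approach is essentially the paper's: decompose the algorithm's cost color-by-color, show each color's phase costs at most a constant times the optimum (the paper enumerates nine explicit cost terms covering the \texttt{Cut-Case}, the \texttt{Merge-Case}, and cross-color deficit interactions, and cites \cite{chakraborty2025towards} for the per-color constant-factor bounds, including precisely the ``handle remaining merges'' step you correctly flag as the delicate part), and then sum over the $|\chi|$ colors. One minor slip: in your final aggregation you compare to $\optcl{\m{D}}$ (the closest \emph{fair} clustering), whereas the lemma asks for comparison to $\m{M}^{*}$ (the closest $p$-divisible clustering); since $\m{M}^{*}$ is itself $p_{j}$-divisible for every $j$, simply replacing $\optcl{\m{D}}$ by $\m{M}^{*}$ in your last step yields the stated bound $\dist(\m{D},\m{M})\le O(|\chi|)\,\dist(\m{D},\m{M}^{*})$.
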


We provide the proof of \cref{lem:main-multiple-of-p} in the appendix. Next, we prove \cref{thm:arbitrary-proportion} assuming \cref{lem:main-multiple-of-p}.

\begin{proof}[Proof of \cref{thm:arbitrary-proportion}]
To prove the theorem, we describe an algorithm $\fgen$ that proceeds in two stages. First, we apply the algorithm $\pdca$ to the input clustering $\m{D}$ to obtain a $p$-divisible clustering $\m{M}$ that is $O(|\chi|)$-close to $\m{D}$. Then, we apply the algorithm $\fmulti$ on $\m{M}$ to obtain the final fair clustering $\m{F}$, which is $O(|\chi|^{2.81})$-close to $\m{M}$ (see \cref{lem:analyze-fmulti}). By combining the guarantees from both steps, we conclude that $\m{F}$ is $O(|\chi|^{3.81})$-close $\fair$ to the input $\m{D}$.
    \begin{align}
        \dist(\m{D}, \m{F}) &\leq \dist(\m{D}, \m{M}) + \dist(\m{M}, \m{F}) \, \, \text{(triangle inequality)} \n \\
        &\leq O(|\chi|) \dist(\m{D}, \m{F}^*) + O(|\chi|^{2.81}) \dist(\m{M}, \m{F}^*) \n \\
        &\leq O(|\chi|) \dist(\m{D}, \m{F}^*) + O(|\chi|^{2.81}) (\dist(\m{D}, \m{M}) \n \\ &+ \dist(\m{D}, \m{F}^*)) \, \, \text{(triangle inequality)} \n \\
        &\leq O(|\chi| + |\chi|^{3.81} + |\chi|^{2.81}) \dist(\m{D}, \m{F}^*) \n \\
        &\leq O(|\chi|^{3.81}) \dist(\m{D}, \m{F}^*) \n
    \end{align}
    This completes the proof of \cref{thm:arbitrary-proportion}.
\end{proof}
\section{Hardness for Three Equi-Proportion Groups}
\label{sec:three_colors_closest_fair_clustering_is_hard_even_with_equiproportion} 

\newcommand{\hinpclss}{\mathcal{H}}
\newcommand{\hinpcls}[1][]{\cls{D}\ifx#1\empty\else_{#1}\fi}
\newcommand{\hfairoptclss}{\mathcal{F}}
\newcommand{\hfaircls}[1][]{F\ifx#1\empty\else_{#1}\fi}
\newcommand{\hsfairopt}{m} 
\newcommand{\cchi}{\card{\chi}} 
\newcommand{\hgbcls}[1]{\mathrm{GB}_{#1}}
\newcommand{\hbcls}[1]{\mathrm{B}_{#1}}
\newcommand{\hrcls}[1]{\mathrm{R}_{#1}}
\newcommand{\hmono}[1]{\chi(#1)}
\newcommand{\hclspfinpclss}{\hinpclss^{c}}
\newcommand{\hclspfoptclss}{\hfairoptclss^{c}}
\newcommand{\hclspfcls}{ F^{ c }}
\newcommand{\hyes}{\mathrm{YES}}
\newcommand{\hno}{\mathrm{NO}}

In this section, we show that finding a closest fair clustering to a given clustering where each point is assigned one color from a set of $ k\geq 3 $ colors is hard even when the number of points in each color class is equal. Our reduction also extends to arbitrary color ratios.


We begin by defining the decision version of closest fair clustering with multiple colors and equal representation.
\begin{definition}[$ \clsf{k} $]
    Given a clustering $ \hinpclss $ over a set of points $ V $ where each point is assigned with one of the colors from a set of $ k\geq 3 $ colors, and the numbers of points of each color are equal, together with a non-negative integer $ \tau $, decide between the following:
    \begin{itemize}
        \item $ \hyes $: There exists a {\fair} (on input point set) $ \hfairoptclss $ such that $\dist(\hinpclss, \hfairoptclss) \le \tau$;
        \item $ \hno $: For every {\fair} (on input point set) $ \hfairoptclss $, $\dist(\hinpclss, \hfairoptclss) > \tau$.
    \end{itemize}
\end{definition}

We show the following theorem.

\begin{theorem}\label{thm:three.closest.equi.fair.is.np.complete}
    For any integer $ k\geq 3 $, $ \clsf{k} $ is $ \np $-hard.
\end{theorem}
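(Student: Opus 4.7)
The plan is to reduce from the 3-Partition problem ($\thrp$) to $\clsf{3}$, and then extend to $k\ge 4$ by symmetric color-padding. Given a 3-Partition instance with integers $a_1,\ldots,a_{3n}$ summing to $nB$ and each lying in $(B/4,B/2)$, I would build a point set with three colors $\{r,g,b\}$ and an input clustering $\hinpclss$ consisting of two kinds of clusters. First, for each $i\in[3n]$ and each color $c$, a monochromatic ``item'' cluster $D_i^c$ of size $a_i$. Second, for each $j\in[n]$, a ``backbone'' cluster $C_j$ containing $M$ points of every color (so each $C_j$ is already fair, of size $3M$), where $M$ is a parameter to be chosen polynomially large in $n,B$. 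The totals are $n(M+B)$ points per color, satisfying the equi-proportion requirement. The target distance is set to
\[
\tau \;:=\; C_0 \;=\; n\Bigl[\binom{3(M+B)}{2}-\binom{3M}{2}\Bigr] \;-\; 3\sum_{i=1}^{3n}\binom{a_i}{2},
\]
which is the distance attained by the canonical fair clustering $F_j:=C_j\cup\bigcup_{i\in I_j}(D_i^r\cup D_i^g\cup D_i^b)$ built from any valid 3-partition $I_1,\ldots,I_n$; each such $F_j$ has $M+B$ points of every color.

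The YES direction is a direct pair-counting verification on the canonical clustering above. For the NO direction, the first step is a backbone-stability lemma: by choosing $M=\poly(n,B)$ large enough, every fair clustering whose distance is at most $C_0+1$ must place each $C_j$ entirely inside a single output cluster, and no two distinct $C_j$'s may share an output cluster. The reason is that merging two backbones introduces $9M^2$ brand-new cross pairs, and even the cheapest fairness-respecting extraction of one point from some $C_j$ breaks $3M-1$ existing pairs; both figures exceed $C_0$ plus the at-most-polynomial savings one could possibly gain elsewhere, once $M$ is large enough. Hence the optimum has exactly $n$ output clusters, each anchored at one intact $C_j$, and fairness then forces each $F_j$ to absorb the same integer number $s_j$ of extra points per color, with $\sum_j s_j = nB$.

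Writing $r_{i,j}^c$ for the number of points of $D_i^c$ that land in $F_j$, the distance then admits the closed form
\begin{equation*}
\dist(\hinpclss,\{F_j\}) \;=\; C_0 \;+\; \tfrac{9}{2}\sum_{j=1}^n (s_j-B)^2 \;+\; 2\sum_{i,c}\Bigl[\binom{a_i}{2}-\sum_j\binom{r_{i,j}^c}{2}\Bigr].
\end{equation*}
Both penalty terms are non-negative integers (the first uses the parity identity $\sum\Delta_j^2\equiv\sum\Delta_j\pmod 2$ applied to $\Delta_j=s_j-B$ with $\sum\Delta_j=0$), and both vanish simultaneously iff $s_j=B$ for every $j$ and every $D_i^c$ is placed entirely in one $F_j$; the latter condition, applied per color, is exactly the existence of a valid 3-partition of $(a_i)$. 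Thus for a 3-Partition NO instance at least one of these strictly positive integer penalties is triggered, so $\dist(\hinpclss,\{F_j\})\ge C_0+1>\tau$; combined with the YES direction this yields the equivalence and proves NP-hardness of $\clsf{3}$. The principal obstacle is the quantitative form of the backbone-stability lemma, which requires a careful upper bound on all savings an adversary could extract by breaking a $C_j$ by a few points and a corresponding choice of $M$ polynomial in $n,B$ that strictly dominates that bound.

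Finally, the extension to $k\ge 4$ is obtained by symmetric padding: introduce $k-3$ additional colors, provide a monochromatic item cluster $D_i^c$ of size $a_i$ for each new color $c$, and augment every backbone $C_j$ with $M$ points of each new color so that it remains fair. The penalty analysis decouples across colors, so the same argument goes through and gives NP-hardness of $\clsf{k}$ for every $k\ge 3$.
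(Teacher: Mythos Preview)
Your reduction has a structural flaw that the stability lemma cannot repair. Because each backbone $C_j$ is already fair, nothing forces item clusters to attach to backbones. Concretely, the clustering
\[
\mathcal{F}'=\{C_1,\ldots,C_n\}\cup\{D_i^r\cup D_i^g\cup D_i^b:i\in[3n]\}
\]
is fair (each $C_j$ is fair, each triple $D_i^r\cup D_i^g\cup D_i^b$ has $a_i$ points of every color) and keeps every input cluster intact, so $\dist(\hinpclss,\mathcal{F}')=3\sum_i a_i^2$, a quantity independent of $M$. Your threshold, however, satisfies $C_0=9nMB+O(nB^2)$. The stability argument you sketch needs $9M^2>C_0$, i.e.\ $M>nB$, and for any such $M$ one has $C_0\gg 3\sum_i a_i^2$. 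Hence $\mathcal{F}'$ beats $\tau=C_0$ on every instance, YES or NO alike, and the reduction collapses. The tension is intrinsic: large $M$ is needed to pin the backbones down, but large $M$ simultaneously makes your canonical solution far from optimal. (Your own closed form already hints at this: once you allow output clusters not anchored at a backbone, $\sum_j(s_j-B)$ is no longer zero and the ``penalty'' $\tfrac{9}{2}\sum_j(s_j-B)^2$ term acquires a negative linear-in-$M$ contribution.)

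The paper's construction avoids this by making the backbones deliberately unfair: each $\hgbcls{i}$ carries $T$ points of colors $c_2,\ldots,c_k$ but none of $c_1$, while the item clusters $\hrcls{j}$ are monochromatic $c_1$. Any fair clustering must therefore couple backbone points to item points, and that coupling encodes $\thrp$. For the NO direction the paper does not attempt a stability lemma at all; it recolors $c_2,\ldots,c_k$ to a single ``blue'' color, observes that a $k$-color fair clustering becomes a two-color fair clustering with ratio $1:(k-1)$ at the same distance, and invokes the already-established two-color lower bound from~\cite{chakraborty2025towards}. If you want to salvage your route, the fix is to make the backbones need the items (e.g., drop one color from each $C_j$); but then the analysis essentially reduces to the paper's.
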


We present a polynomial-time reduction from the $ \thrp $ problem (defined below) to $ \clsf{k} $. 

\begin{definition}[$ \thrp $]\label{def:three_partition}
    Given a (multi)set of positive integers $S = \{ x_1, \ldots, x_d\}$, decide whether ($ \hyes $:) there exists a partition of $S$ into $m$ disjoint subsets $S_1, S_2, \ldots, S_f \subseteq S$ where $f = d/3$, such that
    \begin{itemize}
        \item For all $i$, $|S_i| = 3$; and
        \item For all $i$, $\sum_{x_j \in S_i}x_j = T$, where $T = \frac{\sum_{x_j \in S}x_j}{n/3}$,
    \end{itemize}
    or (NO:) no such partition exists.
\end{definition}

We apply our reduction from a more restricted version of $ \thrp $, in which each $x_i \in S$ satisfies $x_i \in (T/4, T/2)$, where $T = \frac{3}{n} \sum_{x_i \in S} x_i$, and additionally $ x_{i}\leq d^{b} $, for some non-negative constant $ b $. Note that, this variant remains $ \npc $, as established by~\cite{garey1975complexity}, which shows that $ \thrp $ as defined in~\cref{def:three_partition} is \emph{strongly $ \npc $}. Hence, for the rest of this section, we refer to this restricted version simply as $\thrp$. 



\paragraph{Reduction from $ \thrp $ to $ \clsf{k} $. } Given a $\thrp$ instance $S = \{ x_1, x_2, \ldots, x_d\}$ we create a $ \clsf{k} $ instance $( \hinpclss, \tau )$ as follows:

\begin{itemize}
    \item $ \hinpclss = \{ \hgbcls{1}, \hgbcls{2}, \dots, \hgbcls{d/3}, \hrcls{1}, \hrcls{2}, \dots, \hrcls{d} \} $, where for each $i \in \{1,\ldots,d/3\}$, $ \hgbcls{i} $ is a cluster of size $ (k-1)T $ with $ T $ points of color $ c_{t}\ (2\leq t\leq k) $, and for each $j \in \{1,\ldots, n\}$, $ \hrcls{j} $ is a monochromatic $ c_{1} $ cluster (i.e., containing only points with color $ c_{1} $) of size $ x_j $ (i.e., $ \card{\hrcls{j}} = x_j$);
    \item Set  $
            \tau = \dfrac{n}{3}(k-1)T^{2} + \dfrac{1}{2}\sum_{i=1}^{n}x_{i}(T-x_{i}),\text{ if } k> 3 $,
        and, $\tau = 2\sum_{i=1}^{n}x_{i}^{2} + 2\sum_{i=1}^{n}x_{i}(T-x_{i}),\text{ if } k=3$.
\end{itemize}

Note that each $ x_{i}\leq d^{b} $, for some non-negative constant $ b $, the size of the instance $( \hinpclss, \tau )$ is polynomial in $d$. Moreover, it is straightforward to see that the reduction runs in polynomial time.

The following lemma argues that the above reduction maps a $ \hyes $ instance of the $ \thrp $ to a $ \hyes $ instance of the $ \clsf{k} $.

\begin{restatable}{lemma}{yesinstance}\label{lem:yes.instance}
    For any integer $ k\geq 3 $, if $ S $ is a $ \hyes $ instance of the $\thrp$, then $(\hinpclss, \tau) $ is also a $ \hyes $ instance of the $ \clsf{k} $.
\end{restatable}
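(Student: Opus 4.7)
The plan is to construct, from a valid $3$-partition $S = S_1 \sqcup \cdots \sqcup S_{d/3}$ of $S$ with each $S_i = \{x_{a_i}, x_{b_i}, x_{c_i}\}$ summing to $T$, an explicit {\fair} $\hfairoptclss$ of the point set of $\hinpclss$ whose distance to $\hinpclss$ is at most the declared $\tau$, verifying the bound by a direct pair-counting argument. Two different constructions will be needed, one for $k \geq 4$ and a tighter one for $k = 3$, because the declared value of $\tau$ is strictly smaller in the $k = 3$ case and no single universal recipe can meet both bounds.

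For $k \geq 4$ I will use a \emph{coarse} construction: set
\[
\hfaircls[i] \;:=\; \hgbcls{i} \cup \hrcls{a_i} \cup \hrcls{b_i} \cup \hrcls{c_i}, \qquad i = 1, \dots, d/3.
\]
Since $\hgbcls{i}$ already supplies $T$ points of each $c_t$ with $t \geq 2$ and the three appended reds together supply $T$ points of $c_1$, every $\hfaircls[i]$ is colour-equitable, hence $\hfairoptclss$ is {\fair}. No $\hinpclss$-cluster is split by this construction, so $\dist(\hinpclss, \hfairoptclss)$ only counts \emph{newly} co-clustered pairs inside each $\hfaircls[i]$: cross-pairs between $\hgbcls{i}$ and the three attached reds contribute $(k-1)T \cdot T$ per $i$, and pairs among $\hrcls{a_i}, \hrcls{b_i}, \hrcls{c_i}$ are evaluated via the identity $2(xy + yz + xz) = (x+y+z)^2 - (x^2+y^2+z^2)$ with $x+y+z = T$. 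Summing over $i$ and invoking $\sum_j x_j = (d/3)T$ collapses the total to exactly $\tfrac{d}{3}(k-1)T^2 + \tfrac{1}{2}\sum_j x_j(T - x_j)$, the stated value of $\tau$.

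For $k = 3$ I will instead use a \emph{fine} construction in which each monochromatic $\hrcls{j}$ becomes the seed of its own fair cluster. Letting $i(j)$ denote the unique index with $j \in S_{i(j)}$, define
\[
\hfaircls[j] \;:=\; \hrcls{j} \cup E_j,
\]
where $E_j \subseteq \hgbcls{i(j)}$ contains $x_j$ points of $c_2$ together with $x_j$ points of $c_3$. Because $\sum_{j \in S_i} x_j = T$, the sets $\{E_j : j \in S_i\}$ partition $\hgbcls{i}$ exactly, so the construction is well defined and each $\hfaircls[j]$ is fair of size $3x_j$. Now every $\hrcls{j}$ is preserved intact, while each $\hgbcls{i}$ is split three ways; the broken intra-$\hgbcls{i}$ pairs total $\binom{2T}{2} - \sum_{j \in S_i}\binom{2x_j}{2}$ per $i$, which after summation collapses to $2(d/3)T^2 - 2\sum_j x_j^2$, and the newly co-clustered $\hrcls{j}$--$E_j$ pairs contribute $2\sum_j x_j^2$. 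Adding the two gives $(2d/3)T^2$, which after elementary simplification equals $\tau = 2\sum_j x_j^2 + 2\sum_j x_j(T - x_j)$.

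The only conceptually non-trivial step is recognising that the coarse construction is strictly suboptimal when $k = 3$ and therefore insufficient to meet the tighter $\tau$ in that regime, so a different merging which keeps every $\hrcls{j}$ intact is required; for $k \geq 4$ the situation reverses, since the per-$\hgbcls{i}$ break cost of the fine construction scales quadratically in $k$. Once this dichotomy is observed, everything else reduces to elementary bookkeeping that uses only the partition identities $\sum_{j \in S_i} x_j = T$ and $\sum_j x_j = (d/3)T$ together with the symmetric-function identity for three quantities summing to $T$.
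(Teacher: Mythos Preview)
Your proposal is correct and follows essentially the same approach as the paper: the two constructions (coarse merge for $k\geq 4$, fine split-and-attach for $k=3$) coincide with the paper's, and your pair-counting via $\binom{\cdot}{2}$ and the symmetric-function identity is just a cosmetic reformulation of the paper's direct accounting of merge and split costs. Your added meta-observation explaining \emph{why} two constructions are needed is a nice touch that the paper omits.
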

We defer the proof of~\cref{lem:yes.instance} to the appendix.

It remains to demonstrate that our reduction maps a $ \hno $ instance of the $\thrp$ to a $ \hno $ instance of the $ \clsf{k} $.

\begin{lemma}\label{lem:no.instance}
    For $ k \geq 3$, if $S $ is a $ \hno $ instance of the $ \thrp $, then $(\hinpclss, \tau) $ is also a $ \hno $ instance of the $ \clsf{k} $.
\end{lemma}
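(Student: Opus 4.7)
The plan is to prove the contrapositive: assuming there exists a {\fair} $\hfairoptclss$ with $\dist(\hinpclss, \hfairoptclss) \le \tau$, I will extract a valid $3$-partition of $S$, contradicting that $S$ is a $\hno$ instance of $\thrp$. The key observation is that $\tau$ is calibrated so that only clusterings of a canonical form can meet it---each non-empty fair cluster $\hfaircls[\ell]$ consists of exactly one $\hgbcls{i}$ (intact) together with three clusters $\hrcls{j_1}, \hrcls{j_2}, \hrcls{j_3}$ whose sizes satisfy $x_{j_1} + x_{j_2} + x_{j_3} = T$. From such an $\hfairoptclss$, the collection $\{\{x_{j_1}, x_{j_2}, x_{j_3}\}\}_{\ell}$ is immediately a valid $3$-partition of $S$.

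I would begin by decomposing $\dist(\hinpclss, \hfairoptclss)$ according to the color types of each pair: (A) both color $c_1$; (B) both the same color $c_t$ for some $t \ge 2$; (C) one $c_1$ and one $c_t$ with $t \ge 2$; (D) two distinct colors in $\{c_2, \dots, c_k\}$. Letting $n_\ell$ denote the common per-color count in $\hfaircls[\ell]$, $b_{it\ell}$ the number of color-$c_t$ points from $\hgbcls{i}$ assigned to $\hfaircls[\ell]$, and $r_{j\ell}$ the number of color-$c_1$ points from $\hrcls{j}$ assigned to $\hfaircls[\ell]$, each contribution admits a closed form in these variables. Type (C) is especially clean: no type-(C) pair is together in $\hinpclss$, so its contribution is exactly $(k-1)\sum_\ell n_\ell^2$; since $\sum_\ell n_\ell = (d/3)T$, convexity gives $(k-1)\sum_\ell n_\ell^2 \ge (d/3)(k-1)T^2$, with equality iff every non-empty fair cluster has $n_\ell = T$. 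A direct calculation (using the identity $\binom{T}{2} - \binom{s}{2} - \binom{T-s}{2} = s(T-s)$) shows that the type-(B) split cost is a sum of strictly positive terms whenever some $\hgbcls{i}$ is split across multiple fair clusters, and a parallel bilinear calculation gives the corresponding lower bound for type (D).

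Combining these, I would argue that the total distance strictly exceeds $\tau$ unless every $\hgbcls{i}$ lies entirely in exactly one fair cluster and every non-empty fair cluster contains exactly one $\hgbcls{i}$ (so $n_\ell = T$). Under this structure, each such fair cluster receives exactly $T$ color-$c_1$ points from the $\hrcls{j}$'s, and the residual type-(A) cost reduces to $\sum_\ell \binom{T}{2} + \sum_j \binom{x_j}{2} - 2\sum_{j,\ell} \binom{r_{j\ell}}{2}$, minimized precisely when each $\hrcls{j}$ is unsplit. The size constraints $x_j \in (T/4, T/2)$ then force each fair cluster to contain exactly three $\hrcls{j}$'s summing to $T$, producing the desired $3$-partition and contradicting the $\hno$ assumption.

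The main obstacle is making the convexity and split-penalty inequalities tight enough that any deviation from the canonical configuration produces strictly positive slack over $\tau$; this requires delicate joint accounting of the type (B), (C), and (D) contributions through the shared variables $b_{it\ell}$. The case $k = 3$ is particularly subtle, because uniformly splitting a single $\hgbcls{i}$ into equal pieces turns out to be cost-neutral across types (B), (C), and (D) alone; there I would use the modified threshold $\tau = 2T\sum_i x_i$ and lean on the type-(A) contribution together with the strict inequality $x_j < T/2$ to rule out non-canonical solutions.
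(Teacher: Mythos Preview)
Your proposal takes a direct route that differs substantially from the paper's. However, there is a genuine gap in your type-(C) analysis: the inequality $(k-1)\sum_\ell n_\ell^2 \ge (d/3)(k-1)T^2$ does \emph{not} follow from convexity alone, because the number of non-empty fair clusters is not fixed. With $m$ non-empty clusters one has only $\sum_\ell n_\ell^2 \ge ((d/3)T)^2/m$, which falls below $(d/3)T^2$ whenever $m > d/3$; for instance, splitting everything into $(d/3)T$ clusters with $n_\ell = 1$ each gives $\sum_\ell n_\ell^2 = (d/3)T \ll (d/3)T^2$. So type (C) by itself cannot anchor the canonical configuration---the savings from over-splitting in type (C) must be offset by the split penalties in types (B) and (D), and you provide no mechanism for carrying out that trade-off beyond flagging it as the ``main obstacle.'' This is the crux of the argument, not a bookkeeping detail, and without it the contrapositive does not close.

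The paper sidesteps this entirely with a one-line reduction: recolor every point of color $c_j$ ($j \ge 3$) to $c_2$ (``blue''). Clustering distance is color-blind, so $\dist(\hinpclss, \hfairoptclss)$ is unchanged; under the recoloring $\hinpclss$ becomes a two-color instance with monochromatic blue clusters of size $(k-1)T$ and the same red clusters $\hrcls{j}$, while $\hfairoptclss$ becomes a two-color fair clustering with blue-to-red ratio $(k-1):1$. The already-established two-color result from~\cite{chakraborty2025towards} (restated in the paper as~\cref{lem:no.instance.closest.p.fair}) then yields $\dist > \tau$ directly. Note that your types (B), (C), (D) together are exactly the pairs with at least one non-red endpoint, so the recoloring is in effect collapsing them into a single two-color computation in which the delicate joint accounting you flag has already been done. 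Your approach, if completed, would be self-contained and perhaps structurally more transparent, but the paper's reduction obtains the result essentially for free.
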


\begin{proof}
    Assume to the contrary that $ ( \hinpclss,\tau ) $ is a $ \hyes $ instance. Then there exists a $ \fair $ $ \hfairoptclss $ such that $ \dist( \hinpclss, \hfairoptclss )\leq \tau $. 

    Without loss of generality, we refer to $ c_{ 1 } $ as the color red, and refer to $ c_{ 2 } $ as the color blue.

    Let $ V' $ be the set of red-blue points obtained from $ V $ by recoloring every point with color $ c_{ j } $ ($ j\geq 3 $) to blue. Denote $ \hclspfinpclss $ and $ \hclspfoptclss $ the clusterings obtained from $ \hinpclss $ and $ \hfairoptclss $, respectively, under this recoloring. Then, $ \dist( \hclspfinpclss, \hclspfoptclss ) = \dist( \hinpclss, \hfairoptclss ) $. We analyze the structure of $ \hclspfinpclss $ and $ \hclspfoptclss $.

    Observe that $ \hclspfinpclss $ is a clustering over $ V' $, in which $ \hclspfinpclss = \{ \hbcls{ 1 }, \hbcls{ 2 }, \dots, \hbcls{ n/3 }, \hrcls{ 1 }, \hrcls{ 2 }, \dots, \hrcls{ n } \} $, where each $ \hbcls{ i } $ is a monochromatic blue cluster of size $ ( k-1 )T $, obtained from the original cluster $ \hgbcls{ i } $ in $ \hinpclss $ by recoloring every point to blue. Each $ \hrcls{ i } $ is a monochromatic red cluster, which remains unchanged from $ \hinpclss $. 

    Now we claim that $ \hclspfoptclss $ is a $ \fair $ clustering over $ V' $. Indeed, recall that $ \hfairoptclss $ is a $ \fair $ over $ V $. Hence, in every cluster $ \hfaircls\in \hfairoptclss $, the numbers of points of each color are equal, that is, $ \card{ c_{ i }( \hfaircls )} = \card{ c_{ j }( \hfaircls ) } $, for all $ 1\leq i, j\leq k $. Note that each cluster $ \hclspfcls\in \hclspfoptclss $ is obtained from a cluster in $ \hfairoptclss $ by recoloring every point with color $ c_{ i } $ ($ i\geq 3 $) to blue. Hence, the ratio between the number of blue points and the number of red points in $ \hclspfcls $ is $ ( k-1 ) $. This implies that $ \hclspfoptclss $ is a $ \fair $ over $ V' $.

    Applying a result from~\cite[Lemma 45, Lemma 46]{chakraborty2025towards} for the set of points $ V' $, it follows that $ \dist( \hclspfinpclss, \hclspfoptclss ) > \tau $, which is a contradiction since we have established that $ \dist( \hclspfinpclss, \hclspfoptclss ) = \dist( \hinpclss, \hfairoptclss ) \leq \tau $. This concludes that $ ( \hinpclss, \tau ) $ is a $ \hno $ instance.
\end{proof}

%

As our reduction runs in polynomial time, from~\cref{lem:yes.instance} and~\cref{lem:no.instance}, we conclude that $ \clsf{k} $ is $ \nph $. This completes the proof of~\cref{thm:three.closest.equi.fair.is.np.complete}. In the appendix, we remark on how to generalize this reduction for arbitrary ratios.

\section{Implication to Fair Correlation Clustering}

\paragraph{Correlation Clustering.} 
Given a complete undirected graph $G(V, E)$ where each edge $(u,v) \in E$ is labeled either ``$+$'' (similar) or ``$-$'' (dissimilar), let $E^+$ and $E^-$ denote the sets of ``$+$'' and ``$-$'' edges, respectively. A clustering $\m{C} = \{ C_1, C_2, \ldots, C_t \}$ partitions $V$ into disjoint subsets.

Let $\E{\m{C}}$ denote the set of inter-cluster edges and $\I{\m{C}}$ the set of intra-cluster edges. The cost of clustering $\m{C}$ is defined as:
\[
\cost{\m{C}} = |\E{\m{C}} \cap E^+| + |\I{\m{C}} \cap E^-|.
\]
The goal is to find a clustering that minimizes $\cost{\m{C}}$. In addition, when we want $\m{C}$ to be also a {\fair}, the problem is referred to as \emph{fair correlation clustering}.

\begin{theorem}\label{thm:correlation-clustering}
    There exists an algorithm that, given a correlation clustering instance $G$, computes a $O(|\chi|^{1.6} \log^{2.81} |\chi|)$ approximate fair correlation clustering when there are equal number of data points from each color group, and a $O(|\chi|^{3.81})$ approximate fair correlation clustering when the ratio between the number of points from different color groups is arbitrary.
\end{theorem}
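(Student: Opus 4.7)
The plan is a two-stage black-box reduction: first compute any constant-factor approximation $\m{A}$ to the unconstrained correlation clustering problem, then post-process $\m{A}$ into a fair clustering by invoking the closest-fair-clustering algorithms established in~\cref{thm:equitheorem} and~\cref{thm:arbitrary-proportion}. Concretely, I would first run a known constant-factor correlation-clustering algorithm on $G$, e.g., the $1.438$-approximation of~\cite{cao2024understanding}, to obtain a clustering $\m{A}$ with $\cost{\m{A}} \le \alpha \cdot \cost{\m{C}^*}$, where $\alpha = O(1)$ and $\m{C}^*$ is the optimal unconstrained correlation clustering. Then I would feed $\m{A}$ into $\fequi$ in the equi-proportion case, or $\fgen$ in the general case, obtaining a fair clustering $\m{F}$ satisfying $\dist(\m{A}, \m{F}) \le \gamma \cdot \dist(\m{A}, \optcl{\m{A}})$, where $\gamma = O(|\chi|^{1.6}\log^{2.81}|\chi|)$ or $\gamma = O(|\chi|^{3.81})$ respectively.

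The bridge from the distance guarantee to a cost guarantee is a pair of elementary inequalities relating correlation-clustering cost and pair-distance: for any two clusterings $\m{C}, \m{C}'$ on the same signed graph,
\[
\cost{\m{C}'} \le \cost{\m{C}} + \dist(\m{C}, \m{C}')
\qquad \text{and} \qquad
\dist(\m{C}, \m{C}') \le \cost{\m{C}} + \cost{\m{C}'}.
\]
Both follow from a short case analysis over edge sign ($+$ or $-$) and pair status (together or separated in each clustering): every pair contributing to the distance, or to a cost gap, is charged to exactly one of the two cost functions.

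Writing $\m{F}^{\text{opt}}$ for the optimal fair correlation clustering, feasibility for the unconstrained problem yields $\cost{\m{C}^*} \le \cost{\m{F}^{\text{opt}}}$, and fairness of $\m{F}^{\text{opt}}$ yields $\dist(\m{A}, \optcl{\m{A}}) \le \dist(\m{A}, \m{F}^{\text{opt}})$. Chaining these with the two inequalities above gives
\begin{align*}
\cost{\m{F}} &\le \cost{\m{A}} + \dist(\m{A}, \m{F}) \\
             &\le \cost{\m{A}} + \gamma \cdot \dist(\m{A}, \m{F}^{\text{opt}}) \\
             &\le \cost{\m{A}} + \gamma \cdot \big(\cost{\m{A}} + \cost{\m{F}^{\text{opt}}}\big) \\
             &\le \big(\alpha(1+\gamma) + \gamma\big)\cost{\m{F}^{\text{opt}}}
              = O(\gamma) \cdot \cost{\m{F}^{\text{opt}}},
\end{align*}
which yields the two stated approximation factors upon substituting $\gamma$. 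I do not anticipate a substantial obstacle: the only care needed is the four-case analysis underlying the two cost/distance inequalities, after which the proof is a routine arithmetic chain. The total running time is polynomial, dominated by the constant-factor correlation-clustering subroutine, with the closest-fair post-processing adding only a near-linear overhead.
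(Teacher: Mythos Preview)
Your proposal is correct and takes essentially the same approach as the paper: run a constant-factor unconstrained correlation clustering algorithm, then post-process with the closest-fair-clustering algorithm, and bound the final cost via triangle-inequality-type arguments to obtain the factor $\alpha + \gamma + \alpha\gamma$. The only cosmetic difference is that the paper recasts $\cost{\m{K}}$ as $\dist(G, G_{\m{K}})$ and then applies the metric triangle inequality directly, whereas you state the two equivalent cost/distance inequalities explicitly; the resulting arithmetic chain and final bound are identical.
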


To prove \cref{thm:correlation-clustering}, we use the following.

\begin{lemma}\label{lem:correlation_clustering}
Let $\mathcal{C}$ be a clustering, and suppose there exists an algorithm $\mathcal{A}$ that computes an $\gamma$-close fair clustering with respect to $\mathcal{C}$. Additionally, suppose there exists a $\beta$-approximation algorithm $\m{B}$, for the standard correlation clustering problem on a graph $G$. Then, there exists an algorithm that computes a fair correlation clustering of $G$ with approximation factor $(\gamma + \beta + \gamma\beta)$.
\end{lemma}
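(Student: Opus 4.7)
The algorithm is the natural two-step pipeline: first run $\mathcal{B}$ on $G$ to obtain a clustering $\mathcal{C}$ that is a $\beta$-approximation to the optimal unconstrained correlation clustering $\mathcal{C}_{opt}$, and then run $\mathcal{A}$ on $\mathcal{C}$ to obtain a fair clustering $\mathcal{F}$ that is $\gamma$-close (among fair clusterings) to $\mathcal{C}$. Let $\mathcal{F}^*$ denote the optimal fair correlation clustering on $G$. Since $\mathcal{F}^*$ is a valid (unconstrained) clustering of $G$, we immediately get $\cost{\mathcal{C}} \leq \beta \cdot \cost{\mathcal{C}_{opt}} \leq \beta \cdot \cost{\mathcal{F}^*}$. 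Moreover, since $\mathcal{F}^*$ is itself a fair clustering, the definition of $\gamma$-closeness (with respect to the \emph{closest} fair clustering to $\mathcal{C}$) gives $\dist(\mathcal{C}, \mathcal{F}) \leq \gamma \cdot \dist(\mathcal{C}, \mathcal{F}^*)$.

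The remaining task is to relate correlation clustering \emph{cost} to the inter-clustering \emph{distance}. The key observation, obtained by a per-pair case analysis on whether a pair $(u,v)$ is a $+$ or $-$ edge and how the two clusterings classify it (together vs.\ separated), is that for every pair contributing $1$ to $\dist(\mathcal{C}_1,\mathcal{C}_2)$, exactly one of the two clusterings pays cost $1$ on that pair, while for every pair contributing $0$ to the distance, both clusterings pay the same. This yields two inequalities that I will use as the backbone of the proof:
\begin{align*}
\cost{\mathcal{F}} &\leq \cost{\mathcal{C}} + \dist(\mathcal{C}, \mathcal{F}), \\
\dist(\mathcal{C}, \mathcal{F}^*) &\leq \cost{\mathcal{C}} + \cost{\mathcal{F}^*}.
\end{align*}
The first holds because any pair whose intra/inter status is the same in $\mathcal{C}$ and $\mathcal{F}$ contributes equally to both costs, and any pair on which they differ adds at most $1$ to $\cost{\mathcal{F}}$ beyond $\cost{\mathcal{C}}$. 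The second follows because any pair contributing to $\dist(\mathcal{C},\mathcal{F}^*)$ is charged to $\cost{\mathcal{C}}$ or $\cost{\mathcal{F}^*}$ by the case analysis.

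Chaining these gives the claimed approximation:
\begin{align*}
\cost{\mathcal{F}}
&\leq \cost{\mathcal{C}} + \gamma \cdot \dist(\mathcal{C}, \mathcal{F}^*) \\
&\leq \cost{\mathcal{C}} + \gamma \cdot \bigl(\cost{\mathcal{C}} + \cost{\mathcal{F}^*}\bigr) \\
&\leq (1+\gamma)\,\beta\,\cost{\mathcal{F}^*} + \gamma \cdot \cost{\mathcal{F}^*} \\
&= (\gamma + \beta + \gamma\beta)\,\cost{\mathcal{F}^*}.
\end{align*}

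\textbf{Main obstacle.} The algorithmic side is trivial, and the arithmetic at the end is essentially forced once the right inequalities are in place. The only non-routine step is establishing the two cost-versus-distance inequalities above; this is not deep but requires a careful case analysis enumerating all combinations of edge label ($+/-$) and co-clustering status in $\mathcal{C}$ vs.\ $\mathcal{F}$ (resp.\ $\mathcal{F}^*$) to check that each differing pair is charged correctly. A subtle point worth flagging is that the $\gamma$-closeness guarantee of $\mathcal{A}$ is measured against the closest fair clustering to $\mathcal{C}$, not against $\mathcal{F}^*$; this is fine because $\mathcal{F}^*$ is itself fair, so it can only be farther from $\mathcal{C}$ than the closest one, which is exactly what is needed to substitute $\mathcal{F}^*$ into the bound.
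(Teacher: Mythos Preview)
Your proof is correct and follows essentially the same route as the paper. The paper packages your two cost-versus-distance inequalities as triangle inequalities by introducing, for any clustering $\mathcal{K}$, the signed graph $G_{\mathcal{K}}$ (edges labeled $+$ inside clusters, $-$ across) and observing that $\cost{\mathcal{K}}=\dist(G,G_{\mathcal{K}})$ and $\dist(\mathcal{C}_1,\mathcal{C}_2)=\dist(G_{\mathcal{C}_1},G_{\mathcal{C}_2})$; your per-pair case analysis is precisely the verification of that triangle inequality, and the subsequent chaining is identical.
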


\begin{proof}
    Let us first describe the algorithm $\fcc$, which computes a fair correlation clustering for a given instance \( G \).

\begin{itemize}
    \item \textbf{Input:} Correlation clustering instance \( G \).
    \item \textbf{Output:} A fair clustering \( \mathcal{F} \).
    \item \textbf{Procedure:}
    \begin{enumerate}
        \item Compute a correlation clustering \( \mathcal{D} \) of \( G \) using a \( \beta \)-approximation algorithm \( \mathcal{B} \).
        \item Apply the closest fair clustering algorithm $\m{A}$ to \( \mathcal{D} \) to obtain a fair clustering \( \mathcal{F} \) that is \( \gamma \)-close to \( \mathcal{D} \).
        \item Return \( \mathcal{F} \).
    \end{enumerate}
\end{itemize}

We argue that $\m{F}$ is $(\gamma + \beta + \gamma\beta)$ approximate correlation clustering of $G$ using triangle inequality. We defer the proof to the appendix.
\end{proof}

 \begin{proof}[Proof of \cref{thm:correlation-clustering}]
     By~\cite{cao2025solving}, we get that there exists an $O(1)$-approximation algorithm to find a correlation clustering for a correlation clustering instance $G$. When each color group has the same size, the algorithm $\fequi$ produces an $O(|\chi|^{1.6} \log^{2.81} |\chi|)$-close clustering to any input clustering $\m{D}$. Hence, by \cref{lem:correlation_clustering} we get that the algorithm $\fcc$ produces an $O(|\chi|^{1.6} \log^{2.81} |\chi|)$ approximate fair correlation clustering.
     
    In the general case with arbitrary group size ratio $p_1:p_2:\cdots:p_{|\chi|}$, the algorithm $\fgen$ gives $O(|\chi|^{3.81})$-close clustering to any input clustering $\m{D}$. Hence, again by \cref{lem:correlation_clustering} we show algorithm $\fcc$ produces an $O(|\chi|^{3.81})$ approximate fair correlation clustering for this.
 \end{proof}

\section{Conclusion}
In this paper, we generalize the closest fair clustering problem originally proposed by~\cite{chakraborty2025towards} [COLT '25] to scenarios involving any number of groups, thereby addressing settings with non-binary, multiple protected attributes. We demonstrate that the problem becomes NP-hard even when there are just three equal-sized groups, showing a strong separation with the two equi-proportion group case where an exact solution exists. We further propose near-linear time approximation algorithms for clustering with multiple (potentially unequal-sized) groups, answering an open problem posed by~\cite{chakraborty2025towards} [COLT '25]. Leveraging these results, we achieve improved approximation guarantees for fair correlation clustering and, for the first time, provide approximation guarantees for fair consensus clustering involving more than two groups. Promising directions for future research include improving the approximation factors further and investigating alternative fairness criteria with similar approximation guarantees.

\section*{Acknowledgments}Diptarka Chakraborty was supported in part by an MoE AcRF Tier 1 grant (T1 251RES2303), and a Google South \& South-East Asia Research Award. Kushagra Chatterjee was supported by an MoE AcRF Tier 1 grant (T1 251RES2303). Debarati Das was supported in part by NSF grant 2337832. 

\bibliography{j_ref}

@article{charikar2005clustering,
  title={Clustering with qualitative information},
  author={Charikar, Moses and Guruswami, Venkatesan and Wirth, Anthony},
  journal={Journal of Computer and System Sciences},
  volume={71},
  number={3},
  pages={360--383},
  year={2005},
  publisher={Elsevier}
}

@article{bansal2004correlation,
  title={Correlation clustering},
  author={Bansal, Nikhil and Blum, Avrim and Chawla, Shuchi},
  journal={Machine learning},
  volume={56},
  number={1},
  pages={89--113},
  year={2004},
  publisher={Springer}
}

@inproceedings{chawla2015near,
  title={Near optimal lp rounding algorithm for correlationclustering on complete and complete k-partite graphs},
  author={Chawla, Shuchi and Makarychev, Konstantin and Schramm, Tselil and Yaroslavtsev, Grigory},
  booktitle={Proceedings of the forty-seventh annual ACM symposium on Theory of computing},
  pages={219--228},
  year={2015}
}

@article{demaine2006correlation,
  title={Correlation clustering in general weighted graphs},
  author={Demaine, Erik D and Emanuel, Dotan and Fiat, Amos and Immorlica, Nicole},
  journal={Theoretical Computer Science},
  volume={361},
  number={2-3},
  pages={172--187},
  year={2006},
  publisher={Elsevier}
}

@inproceedings{cao2025solving,
  title={Solving the Correlation Cluster LP in Sublinear Time},
  author={Cao, Nairen and Cohen-Addad, Vincent and Lee, Euiwoong and Li, Shi and Lolck, David Rasmussen and Newman, Alantha and Thorup, Mikkel and Vogl, Lukas and Yan, Shuyi and Zhang, Hanwen},
  booktitle={Proceedings of the 57th Annual ACM Symposium on Theory of Computing},
  pages={1154--1165},
  year={2025}
}

@inproceedings{bonchi2014correlation,
  title={Correlation clustering: from theory to practice.},
  author={Bonchi, Francesco and Garcia-Soriano, David and Liberty, Edo},
  booktitle={KDD},
  pages={1972},
  year={2014}
}

@article{hou2016new,
  title={A new correlation clustering method for cancer mutation analysis},
  author={Hou, Jack P and Emad, Amin and Puleo, Gregory J and Ma, Jian and Milenkovic, Olgica},
  journal={Bioinformatics},
  volume={32},
  number={24},
  pages={3717--3728},
  year={2016},
  publisher={Oxford University Press}
}

@article{bressan2019correlation,
  title={Correlation clustering with adaptive similarity queries},
  author={Bressan, Marco and Cesa-Bianchi, Nicol{\`o} and Paudice, Andrea and Vitale, Fabio},
  journal={Advances in neural information processing systems},
  volume={32},
  year={2019}
}

@inproceedings{kushagra2019semi,
  title={Semi-supervised clustering for de-duplication},
  author={Kushagra, Shrinu and Ben-David, Shai and Ilyas, Ihab},
  booktitle={The 22nd International Conference on Artificial Intelligence and Statistics},
  pages={1659--1667},
  year={2019},
  organization={PMLR}
}

@inproceedings{veldt2018correlation,
  title={A correlation clustering framework for community detection},
  author={Veldt, Nate and Gleich, David F and Wirth, Anthony},
  booktitle={Proceedings of the 2018 World Wide Web Conference},
  pages={439--448},
  year={2018}
}

@inproceedings{rosner2018privacy,
  title={Privacy Preserving Clustering with Constraints},
  author={R{\"o}sner, Clemens and Schmidt, Melanie},
  booktitle={45th International Colloquium on Automata, Languages, and Programming (ICALP 2018)},
  pages={96--1},
  year={2018},
  organization={Schloss Dagstuhl--Leibniz-Zentrum f{\"u}r Informatik}
}

@article{bohm2020fair,
  title={Fair clustering with multiple colors},
  author={B{\"o}hm, Matteo and Fazzone, Adriano and Leonardi, Stefano and Schwiegelshohn, Chris},
  journal={arXiv preprint arXiv:2002.07892},
  year={2020}
}

@article{bera2019fair,
  title={Fair algorithms for clustering},
  author={Bera, Suman and Chakrabarty, Deeparnab and Flores, Nicolas and Negahbani, Maryam},
  journal={Advances in Neural Information Processing Systems},
  volume={32},
  year={2019}
}

@inproceedings{bercea2019cost,
  title={On the Cost of Essentially Fair Clusterings},
  author={Bercea, Ioana O and Gro{\ss}, Martin and Khuller, Samir and Kumar, Aounon and R{\"o}sner, Clemens and Schmidt, Daniel R and Schmidt, Melanie},
  booktitle={Approximation, Randomization, and Combinatorial Optimization. Algorithms and Techniques (APPROX/RANDOM 2019)},
  pages={18--1},
  year={2019},
  organization={Schloss Dagstuhl--Leibniz-Zentrum f{\"u}r Informatik}
}

@article{bandyapadhyay2024coresets,
  title={On coresets for fair clustering in metric and euclidean spaces and their applications},
  author={Bandyapadhyay, Sayan and Fomin, Fedor V and Simonov, Kirill},
  journal={Journal of Computer and System Sciences},
  volume={142},
  pages={103506},
  year={2024},
  publisher={Elsevier}
}

@article{bandyapadhyay2024polynomial,
  title={A Polynomial-Time Approximation for Pairwise Fair $ k $-Median Clustering},
  author={Bandyapadhyay, Sayan and Chlamt{\'a}{\v{c}}, Eden and Friggstad, Zachary and Jamshidian, Mahya and Makarychev, Yury and Vakilian, Ali},
  journal={arXiv preprint arXiv:2405.10378},
  year={2024}
}

@inproceedings{bandyapadhyay2025constant,
  title={A Constant-Factor Approximation for Pairwise Fair k-Center Clustering},
  author={Bandyapadhyay, Sayan and Chen, Tianzhi and Friggstad, Zachary and Jamshidian, Mahya},
  booktitle={International Conference on Integer Programming and Combinatorial Optimization},
  pages={43--57},
  year={2025},
  organization={Springer}
}

@article{chakraborty2025improved,
  title={Improved Rank Aggregation under Fairness Constraint},
  author={Chakraborty, Diptarka and Das, Himika and Dey, Sanjana and Yan, Alvin Hong Yao},
  journal={arXiv preprint arXiv:2505.10006},
  year={2025}
}

@article{chakraborty2025towards,
  title={Towards Fair Representation: Clustering and Consensus},
  author={Chakraborty, Diptarka and Chatterjee, Kushagra and Das, Debarati and Nguyen, Tien Long and Nobahari, Romina},
  journal={38th Annual Conference on Learning Theory (COLT 2025); full version: arXiv preprint arXiv:2506.08673},
  year={2025}
}

@inproceedings{wei22,
  author    = {Dong Wei and
               Md Mouinul Islam and
               Baruch Schieber and
               Senjuti Basu Roy},
  title     = {Rank Aggregation with Proportionate Fairness},
  booktitle = {{SIGMOD} International Conference on Management of Data},
  pages     = {262--275},
  year      = {2022},
}

@inproceedings{chierichetti2017fair,
  title={Fair clustering through fairlets},
  author={Chierichetti, Flavio and Kumar, Ravi and Lattanzi, Silvio and Vassilvitskii, Sergei},
  booktitle={Advances in Neural Information Processing Systems ({NeurIPS})},
  pages={5029-5037},
  year={2017}
}

@inproceedings{BackursIOSVW19,
  author    = {Arturs Backurs and
               Piotr Indyk and
               Krzysztof Onak and
               Baruch Schieber and
               Ali Vakilian and
               Tal Wagner},
  title     = {Scalable Fair Clustering},
  booktitle = {International Conference on Machine Learning (ICML)},
  volume    = {97},
  pages     = {405--413},
  year      = {2019}
}

@inproceedings{ChenFLM19,
  author    = {Xingyu Chen and
               Brandon Fain and
               Liang Lyu and
               Kamesh Munagala},
  title     = {Proportionally Fair Clustering},
  booktitle = {International Conference on Machine Learning (ICML)},
  pages     = {1032--1041},
  year      = {2019}
}

@inproceedings{HuangJV19,
  author    = {Lingxiao Huang and
               Shaofeng H.{-}C. Jiang and
               Nisheeth K. Vishnoi},
  title     = {Coresets for Clustering with Fairness Constraints},
  booktitle = {Advances in Neural Information Processing Systems (NeurIPS)},
  pages     = {7587--7598},
  year      = {2019}
}

@inproceedings{dwork2012fairness,
  title={Fairness through awareness},
  author={Dwork, Cynthia and Hardt, Moritz and Pitassi, Toniann and Reingold, Omer and Zemel, Richard},
  booktitle={Innovations in Theoretical Computer Science},
  pages={214--226},
  year={2012}
}

@inproceedings{hardt2016equality,
  title={Equality of opportunity in supervised learning},
  author={Hardt, Moritz and Price, Eric and Srebro, Nati},
  booktitle={Advances in Neural Information Processing Systems (NeurIPS)},
  pages = {3315--3323},
  year={2016}
}

@inproceedings{CelisSV18,
  author    = {L Elisa Celis and
               Damian Straszak and
               Nisheeth K Vishnoi},
  title     = {Ranking with Fairness Constraints},
  booktitle = {International Colloquium on Automata, Languages, and Programming (ICALP)},
  volume    = {107},
  pages     = {28:1--28:15},
  year      = {2018}
}

@InProceedings{pmlr-v108-ahmadian20a,
  title = 	 {Fair Correlation Clustering},
  author =       {Ahmadian, Sara and Epasto, Alessandro and Kumar, Ravi and Mahdian, Mohammad},
  booktitle = 	 {International Conference on Artificial Intelligence and Statistics (AISTATS)},
  pages = 	 {4195--4205},
  year = 	 {2020},
  volume = 	 {108}
}

@inproceedings{kay2015unequal,
  title={Unequal representation and gender stereotypes in image search results for occupations},
  author={Kay, Matthew and Matuszek, Cynthia and Munson, Sean A},
  booktitle={{ACM} conference on human factors in computing systems},
  pages={3819--3828},
  year={2015}
}

@article{bolukbasi2016man,
  title={Man is to computer programmer as woman is to homemaker? debiasing word embeddings},
  author={Bolukbasi, Tolga and Chang, Kai-Wei and Zou, James Y and Saligrama, Venkatesh and Kalai, Adam T},
  journal={Advances in Neural Information Processing Systems (NeurIPS)},
  volume={29},
  year={2016}
}

@inproceedings{kliachkin2024fairness,
  title={Fairness in Ranking: Robustness through Randomization without the Protected Attribute},
  author={Kliachkin, Andrii and Psaroudaki, Eleni and Mare{\v{c}}ek, Jakub and Fotakis, Dimitris},
  booktitle={2024 IEEE 40th International Conference on Data Engineering Workshops (ICDEW)},
  pages={201--208},
  year={2024},
  organization={IEEE}
}

@article{chakraborty2022,
  title={Fair rank aggregation},
  author={Chakraborty, Diptarka and Das, Syamantak and Khan, Arindam and Subramanian, Aditya},
  journal={Advances in Neural Information Processing Systems},
  volume={35},
  pages={23965--23978},
  year={2022},
  note = {Full version: arXiv preprint arXiv:2308.10499}
}

@inproceedings{cao2024understanding,
  title={Understanding the Cluster Linear Program for Correlation Clustering},
  author={Cao, Nairen and Cohen-Addad, Vincent and Lee, Euiwoong and Li, Shi and Newman, Alantha and Vogl, Lukas},
  booktitle={Proceedings of the 56th Annual ACM Symposium on Theory of Computing},
  pages={1605--1616},
  year={2024}
}

@inproceedings{ahmadian2023improved,
  title={Improved approximation for fair correlation clustering},
  author={Ahmadian, Sara and Negahbani, Maryam},
  booktitle={International Conference on Artificial Intelligence and Statistics},
  pages={9499--9516},
  year={2023},
  organization={PMLR}
}

@article{ahmadi2020fair,
  title={Fair correlation clustering},
  author={Ahmadi, Saba and Galhotra, Sainyam and Saha, Barna and Schwartz, Roy},
  journal={arXiv preprint arXiv:2002.03508},
  year={2020}
}

@article{lancichinetti2012consensus,
  title={Consensus clustering in complex networks},
  author={Lancichinetti, Andrea and Fortunato, Santo},
  journal={Scientific reports},
  volume={2},
  number={1},
  pages={336},
  year={2012},
  publisher={Nature Publishing Group UK London}
}

@inproceedings{goder2008consensus,
  title={Consensus clustering algorithms: Comparison and refinement},
  author={Goder, Andrey and Filkov, Vladimir},
  booktitle={2008 Proceedings of the Tenth Workshop on Algorithm Engineering and Experiments (ALENEX)},
  pages={109--117},
  year={2008},
  organization={SIAM}
}

@article{monti2003consensus,
  title={Consensus clustering: a resampling-based method for class discovery and visualization of gene expression microarray data},
  author={Monti, Stefano and Tamayo, Pablo and Mesirov, Jill and Golub, Todd},
  journal={Machine learning},
  volume={52},
  pages={91--118},
  year={2003},
  publisher={Springer}
}

@article{wu2014k,
  title={K-means-based consensus clustering: A unified view},
  author={Wu, Junjie and Liu, Hongfu and Xiong, Hui and Cao, Jie and Chen, Jian},
  journal={IEEE transactions on knowledge and data engineering},
  volume={27},
  number={1},
  pages={155--169},
  year={2014},
  publisher={IEEE}
}

@article{BonizzoniVDJ08,
  author       = {Paola Bonizzoni and
                  Gianluca Della Vedova and
                  Riccardo Dondi and
                  Tao Jiang},
  title        = {On the Approximation of Correlation Clustering and Consensus Clustering},
  journal      = {J. Comput. Syst. Sci.},
  volume       = {74},
  number       = {5},
  pages        = {671--696},
  year         = {2008}
}

@article{filkov2004integrating,
  title={Integrating microarray data by consensus clustering},
  author={Filkov, Vladimir and Skiena, Steven},
  journal={International Journal on Artificial Intelligence Tools},
  volume={13},
  number={04},
  pages={863--880},
  year={2004},
  publisher={World Scientific}
}

@inproceedings{filkov2004heterogeneous,
  title={Heterogeneous data integration with the consensus clustering formalism},
  author={Filkov, Vladimir and Skiena, Steven},
  booktitle={International Workshop on Data Integration in the Life Sciences},
  pages={110--123},
  year={2004},
  organization={Springer}
}

@inproceedings{topchy2003combining,
  title={Combining multiple weak clusterings},
  author={Topchy, Alexander and Jain, Anil K and Punch, William},
  booktitle={Third IEEE international conference on data mining},
  pages={331--338},
  year={2003},
  organization={IEEE}
}

@article{kvrivanek1986np,
  title={NP-hard problems in hierarchical-tree clustering},
  author={K{\v{r}}iv{\'a}nek, Mirko and Mor{\'a}vek, Jaroslav},
  journal={Acta informatica},
  volume={23},
  pages={311--323},
  year={1986},
  publisher={Springer}
}

@inproceedings{swamy2004correlation,
  title={Correlation Clustering: maximizing agreements via semidefinite programming.},
  author={Swamy, Chaitanya},
  booktitle={SODA},
  volume={4},
  pages={526--527},
  year={2004},
  organization={Citeseer}
}

@article{ailon2008aggregating,
  title={Aggregating inconsistent information: ranking and clustering},
  author={Ailon, Nir and Charikar, Moses and Newman, Alantha},
  journal={Journal of the ACM (JACM)},
  volume={55},
  number={5},
  pages={1--27},
  year={2008},
  publisher={ACM New York, NY, USA}
}

@inproceedings{DK25,
  author       = {Debarati Das and
                  Amit Kumar},
  editor       = {Yossi Azar and
                  Debmalya Panigrahi},
  title        = {Breaking the Two Approximation Barrier for Various Consensus Clustering
                  Problems},
  booktitle    = {Proceedings of the 2025 Annual {ACM-SIAM} Symposium on Discrete Algorithms,
                  {SODA} 2025, New Orleans, LA, USA, January 12-15, 2025},
  pages        = {323--372},
  publisher    = {{SIAM}},
  year         = {2025}
}

@article{garey1975complexity,
  title={Complexity results for multiprocessor scheduling under resource constraints},
  author={Garey, Michael R and Johnson, David S.},
  journal={SIAM journal on Computing},
  volume={4},
  number={4},
  pages={397--411},
  year={1975},
  publisher={SIAM}
}

\clearpage

\newpage

\appendix

\section{Closest Fair Clustering for Equi-Proportion: Missing Details}

\begin{algorithm}[t]
\DontPrintSemicolon
\KwIn{Clustering $\mathcal{D}$}
\KwOut{A fair clustering $\m{N}$}
Let, $\m{N}^0 = \{ N^0_1, N^0_2, \ldots, N^0_\ell \} = \m{D}$. \;
\For{$i \gets 1$ \KwTo $\log |\chi|$}{
    $\m{N}^i = \m{N}^{i - 1}$ \;
    $j \gets 1$ \;
    \While{$j \neq |\chi| / 2^i$}{
        $ S_{j}, S_{j+1} \gets \emptyset $\;
        \ForEach{$N^{i}_a \in \mathcal{N}^{i}$}{
            $N^i_a \gets N^{i}_a \setminus T^j_a$\; 
            \If{$|B_j^i(N^{i}_a)| \geq |B_{j+1}^i(N^{i}_a)|$}{
                $S_j \gets S_{j} \cup T^j_a$\;
            }
            \Else{
                $S_{j+1} \gets S_{j+1} \cup T^j_a$\;
            }
        }
        $\m{N}^{i} = \m{N}^i \cup \texttt{multi-GM}(S_j, S_{j+1})$\;
        $j \gets j + 2^i$.
    }
}
\Return{$\m{N}^{\log |\chi|}$}
\caption{$\fptwo (\m{D})$}\label{algo:fair-power-of-two}
\end{algorithm}

\begin{algorithm}[t]
\DontPrintSemicolon
\KwIn{Two sets \texttt{Set1}, \texttt{Set2} of subsets of $V$, from blocks $B_j^i$ and $B_{j+1}^i$ respectively}
\KwOut{A set \texttt{Fair} of fair merged vertex sets}

\texttt{Fair} $\gets \emptyset$\;

\While{\texttt{Set1} $\neq \emptyset$ \textbf{and} \texttt{Set2} $\neq \emptyset$}{
    Let $S_1 \in \texttt{Set1}$\;
    Let $S_2 \in \texttt{Set2}$\;

    \If{$|S_1| \geq |S_2|$}{
        Let $S \subseteq S_1$ such that $|S| = |S_2|$ and $S$ contains equal number of vertices of each color in $B_j^i$.\;
        \texttt{Fair} $\gets$ \texttt{Fair} $\cup \{ S \cup S_2 \}$\;
        $S_1 \gets S_1 \setminus S$\;
        \texttt{Set2} $\gets$ \texttt{Set2} $\setminus \{S_2\}$\;
        \If{$S_1 = \emptyset$}{
            \texttt{Set1} $\gets$ \texttt{Set1} $\setminus \{S_1\}$\;
        }
    }
    \Else{
        Let $S \subseteq S_2$ such that $|S| = |S_1|$ and $S$ contains equal number of vertices of each color in $B_{j+1}^i$\;
        \texttt{Fair} $\gets$ \texttt{Fair} $\cup \{ S \cup S_1 \}$\;
        $S_2 \gets S_2 \setminus S$\;
        \texttt{Set1} $\gets$ \texttt{Set1} $\setminus \{S_1\}$\;
        \If{$S_2 = \emptyset$}{
            \texttt{Set2} $\gets$ \texttt{Set2} $\setminus \{S_2\}$\;
        }
    }
}
\Return{\texttt{Fair}}\;
\caption{\texttt{multi-GM}$(\texttt{Set1}, \texttt{Set2})$} \label{algo:greedymerge}
\end{algorithm}

\begin{algorithm}[t]
\DontPrintSemicolon
\KwIn{Initial clustering \( \mathcal{I} = \mathcal{F}^0 \), color set \( \zeta = \{ z_1, \ldots, z_r \} \), target proportions \( (p_1 : \cdots : p_r) \) \\ For each cluster $I_j \in \m{I}$ we have $z_k(I_j)$ is a multiple of $p_k$.}
\KwOut{Clustering \( \mathcal{F} \) where each cluster satisfies the global color ratio}

Let \( T \gets \lceil \log_2 r \rceil \)\;
Initialize blocks \( B^0_j \gets \{ z_j \} \) for all \( j \in [r] \)\;
Set \( \mathcal{F}^0 \gets \mathcal{I} \)\;

\For(\tcp*[f]{Iterate through block levels}){$t \gets 1$ \KwTo $T$}{
    Merge adjacent blocks from level \( t - 1 \) to form \( \{ B^t_1, B^t_2, \ldots \} \)\;

    Let \( m_{t-1} \gets \text{number of blocks at iteration } t - 1 \)\;
    
    \If{$m_{t-1}$ is odd}{
        Copy the last block as-is: \( B^t_{\lceil \#B^{t-1} / 2 \rceil} \gets B^{t-1}_{m_{t-1}} \)\;
    }
    
    Initialize \( \mathcal{F}^t \gets \emptyset \)\;
    
    \ForEach(\tcp*[f]{Iterate through clusters}){$F \in \mathcal{F}^{t-1}$}{
        \ForEach(\tcp*[f]{Iterate through blocks}){$B^t_i = B^{t-1}_{2i-1} \cup B^{t-1}_{2i}$}{
            Let \( A = B^{t-1}_{2i-1} = \{ z_{a_1}, \ldots, z_{a_s} \} \)\;
            Let \( B = B^{t-1}_{2i} = \{ z_{b_1}, \ldots, z_{b_u} \} \)\;
            
            Compute scaling factors:\;
            \Indp
                \( x \gets \min_{j \in [s]} \left( \frac{z_{a_j}(F)}{p_{a_j}} \right) \)\;
                \( y \gets \min_{k \in [u]} \left( \frac{z_{b_k}(F)}{p_{b_k}} \right) \)\;
            \Indm
            
            \uIf(\tcp*[f]{Case 1: need to merge}){$x > y$}{
                \ForEach{$k \in [u]$}{
                    Merge \( p_{b_k} \cdot (x - y) \) vertices of color \( z_{b_k} \) into \( F \)\;
                }
            }
            \uElseIf(\tcp*[f]{Case 2: need to cut}){$x < y$}{
                \ForEach{$k \in [u]$}{
                    Cut \( p_{b_k} \cdot (y - x) \) vertices of color \( z_{b_k} \) from \( F \)\;
                }
            }
            
            Add updated cluster \( F \) to \( \mathcal{F}^t \)\;
        }
    }
}

\Return{\( \mathcal{F}^T \)}
\caption{$\fmulti$}
\end{algorithm}

\begin{proof}[Proof of \cref{lem:fair-power-of-two}]
    To prove this lemma, we need to define the following term
    \begin{itemize}
        \item $i$th fairness constraint of algorithm $\fptwo$: A clustering $\m{C}$ is said to satisfy the $i$th fairness constraint of algorithm $\fptwo$ if, for every cluster $C \in \m{C}$ and for all pairs of distinct colors $c_k \ne c_\ell$ in the set $B^i_j$, the number of points of color $c_k$ in $C$ equals the number of points of color $c_\ell$ in $C$; that is,
        \[
            c_k(C) = c_\ell(C).
        \]
    \end{itemize}

    Now we prove the following claim
    \begin{claim}\label{clm:main}
        In the algorithm $\fptwo$, the clustering $\m{N}^i$ is $2$-close clustering to $\m{N}^{i-1}$ that satisfies the $i$th fairness constraint of $\fptwo$, where $\m{N}^0 = \m{D}$.
    \end{claim}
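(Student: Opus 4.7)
The plan is to prove \cref{clm:main} by induction on $i$. The base case $i=0$ holds vacuously: $\m{N}^0 = \m{D}$ satisfies the $0$th fairness constraint since every block is a singleton. For the inductive step, I assume $\m{N}^{i-1}$ satisfies the $(i-1)$th constraint, and I must establish both that $\m{N}^i$ satisfies the $i$th constraint and that $\dist(\m{N}^{i-1}, \m{N}^i) \le 2 \cdot \dist(\m{N}^{i-1}, \m{O})$, where $\m{O}$ denotes a clustering closest to $\m{N}^{i-1}$ among those satisfying the $i$th constraint.

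For the fairness part, I would fix a single combined block $B = B^{i-1}_{2j-1} \cup B^{i-1}_{2j}$ of size $2^i$. By the inductive hypothesis, every cluster $N \in \m{N}^{i-1}$ contains the colors of $B^{i-1}_{2j-1}$ in equal counts (say $x$ per color) and those of $B^{i-1}_{2j}$ in equal counts (say $y$ per color). The surplus $T^j_a$ excised by the algorithm consists of $|x-y|$ copies of every color of the larger sub-block, so the residual cluster holds $\min(x,y)$ copies of every color of $B$ and is therefore $B$-balanced. The excised surpluses are then recombined by \texttt{multi-GM}, which by construction produces new clusters each holding equal counts of every color in $B$. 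Repeating this argument across all block-pairs shows $\m{N}^i$ satisfies the full $i$th constraint.

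For the 2-approximation, the plan is to reduce each block-pair operation to the two-color equi-proportion closest-fair problem studied in~\cite{chakraborty2025towards}. Because $\m{N}^{i-1}$ satisfies the $(i-1)$th constraint, inside every cluster I can collapse all colors of $B^{i-1}_{2j-1}$ into a single meta-color (``red'') and all colors of $B^{i-1}_{2j}$ into another (``blue''); with this view, achieving the $i$th fairness constraint on $B$ becomes exactly red--blue balance, and any clustering satisfying the $i$th constraint projects to a balanced red--blue clustering of the same pair-distance from $\m{N}^{i-1}$. The cut step costs $|T^j_a| \cdot |N \setminus T^j_a|$ pairs per cluster, while any red--blue-balancing clustering must either eject the surplus of the larger sub-block (paying at least half of this cost) or inject corresponding deficient-color points into $N$. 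The \texttt{multi-GM} step is a greedy pairing between $S_j$ and $S_{j+1}$ whose merge cost matches the optimal merge cost up to a constant via an exchange argument analogous to the two-color greedy-merge analysis.

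The main obstacle I anticipate is the aggregation: iteration $i$ processes many disjoint block-pairs and every cluster contributes cuts and merges for each of them. I must argue that contributions to $\dist(\m{N}^{i-1}, \m{N}^i)$ from different block-pairs decompose along disjoint color coordinates so they sum without double-counting, and that the same decomposition yields a matching lower bound on $\dist(\m{N}^{i-1}, \m{O})$. Extending the two-color greedy-merge analysis to \texttt{multi-GM} is the second subtle point, since the subsets being paired carry an internal color structure inherited from the $(i-1)$th fairness; but because each subset's internal counts are rigidly fixed by sub-block symmetry, the pairing essentially behaves as in the two-color case. Once both are handled, the per-iteration factor of $2$ compounds as $(2+1)^{\log|\chi|} = |\chi|^{1.6}$ through the triangle-inequality calculation sketched right after the lemma statement, yielding \cref{lem:fair-power-of-two}.
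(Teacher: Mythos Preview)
Your overall plan (show fairness by induction, then show the 2-approximation per iteration) matches the paper's, but the mechanism you propose for the 2-approximation is different from the paper's and contains a real gap.

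The paper does \emph{not} reduce to the two-color result of \cite{chakraborty2025towards}. Instead it works directly with the aggregate surplus $T_a = \bigcup_k T_a^k$ of a cluster $N_a^{i-1}$ and proves two explicit bounds: an upper bound $\dist(\m{N}^{i-1},\m{N}^i)\le \sum_a |T_a|\,(|N_a^{i-1}|-|T_a|)+\tfrac12|T_a|^2$ and a lower bound $\dist(\m{N}^{i-1},\m{N}^{i^*})\ge \tfrac12\sum_a |T_a|\,(|N_a^{i-1}|-|T_a|)+|T_a|^2$, obtained by tracking how any clustering satisfying the $i$th constraint must split $N_a^{i-1}$ into pieces $X_1,\dots,X_t$ and import balancing sets $M_j$ of size at least the per-piece surplus $|S_j|$. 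The factor $2$ drops out by comparing these two closed-form expressions.

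The gap in your plan is the ``disjoint color coordinate'' decomposition. The pair-distance does \emph{not} decompose along block-pairs: a pair $(u,v)$ with $u\in T_a^j$ and $v$ of a color from a different block-pair $j'$ (whether $v\in T_a^{j'}$ or $v\in N_a^{i-1}\setminus T_a$) is separated by the algorithm and contributes to $\dist(\m{N}^{i-1},\m{N}^i)$, yet neither the block-$j$ nor the block-$j'$ two-color projection sees it. So summing per-block-pair two-color bounds undercounts the algorithm's cost, which is fatal for the upper-bound direction. Likewise, the two-color result in \cite{chakraborty2025towards} is stated for instances where \emph{every} point is red or blue; here the clusters contain many points of colors outside $B^{i-1}_{2j-1}\cup B^{i-1}_{2j}$, and those points alter both the cut and the merge costs. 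Your proposal acknowledges the aggregation issue but the fix you sketch (``disjoint color coordinates'') is precisely the step that fails. The clean way around it is the paper's: forget the per-block-pair reduction and argue directly in terms of the total surplus $T_a$.
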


    We will prove \cref{clm:main} later. First, let us prove \cref{lem:fair-power-of-two} assuming \cref{clm:main}.

     To prove \cref{lem:fair-power-of-two}, we need to prove $\m{N}^{\log |\chi|}$, the output of the algorithm $\fptwo$ is $O(\card{\chi}^{1.6})$-close to $\m{D}$. Let, $\m{N}^*$ be the closest fair clustering to $\m{D}$.

    We will prove using mathematical induction that after any iteration $i$ of the algorithm $\fptwo$ we have 

    \begin{align}
        \dist(\m{D}, \m{N}^i) \leq (3^i - 1) \dist(\m{D}, \m{N}^*) \label{equn:analysis-fptwo-one}
    \end{align}

    For the base case, that is when $i = 1$ by \cref{clm:main} we have $\m{N}^1$ is a $2$-close clustering to $\m{D}$ that satisfies the $1$st fairness constraint of algorithm $\fptwo$. Since $\m{N}^*$ also satisfies the $1$st fairness constraint we have
    \[
        \dist(\m{D}, \m{N}^1) \leq 2 \dist(\m{D}, \m{N}^*) 
    \]

    Now, let us assume \cref{equn:analysis-fptwo-one} is true for $i = k-1$, that is
    \[
        \dist(\m{D}, \m{N}^{k - 1}) \leq (3^{k - 1} - 1) \dist(\m{D}, \m{N}^*)
    \]

    Now, we prove it for $i = k$.

    \begin{align}
        \dist(\m{N}^{k - 1}, \m{N}^k) &\leq 2 \dist(\m{N}^{k - 1}, \m{N}^*) \label{equn:analysis-fptwo-two} \\
        &\leq 2(\dist(\m{N}^{k - 1}, \m{D}) + \dist(\m{D}, \m{N}^*)) \n \\ &\text{(triangle inequality)} \nonumber \\
        &\leq 2((3^{k - 1} - 1)\dist(\m{D}, \m{N}^*) + \dist(\m{D}, \m{N}^*)) \n \\ &\text{(by IH)} \nonumber \\
        &= 2 \cdot 3^{k - 1} \dist(\m{D}, \m{N}^*) \label{equn:analysis-fptwo-two-(i)}
    \end{align}

    Here, \cref{equn:analysis-fptwo-two} is true because $\m{N}^*$ also satisfies the $i$th fairness constraint of the algorithm $\fptwo$ and due to \cref{clm:main}.

    Now, we have
    \begin{align}
        \dist(\m{D}, \m{N}^k) &\leq \dist(\m{D}, \m{N}^{k - 1}) + \dist(\m{N}^{k - 1}, \m{N}^k) \n \\
        &\text{(triangle inequality)} \nonumber \\
        &\leq (3^{k - 1} - 1) \dist(\m{D}, \m{N}^*) + 2 \cdot 3^{k - 1} \dist(\m{D}, \m{N}^*) \nonumber \\
        &\text{(by IH and \cref{equn:analysis-fptwo-two-(i)})} \nonumber \\
        &= (3^k - 1) \dist(\m{D}, \m{N}^*) \label{equn:analysis-fptwo-three}
    \end{align}

    Hence, now we can conclude for $i = \log |\chi|$, 
    \begin{align}
        \dist(\m{D}, \m{N}^{\log |\chi|}) &\leq (3^{\log |\chi|} - 1) \dist(\m{D}, \m{N}^*) \n \\
        &= O(|\chi|^{1.6}) \dist(\m{D}, \m{N}^*) \, \, (\log_23 = 1.6)
    \end{align}

    Thus the algorithm $\fptwo$ computes a $O(\chi^{1.6})$-close $\fair$ which completes the proof of \cref{lem:fair-power-of-two}.

    Now, let us prove \cref{clm:main}. To prove \cref{clm:main}, we establish two intermediate claims: \cref{clm:lower-bound-opt} and \cref{clm:upper-bound-algo}.

    Let $\m{N}^{i^*}$ denote the closest clustering to $\m{N}^{i - 1}$ that satisfies the $i$th fairness constraint of algorithm $\fptwo$.

    In \cref{clm:lower-bound-opt}, we derive a lower bound on the distance $\dist(\m{N}^{i - 1}, \m{N}^{i^*})$, and in \cref{clm:upper-bound-algo}, we provide an upper bound on the distance $\dist(\m{N}^{i - 1}, \m{N}^i)$, where $\m{N}^i$ is the clustering obtained after the $i$th iteration of algorithm $\fptwo$, starting from the initial clustering $\m{D}$.

    By comparing the bounds from \cref{clm:lower-bound-opt} and \cref{clm:upper-bound-algo}, we will complete the proof of \cref{clm:main}.

    To formalize \cref{clm:lower-bound-opt}, lets recall and introduce some notations:

    \begin{itemize}
        \item Recall, $B^{i - 1}_k(N_a^{i - 1})$ denotes the set of data points in the cluster $N_a^{i - 1} \in \m{N}^{i - 1}$ whose colors belong to the block $B^{i - 1}_k$.
    
        \item Recall, $s(B_{2k - 1}^{i-1}(N_a^{i-1}), B_{2k}^{i - 1}(N_a^{i - 1})) = \Sn (say)$ denotes the surplus between $B_{2k - 1}^{i-1}$ and $B_{2k}^{i - 1}$ in cluster $N_a^{i-1}$. Specifically, 
        \begin{itemize}
            \item If $|B_{2k - 1}^{i-1}(N_a^{i-1})| \geq |B_{2k}^{i}(N_a^{i-1})|$, then
            \[
                \Sn \subseteq B_{2k - 1}^{i-1}(N_a^{i-1}), \quad |\Sn| = |B_{2k - 1}^{i-1}(N_a^{i-1})| - |B_{2k}^{i - 1}(N_a^{i-1})|,
            \]
            such that $\forall c_j \neq c_\ell \in B_{2k}^{i-1}$, we have $c_j(\Sn) = c_\ell(\Sn)$.
            \item Otherwise,
            \[
                \Sn \subseteq B_{j+1}^{i - 1}(N_a^{i-1}), \quad |\Sn| = |B_{j+1}^{i - 1}(N_a^{i-1})| - |B_j^{i-1}(N_a^{i-1})|,
            \]
            such that $\forall c_k \neq c_\ell \in B_{j+1}^{i - 1}$, we have $c_k(\Sn) = c_\ell(\Sn)$.
        \end{itemize}
        \item Let, $T_a$ denotes the union of the surpluses within the cluster $N_a^{i - 1}$, computed across all paired color blocks $(B_{2k - 1}^{i - 1}, B_{2k}^{i - 1})$ at iteration $i - 1$. More specifically,
        \begin{align*}
            T_a &= \bigcup_{k = 1}^{|\chi| / 2^{i}} \Sn
        \end{align*}
    \end{itemize}

\begin{claim}\label{clm:lower-bound-opt}
\begin{align}
\dist(\m{N}^{i - 1}, \m{N}^{i^*}) \geq 
\frac{1}{2}\sum_{N_a^{i - 1} \in \m{N}^{i - 1}}  
&|T_a| \cdot \left(\left|N_a^{i - 1}\right| - \left| T_a \right| \right) +  |T_a|^2 \nonumber
\end{align}
\end{claim}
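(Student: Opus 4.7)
The plan is to lower bound $\dist(\m{N}^{i-1},\m{N}^{i^*})$ by a cluster-by-cluster charging scheme that attributes to each $N_a^{i-1}$ at least $\tfrac{1}{2}\left[|T_a|\cdot(|N_a^{i-1}|-|T_a|)+|T_a|^2\right]$ many pair-differences. The key starting observation is that $\m{N}^{i-1}$ already satisfies the $(i-1)$-th fairness constraint of $\fptwo$, so inside each $N_a^{i-1}$ the colors of every $(i-1)$-block $B_{\ell}^{i-1}$ are perfectly balanced; the only obstruction to $N_a^{i-1}$ being internally fair under the $i$-th constraint is the imbalance between the two sub-blocks $(B_{2k-1}^{i-1},B_{2k}^{i-1})$ of each merged block $B_k^i$, and this imbalance is captured exactly by $\Sn$. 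Crucially, the colors appearing in $\Sn$ lie strictly in only one of the two sub-blocks, so surpluses at different $k$ use pairwise disjoint color subsets and cannot be reconciled by a shared exchange. Consequently, $|T_a|=\sum_k|\Sn|$ is the minimum ``mass'' in $N_a^{i-1}$ whose cluster membership must change in any $i$-fair clustering.

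Next I would fix an arbitrary $\m{N}^{i^*}$ satisfying the $i$-th fairness constraint, designate $F_a^{*}\in\m{N}^{i^*}$ as the cluster containing the largest chunk of $N_a^{i-1}$, and set $X_a=N_a^{i-1}\setminus F_a^{*}$ (points of $N_a^{i-1}$ leaving) and $Y_a=F_a^{*}\setminus N_a^{i-1}$ (points arriving from outside). A pigeonhole argument combining the $i$-th balance of $F_a^{*}$ with the per-sub-block balance that $N_a^{i-1}$ inherits from the $(i-1)$-th constraint forces both $|X_a|\geq|T_a|$ and $|Y_a|\geq|T_a|$: informally, the surplus of each $k$ must be exported (contributing to $X_a$) and simultaneously replaced by an imported deficit (contributing to $Y_a$), and the disjointness of surpluses across $k$ prevents the exchanges from overlapping. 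From this, each pair $(u,v)$ with $u\in X_a$, $v\in N_a^{i-1}\setminus X_a$ is a separation pair, yielding at least $|T_a|\cdot(|N_a^{i-1}|-|T_a|)$ pairs; and restricting attention to pairs $(u,v)$ with $u\in Y_a$ and $v$ in the matched retained chunk of $N_a^{i-1}\cap F_a^{*}$ yields at least $|T_a|^2$ merging pairs chargeable to $N_a^{i-1}$.

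Summing over $a$, every separation pair has both endpoints inside a unique $N_a^{i-1}$ and is counted once, while every merging pair $(u,v)$ with $u\in N_a^{i-1}$ and $v\in N_b^{i-1}$ is charged once from $a$ and once from $b$, which yields the overall factor $\tfrac{1}{2}$ in the stated bound. The main obstacle I anticipate is rigorously establishing the pigeonhole inequalities $|X_a|,|Y_a|\geq|T_a|$ with genuine disjointness across the different block-pair surpluses in one cluster: this requires carefully leveraging the fact that each $(i-1)$-block of $\m{N}^{i-1}$ is internally balanced, so that any imported or exported subset is itself constrained in its sub-block composition, ruling out an economical exchange that simultaneously cures multiple different $k$-surpluses with a single transferred point.
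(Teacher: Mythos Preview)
Your charging scheme has a genuine gap: the pigeonhole step ``both $|X_a|\geq|T_a|$ and $|Y_a|\geq|T_a|$'' is false in general, and with it the separate accounting of $|T_a|(|N_a^{i-1}|-|T_a|)$ separation pairs and $|T_a|^2$ merging pairs collapses.

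For a concrete counterexample to $|X_a|\geq|T_a|$: take $N_a^{i-1}$ with $10$ points of color $c_1$ and $5$ of $c_2$ (one block pair, $|T_a|=5$), and let $F_a^{*}$ be all $15$ points together with $5$ extra $c_2$ points from outside. Then $|X_a|=0$ while $|T_a|=5$; no surplus is ``exported''. Symmetrically, $|Y_a|\geq|T_a|$ fails: with the same $N_a^{i-1}$, let $F_a^{*}$ consist of exactly $5$ points of $c_1$ and the $5$ points of $c_2$ from $N_a^{i-1}$, and ship the remaining $5$ points of $c_1$ elsewhere. Then $|Y_a|=0$ while $|T_a|=5$; your merging-pair count inside $F_a^{*}$ is zero and the $|T_a|^2$ term is unattributed. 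In both cases the claimed bound is still \emph{true}, but the missing pairs live in clusters of $\m{N}^{i^*}$ other than your single chosen $F_a^{*}$, which your scheme never looks at.

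The paper avoids this by \emph{not} singling out a dominant target cluster. It takes the full partition $N_a^{i-1}=X_1\cup\cdots\cup X_t$ induced by $\m{N}^{i^*}$, defines a per-piece surplus $S_j\subseteq X_j$ with $\sum_j|S_j|=|T_a|$, and for each piece exhibits a set $M_j\subseteq N_{r_j}^{i^*}\setminus N_a^{i-1}$ of size $\geq |S_j|$ that the $i$-th fairness of $N_{r_j}^{i^*}$ forces to be present. The two cost terms are then obtained piecewise: $\costone$ from pairs between $X_j\setminus S_j$ and either the other pieces' surpluses or $M_j$, and $\costtwo$ from pairs between $S_j$ and either other surpluses or $M_j$. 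In the second counterexample above, the $|T_a|^2$ merging pairs are exactly between the exported $5$ points and the $M_j$ in their \emph{new} cluster, which your single-$F_a^{*}$ scheme cannot see. To repair your argument you would essentially have to reintroduce this per-piece decomposition.
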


\begin{proof}
    Consider a cluster $N_a^{i - 1} \in \m{N}^{i - 1}$. Suppose in $\m{N}^{i^*}$ the cluster $N_a^{i - 1}$ is partitioned into $X_1, X_2, \ldots, X_t$, more specifically, 

    \begin{itemize}
        \item For all $j \in [t]$, $X_j \subseteq N_{r_j}^{i^*}$ for some $N_{r_j}^{i^*} \in \m{N}^{i^*}$.
        \item For all $j \neq \ell \in [t]$, we have $N_{r_j}^{i^*} \neq N_{r_\ell}^{i^*}$.
        \item $\bigcup_{j \in [t]} X_j = N_a^{i - 1}$.
    \end{itemize}

    By abuse of notation, let us define $B^{i - 1}_{2k - 1}(X_j)$ and $B^{i - 1}_{2k}(X_j)$ be the set of points in $X_j$ that has a color from the blocks $B^{i - 1}_{2k - 1}$ and $B^{i - 1}_{2k}$ respectively. 

    WLOG assume, $|B^{i - 1}_{2k - 1}(N_a^{i - 1})| > |B^{i - 1}_{2k}(N_a^{i - 1})|$.

    Recall, the blocks are created in such a way that for two consecutive blocks $B^{i - 1}_{2k - 1}$ and $B^{i - 1}_{2k}$, we have 
    \[
        |B^{i - 1}_{2k - 1}| = |B^{i - 1}_{2k}|.
    \]

    Let's create arbitrary pairing of colors $(c,\hat{c})$ where $c \in B^{i - 1}_{2k - 1}$ and $\hat{c} \in B^{i - 1}_{2k}$, we call $\hat{c} \in B^{i - 1}_{2k}$ a color corresponding to $c \in B^{i - 1}_{2k - 1}$.

    \begin{itemize}
        \item \textbf{Surplus between two colors $c \in B^{i - 1}_{2k - 1}$ and $\hat{c} \in B^{i - 1}_{2k - 1}$ in a cluster $N_a^{i - 1}$} is defined as
        \[
            s^a(c,\hat{c}) \subseteq B^{i - 1}_{2k - 1}(N_a^{i - 1})
        \]
        of size $\max(0, c(N_a^{i - 1}) - \hat{c}(N_a^{i - 1}))$
        \item \textbf{Surplus between two colors $c \in B^{i - 1}_{2k - 1}$ and $\hat{c} \in B^{i - 1}_{2k}$ for a partition $X_j$} is defined as
        \[
            \sigma^j_c \subseteq B^{i - 1}_{2k - 1}(X_j)
        \]
        of size $\max(0, c(X_j) - \hat{c}(X_j))$
        It is straightforward to see that 
        \[
            \sum_{j = 1}^t |\sigma^j_c| \geq |s^a(c,\hat{c})|
        \]

        Let, us assume $y \in [t]$ be an index such that
    \[
        \sum_{j = 1}^{y - 1}|\sigma^j_c| < |s^a(c,\hat{c})| \leq \sum_{j = 1}^y |\sigma^j_c|
    \]

    Again assume, 
    \[
    \hat{\sigma^y_c} \subseteq \sigma^y_c
    \]

    such that,

    \[
        \sum_{j = 1}^{y - 1}|\sigma^j_c| + |\hat{\sigma^y_c}| = |s^a(c,\hat{c})|
    \]

    Let us now redefine the notation $\sigma^j_c$ for $j \in [t]$

    \[
    \sigma^j_c := 
        \begin{cases}
        \sigma^j_c & \text{if } j < y, \\
        \hat{\sigma^y_c} & \text{if } j = y, \\
        \emptyset & \text{if } j > y.
        \end{cases}
    \]

    That is, the sets $\sigma^j_c$ from $j = 1$ to $(y - 1)$ remains unchanged. We shrink the set $\sigma^y_c$ to include only as many elements as needed to make the summation $|s^a(c,\hat{c})|$. We ignore the sets $\sigma^j_c$ from $j = (y + 1)$ to $t$ completely. 

        \item \textbf{Surplus with respect to consecutive pair of blocks $B^{i - 1}_{2k - 1}$ and $B^{i - 1}_{2k}$ for a partition $X_j$} is defined as
        \[
            \Sx = \bigcup_{c \in B^{i - 1}_{2k - 1}} \sigma^j_c
        \]
        \item \textbf{Surplus with respect to a partition $X_j$} is defined as
        \[
            S_j = \bigcup_{k = 1}^{m_{i - 1}} S^k_j
        \]
        where $m_{i - 1}$ is the number of blocks created at iteration $(i - 1)$.
    \end{itemize}
    It is straightforward to see that
    \[
        \sum_{j = 1}^t |S_j| = T_a
    \]

    Now, since $\m{N}^{i^*}$ satisfies $i$th fairness constraint of $\fptwo$, hence in $N_{r_j}^{i^*} \in \m{N}^{i^*}$ we have,
    
    \[
        B_{2k - 1}^{i - 1}(N_{r_j}^{i^*}) = B_{2k}^{i - 1}(N_{r_j}^{i^*}). 
    \]

    Recall, $X_j \subseteq N_{r_j}^{i^*}$, hence there must exist at least $|\Sx|$ vertices having colors from the color block $B^{i - 1}_{2k}$ in $N_{r_j}^{i^*}$ that belongs to clusters other than $N_a^{i - 1}$. Let us denote this set of vertices by $M^k_j$. More specifically,

    \begin{itemize}
        \item $M^k_j \subseteq N_{r_j}^{i^*}$ such that following conditions are satisfied.
        \begin{enumerate}
            \item $M^k_j \cap N_a^{i - 1} = \emptyset$.
            \item $|M^k_j| = |\Sx|$.
            \item The vertices in $M^k_j$ have colors from the color block $B^{i - 1}_{2k}$.
        \end{enumerate}
    \end{itemize}

    Let,
    \[
        M_j = \bigcup_{k = 1}^{m_{i - 1}} M^k_j
    \]
    Note, since $|M^k_j| = |\Sx|$ we also have $|M_j| = |S_j|$.

    Let us also define
    \[
        M(N_a^{i - 1}) = \bigcup_{j = 1}^t M_j
    \]

    By the definition of $\dist(\m{N}^{i - 1}, \m{N}^{i^*})$ is the number of pairs $(u,v)$ that are separated in $\m{N}^{i - 1}$ and together in $\m{N}^{i^*}$ or viceversa. Formally, we say that $(u,v)$ are together in $\m{N}^{i^*}$ if there exists $N_k^{i^*} \in \m{N}^{i^*}$ such that $u,v \in N_k^{i^*}$ and we say $(u,v)$ are separated in $\m{N}^{i^*}$ if there exists $N_k^{i^*}, N_j^{i^*} \in \m{N}^{i^*}$ such that $k \neq j$, $u \in N_k^{i^*}$ and $v \in N_j^{i^*}$

    Now, to lower bound $\dist(\m{N}^{i - 1}, \m{N}^{i^*})$ let us define some costs.
    \begin{itemize}
        \item $\costone(N_a^{i - 1}):$ For a cluster $N_a^{i - 1} \in \m{N}^{i - 1}$, $\costone(N_a^{i - 1})$ denotes the number of pairs $(u,v)$ such that 
        \begin{enumerate}[(a)]
            \item $u \in (N_a^{i - 1} \setminus T_a)$ and $v \in T_a$ but separated in $\m{N}^{i^*}$  or,
            \item $u \in (N_a^{i - 1} \setminus T_a$ and $v \in M(N^{i - 1}_a)$.
        \end{enumerate}
        
        \item $\costtwo(N_a^{i - 1}):$ For a cluster $N_a^{i - 1} \in \m{N}^{i - 1}$, $\costtwo(N_a^{i - 1})$ denotes the number of pairs $(u,v)$ such that
        \begin{enumerate}[(a)]
            \item $u, v \in T_a$ but separated in $\m{N}^{i^*}$  or,
            \item $u \in T_a$ and $v \in M(N_a^{i - 1})$.
        \end{enumerate}
    \end{itemize}

    We can verify that the pairs counted in $\costone(N_a^{i - 1}, k)$, and $\costtwo(N_a^{i - 1}, k)$ are disjoint and thus we have.

    \begin{align}
        \dist(\m{N}^{i - 1}, \m{N}^{i^*}) \geq &\sum_{N_a^{i - 1} \in \m{N}^{i - 1}} \frac{1}{2} \costone(N_a^{i - 1}) \n \\ &+ \frac{1}{2}\costtwo(N_a^{i - 1}) \label{equ:clm-lower-bound-opt}
    \end{align}

    We multiply $\costone(N_a^{i - 1})$ and $\costtwo(N_a^{i - 1})$ with $1/2$ to avoid overcounting of pairs. In both the costs we count the pairs $(u,v)$ in $N_a^{i - 1}$ and $M(N_a^{i - 1})$. This pair $(u,v)$ may be counted twice because we take summation overall $N_a^{i - 1} \in \m{N}^{i - 1}$.

    Now, to prove \cref{clm:lower-bound-opt}, we prove the following

    \begin{enumerate}
        \item $\costone(N_a^{i - 1}) \geq |T_a| \left(\left|N_a^{i - 1}\right| - \left| T_a \right| \right)$.
        \item $\costtwo(N_a^{i - 1}) \geq \left|T_a \right|^2$.
    \end{enumerate}

    It is easy to see that combining \cref{equ:clm-lower-bound-opt} and the above statements will prove this \cref{clm:lower-bound-opt}. So, let us now prove the above statements.

    \begin{claim}
        $\costone(N_a^{i - 1}) \geq |T_a| \left(\left|N_a^{i - 1}\right| - |T_a| \right)$
    \end{claim}

    \begin{proof}
        Recall in $N^{i^*}$, the cluster $N_a^{i - 1}$ is partitioned into $X_1, \ldots, X_t$.

        Now, consider a partition $X_j$ of $N_a^{i-1}$ where $j \in [t]$. Let us count the number of pairs $(u,v)$ such that $u \in X_j \setminus S_j$ and $v \in S_\ell$ for $\ell \neq j$. The number of such pairs is

        \begin{align}
	   \left|X_j \setminus S_j\right| \sum_{\ell \neq j}|S_\ell| \label{eq:cost-one-eq-one}
        \end{align}

        Let us also count the number of pairs $(u,v)$ such that $ u \in X_j \setminus S_j$ and $ v \in M_j$. The number of such pairs is 

        \begin{align}
            &|X_j \setminus S_j| |M_j| \n \\
            = &|X_j \setminus S_j| |S_j| \label{eq:cost-one-eq-two} 
        \end{align}

        Now, combining \cref{eq:cost-one-eq-one} and \cref{eq:cost-one-eq-two} we get the number of pairs $(u,v)$ such that,

        \[
            (u,v) \in X_j \setminus S_j \times S_\ell \, \, \text{for $\ell \neq j$}
        \]

        or

        \[
            (u,v) \in X_j \setminus S_j \times M_j
        \]

        is

        \begin{align}
            &\left|X_j \setminus S_j\right| \sum_{\ell \neq j}|S_\ell| \n \\
            + &|X_j \setminus S_j| |S_\ell| \n \\
            = &\left|X_j \setminus S_j\right| |T_a| \label{eq:cost-one-eq-three}
        \end{align}

        Now from \cref{eq:cost-one-eq-three} we get,

        \begin{align}
            \costone(N_a^{i - 1}, k) &\geq \sum_{j = 1}^t \left|X_j \setminus S_j\right| |T_a| \n \\
            &= |T_a| \sum_{j = 1}^t \left|X_j \setminus S_j\right| \n \\
            &= \left|T_a\right| \left(\left|N_a^{i - 1}\right| - |T_a|\right) \n
        \end{align}
 
    \end{proof}
    
    \begin{claim}
        $\costtwo(N_a^{i - 1}) \geq \left|T_a \right|^2$
    \end{claim}

    \begin{proof}
        Consider a partition $X_j$ of $N_a^{i-1}$ where $j \in [t]$. Let us count the number of pairs $(u,v)$ such that $u \in S_j$ and $v \in S_\ell$ for $\ell \neq j$. The number of such pairs is

        \begin{align}
	   \left|S_j\right| \sum_{\ell \neq j}|S_\ell| \label{eq:cost-two-eq-one}
        \end{align}

        Let us also count the number of pairs $(u,v)$ such that $ u \in S_j$ and $ v \in M_j$. The number of such pairs is 

        \begin{align}
            &|S_j| |M_j| \n \\
            = &|S_j| |S_j| \label{eq:cost-two-eq-two} 
        \end{align}

        Now, combining \cref{eq:cost-two-eq-one} and \cref{eq:cost-two-eq-two} we get the number of pairs $(u,v)$ such that,

        \[
            (u,v) \in S_j \times S_\ell \, \, \text{for $\ell \neq j$}
        \]

        or

        \[
            (u,v) \in S_j \times M_j
        \]

        is

        \begin{align}
            &\left|S_j\right| \sum_{\ell \neq j}|S_\ell|
            + |S_j| |S_j| \n \\
            = &\left|S_j\right| |T_a| \label{eq:cost-two-eq-three}
        \end{align}

        Now from \cref{eq:cost-two-eq-three} we get,

        \begin{align}
            \costtwo(N_a^{i - 1}) &\geq \sum_{j = 1}^t \left|S_j\right| |T_a| \n \\
            &= |T_a| \sum_{j = 1}^t \left|S_j\right| \n \\
            &= \left|T_a\right| \left|T_a\right|\n \\
            &= \left|T_a\right|^2 \n
        \end{align}
    \end{proof}
    
\end{proof}

\begin{claim}\label{clm:upper-bound-algo}
    \begin{align}
        \dist(\m{N}^{i - 1}, \m{N}^i) \leq &\sum_{N_a^{i - 1} \in \m{N}^{i - 1}} \left|T_a\right| \left(\left|N_a^{i - 1}\right| - \left| T_a \right| \right) \n \\
        &+ \frac{1}{2} \left|T_a \right|^2 \n
    \end{align}
\end{claim}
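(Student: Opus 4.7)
My plan is to bound $\dist(\m{N}^{i-1}, \m{N}^i)$ via a vertex-by-vertex accounting of changed cluster-mates, and then exploit a simple structural property of \texttt{multi-GM} to close the argument. Recall that for any two clusterings $\m{A}, \m{B}$ of the same point set,
\[
   2\dist(\m{A}, \m{B}) \;=\; \sum_{v\in V} \bigl| C_{v}^{\m{A}} \,\triangle\, C_{v}^{\m{B}} \bigr|,
\]
where $C_{v}^{\m{A}}$ denotes the cluster-mates of $v$ in $\m{A}$ (excluding $v$ itself). I would partition $V$ into the \emph{residuals} $N_a^{i-1}\setminus T_a$ and the \emph{surpluses} $T_a$ across all $N_a^{i-1} \in \m{N}^{i-1}$, and upper bound each vertex's contribution separately.

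For a residual vertex $v \in N_a^{i-1}\setminus T_a$, we have $C_v^{\m{N}^{i-1}} = N_a^{i-1}\setminus\{v\}$ and $C_v^{\m{N}^i} = (N_a^{i-1}\setminus T_a)\setminus\{v\}$, so its contribution is exactly $|T_a|$; aggregating over residuals gives $\sum_a |T_a|(|N_a^{i-1}|-|T_a|)$. For a surplus vertex $v \in T_a$, let $M_{r(v)}$ denote the \texttt{multi-GM}-output cluster containing $v$. Since $M_{r(v)}$ is composed entirely of surplus points, $C_v^{\m{N}^{i-1}}\cap C_v^{\m{N}^i}$ has size $|T_a \cap M_{r(v)}|-1$, so the contribution equals $|N_a^{i-1}| + |M_{r(v)}| - 2|T_a \cap M_{r(v)}|$. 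Using the identity $|M_r| = \sum_a |T_a\cap M_r|$ to regroup terms, the total expression for $2\dist(\m{N}^{i-1}, \m{N}^i)$ simplifies to
\[
   2\sum_a |T_a|\,|N_a^{i-1}| \;-\; \sum_a |T_a|^2 \;+\; \Bigl(\sum_r |M_r|^2 \;-\; 2\sum_{a,r} |T_a \cap M_r|^2\Bigr).
\]

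The main obstacle I anticipate is bounding the bracketed term by $0$, and my key observation is structural. Inspecting \cref{algo:greedymerge}, in each iteration \texttt{multi-GM} selects exactly one subset $S_1$ from \texttt{Set1} and one subset $S_2$ from \texttt{Set2} and appends $S\cup S_2$ (or $S_1\cup S$) to the output; thus every $M_r$ is a subset of $T_{a_1}^k \cup T_{a_2}^k$ for exactly two original clusters $a_1, a_2$. Hence in the decomposition $|M_r| = \sum_a |T_a\cap M_r|$ at most two summands are nonzero, and the elementary bound $(x+y)^2 \leq 2(x^2 + y^2)$ yields $|M_r|^2 \leq 2\sum_a |T_a\cap M_r|^2$. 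Summing over $r$ shows the bracketed term is non-positive, and dividing through by $2$ gives the claimed inequality $\dist(\m{N}^{i-1}, \m{N}^i) \leq \sum_a |T_a|(|N_a^{i-1}|-|T_a|) + \tfrac{1}{2}|T_a|^2$.
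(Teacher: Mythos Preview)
Your argument is correct. The route you take differs from the paper's in its bookkeeping, though both rest on the same structural property of \texttt{multi-GM}.

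The paper counts disagreement pairs directly: first the pairs between $N_a^{i-1}\setminus T_a$ and $T_a$ (yielding the $|T_a|(|N_a^{i-1}|-|T_a|)$ term), and then, after $T_a$ splits into pieces $R_1,\dots,R_t$, it uses that each $R_i$ is merged with \emph{exactly} $|R_i|$ points from another cluster (the equal-halves property of \texttt{multi-GM}) together with a $\tfrac12$-charging scheme to get $\sum_i \tfrac12|R_i|^2 + \tfrac12|R_i||T_a\setminus R_i| = \tfrac12|T_a|^2$. Your argument instead starts from the vertex-sum identity $2\dist(\m{A},\m{B})=\sum_v|C_v^{\m{A}}\triangle C_v^{\m{B}}|$, reorganizes everything into $\sum_r|M_r|^2 - 2\sum_{a,r}|T_a\cap M_r|^2$, and kills that term via $(x+y)^2\le 2(x^2+y^2)$ using only the weaker fact that each $M_r$ meets at most two of the $T_a$'s. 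Your approach is a bit more systematic and uses a slightly weaker hypothesis on \texttt{multi-GM} (two parents rather than equal halves); the paper's is a more direct combinatorial count. Both arrive at the same bound with comparable effort, and neither buys anything the other doesn't.
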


\begin{proof}
    In the algorithm $\fptwo$, from each cluster $N_a^{i - 1} \in \m{N}^{i - 1}$ we cut the set $T_a$.

    Hence, the pairs $(u, v)$ s.t. $u \in N_a^{i - 1} \setminus T_a$ and $v \in T_a$ are counted in $\dist(\m{N}^{i - 1}, \m{N}^i)$. 

    The number of such pairs is
    \begin{align}
        \left|T_a\right| \left(\left|N_a^{i - 1}\right| - \left| T_a \right| \right) \label{eq:upper-bound-equn-one}
    \end{align}

    Again in~\cref{algo:greedymerge} the set $T_a$ can further get splitted into multiple subsets $R_1, R_2, \ldots, R_t$ (say). Each of these sets $R_i$ for $i \in [t]$ gets merged with $|R_i|$ points from a different cluster.

    Hence, the following pairs $(u,v)$ are counted in $\dist(\m{N}^{i - 1}, \m{N}^i)$ which satisfies 

    \begin{enumerate}
        \item $u \in R_i$ and $v$ belongs to the set that merged to $R_i$.
        \item $u \in R_i$ and $v \in R_j$ for $i \neq j$.
    \end{enumerate}

    To count such pairs $(u,v)$ we use a charging scheme, we charge $1/2$ for the vertex $u$ and $1/2$ for the vertex $v$. That is we define for a set $R_i$. 

    \begin{align}
        \pay{R_i} = \frac{1}{2} |\{ (u,v) \mid &u \in R_i \, \, \text{and $v \in$ the set merged to $R_i$} \n \\
        &\text{ or $u \in R_i$ and $v \in R_j$ for $i \neq j$}\} |\n
    \end{align}

    The total number of such pairs is
    \begin{align}
        \sum_{i = 1}^t \pay{R_i} &= \sum_{i = 1}^t \frac{1}{2} |R_i|^2 + \frac{1}{2} |R_i||(T_a \setminus R_i)| \n \\
        &= \frac{1}{2} |T_a| \sum_{i = 1}^t |R_i| \n \\
        &= \frac{1}{2} |T_a|^2 \label{eq:upper-bound-equn-two}
    \end{align}

    By \cref{eq:upper-bound-equn-one} and \cref{eq:upper-bound-equn-two} we get

    \begin{align}
        \dist(\m{N}^{i - 1}, \m{N}^i) \leq &\sum_{N_a^{i - 1} \in \m{N}^{i - 1}} \left|T_a\right| \left(\left|N_a^{i - 1}\right| - \left| T_a \right| \right) \n \\
        &+ \frac{1}{2} \left|T_a \right|^2 \n
    \end{align}
\end{proof}

It is straightforward to see that \cref{clm:lower-bound-opt} and \cref{clm:upper-bound-algo} proves \cref{clm:main}.

\end{proof}

\begin{proof}[Proof of \cref{lem:analyze-fmulti}]
    To prove this lemma, let us define the $t$th fairness constraint of $\fmulti$ algorithm.

    \begin{itemize}
        \item $t$th fairness constraint of $\fmulti$ : Let \( \mathcal{C} \) be a clustering and let \( \{ B_k^t \} \) denote the color blocks at the $t$th iteration in the $\fmulti$ algorithm. We say that \( \mathcal{C} \) satisfies the \( t \)th Fairness Constraint of $\fmulti$ if, for every cluster \( C_i \in \mathcal{C} \) and every block \( B_k^t \), the color counts satisfy:

\[
z_{a_1}(C_i) : z_{a_2}(C_i) : \cdots : z_{a_m}(C_i) = p_{a_1} : p_{a_2} : \cdots : p_{a_m}
\]
where $B_k^t = \{ z_{a_1}, \ldots, z_{a_m} \}$

    \end{itemize}

To prove \cref{lem:analyze-fmulti}, we need to prove the following claim

\begin{claim}\label{clm:each-step-4-approx}
    For all iterations $t$ of the algorithm $\fmulti$, $\m{F}^t$ is a $6$-close clustering to $\m{F}^{t - 1}$ that satisfies the $t$th fairness constraint of $\fmulti$. More specifically,
    \[
        \dist(\m{F}^{t - 1}, \m{F}^t) \leq 6 \dist(\m{F}^{t - 1}, \m{F}^{t^*}) 
    \]
    where $\m{F}^{t^*}$ is the closest clustering to $\m{F}^{t - 1}$ that satisfies $t$th fairness constraint of $\fmulti$.
\end{claim}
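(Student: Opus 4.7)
The plan is to mirror the structure of the proof of Claim~\ref{clm:main} (the per-iteration 2-approximation for $\fptwo$), but account for the additional complications introduced by the variable block sizes and by the algorithm's use of both cut and merge operations, which together blow up the step-wise constant from 2 to 6. Fix iteration $t$ and consider any cluster $F \in \m{F}^{t-1}$. For each newly-formed block $B_i^t = B_{2i-1}^{t-1}\cup B_{2i}^{t-1}$ with sub-block scaling factors $x = \min_{z_{a_j}\in A} z_{a_j}(F)/p_{a_j}$ and $y = \min_{z_{b_k}\in B} z_{b_k}(F)/p_{b_k}$, define an adjustment set $\Delta^i_F$: if $x<y$, let $\Delta^i_F \subseteq F$ consist of the $\sum_k p_{b_k}(y-x)$ points of colors in $B$ that the algorithm cuts; if $x>y$, let $\Delta^i_F$ be the $\sum_k p_{b_k}(x-y)$ points of colors in $B$ that it imports into $F$ from other clusters. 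Write $T_F = \bigcup_i \Delta^i_F$ for the aggregate adjustment associated with $F$.

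For the upper bound on $\dist(\m{F}^{t-1},\m{F}^t)$, I would adapt the charge-sharing argument of Claim~\ref{clm:upper-bound-algo}. Every newly broken or created pair has at least one endpoint in some $\Delta^i_F$. Splitting into (i) pairs with one endpoint in $F\setminus T_F$ and the other in $T_F$ or in the external donation block, and (ii) pairs with both endpoints in the re-routed pieces, a standard half-charging shows the total cost summed over $F \in \m{F}^{t-1}$ is at most $\sum_F \bigl[|T_F|(|F|-|T_F|) + \tfrac{1}{2}|T_F|^2\bigr]$, which is the same functional form as in the $\fptwo$ analysis.

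For the lower bound on $\dist(\m{F}^{t-1},\m{F}^{t^*})$, I would look at how $\m{F}^{t^*}$ partitions each $F$ into pieces $X_1,\ldots,X_\ell$ with $X_j \subseteq F^*_{r_j}$. Since $\m{F}^{t^*}$ satisfies the $t$th fairness constraint, within each $F^*_{r_j}$ the scaling factors of $A$ and $B$ must agree, so each $X_j$ must either shed surplus or absorb deficit to bring the $A$-to-$B$ ratio in line. A case analysis, depending on whether the $X_j$'s collectively cut from $A$, cut from $B$, merge into $A$, or merge into $B$, reproduces sub-sums analogous to $\costone(N_a^{i-1})$ and $\costtwo(N_a^{i-1})$ from Claim~\ref{clm:lower-bound-opt}, yielding a lower bound of the form $\tfrac{1}{6}\bigl[|T_F|(|F|-|T_F|) + |T_F|^2\bigr]$ after accounting for the worst-case mismatch between the algorithm's and the optimum's choice of which side (from $A$ or from $B$) to re-route. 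Comparing the two bounds gives the factor of $6$.

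The main obstacle is precisely the asymmetry between cut and merge, and the fact that $|A|$ and $|B|$ (measured as $\sum p_{a_j}$ vs.\ $\sum p_{b_k}$) may differ. The algorithm locally chooses whichever operation addresses the sign of $x-y$ on the $B$ side, whereas the optimum may re-route points of colors in $A$ instead, or split the re-routing across both sub-blocks. The book-keeping has to set up a charging scheme in which (a)~no pair in $\dist(\m{F}^{t-1},\m{F}^{t^*})$ is double-counted across different block pairs $B_i^t$ (because adjustments for distinct $i$ act on disjoint color classes inside $F$, this turns out to be fine), and (b)~every unit of algorithm cost is matched to at least a $1/6$-unit of optimum cost irrespective of whether the optimum makes the same cut-or-merge choice as the algorithm; the worst case of the latter is where factors of $2$ enter from the cut-vs.-merge decision, from the half-charging of intra-$T_F$ pairs, and from the $|A|$-vs.-$|B|$ size asymmetry, compounding to $6$. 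Once Claim~\ref{clm:each-step-4-approx} is established, the global $O(r^{2.81})$ bound follows from the same telescoping triangle-inequality argument already used in the proof of Lemma~\ref{lem:fair-power-of-two}, since an $\alpha$-factor per step compounds to $(\alpha+1)^{\log r}$ over $\log r$ iterations, and $\alpha=6$ gives $7^{\log r}=r^{\log_2 7}$.
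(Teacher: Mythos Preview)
Your high-level plan---upper-bound the algorithm's per-step cost, lower-bound the optimum's per-step cost, then compare---matches the paper. But the execution diverges from the paper in a way that creates a genuine gap.

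The central problem is that you collapse the two kinds of adjustments into a single set $T_F$: for some block pairs $(A,B)$ the algorithm \emph{cuts} surplus $B$-points out of $F$, while for other block pairs it \emph{merges} external $B$-points into $F$. These two operations have different cost profiles, and the paper tracks them via two separate scalars $T_a$ (total surplus cut) and $D_a$ (total deficit merged). Your upper-bound expression $|T_F|(|F|-|T_F|)+\tfrac12|T_F|^2$ is not well-formed once $T_F$ contains both internal and external points: the quantity $|F|-|T_F|$ no longer has a coherent meaning, and the cross-terms between cut and merged points are not accounted for. The paper's correct upper bound is $T_a(|F|-T_a)+D_a(|F|-D_a)+\tfrac12 T_a^2+\tfrac12 D_a^2$, with four distinct terms.

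The lower-bound side has the same defect plus an additional missing ingredient. The paper proves \emph{three} independent lower bounds on $\dist(\m{F}^{t-1},\m{F}^{t^*})$: one of the form $\tfrac14\sum_a[T_a(|F_a|-T_a)+D_a(|F_a|-T_a)]$, one of the form $\tfrac12\sum_a T_a^2$, and one of the form $\tfrac12\sum_a D_a^2$. The factor $6$ then arises additively as $4+1+1$, not multiplicatively from ``three factors of $2$ compounding.'' Your single claimed lower bound $\tfrac16[|T_F|(|F|-|T_F|)+|T_F|^2]$ is asserted without argument, and the case analysis you allude to is where the real work lies. In particular, the paper handles the $|A|$-vs-$|B|$ asymmetry not by absorbing a constant factor, but by introducing a vertex weight $w(v)=p_{\hat{j}}/p_j\le 1$ on colors in the larger sub-block; this weighting is what forces the optimum to pay at least $S_\ell$ in each partition piece $X_\ell$ even when the optimum re-routes $A$-points rather than $B$-points. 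Without that device, your ``case analysis, depending on whether the $X_j$'s collectively cut from $A$, cut from $B$, merge into $A$, or merge into $B$'' does not obviously yield a bound comparable to $|T_F|$.
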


For now, let us assume \cref{clm:each-step-4-approx} and prove \cref{lem:analyze-fmulti}.

\paragraph{Proof of \cref{lem:analyze-fmulti}:} To prove \cref{lem:analyze-fmulti}, we need to prove $\m{F}^{\log r}$, the output of the algorithm $\fmulti$ is $r^{2.8}$-close to $\m{I}$. Let $\m{F}^*$ be the closest fair clustering to $\m{I}$.

    We will prove using mathematical induction that after any iteration $t$ of the algorithm $\fmulti$ we have 

    \begin{align}
        \dist(\m{I}, \m{F}^t) \leq (7^t - 1) \dist(\m{I}, \m{F}^*) \label{equn:analysis-fptwo-one-multi}
    \end{align}

    For the base case, that is when $t = 1$ by \cref{clm:each-step-4-approx} we have $\m{F}^1$ is a $6$-close clustering to $\m{I}$ that satisfies the $1$st fairness constraint of algorithm $\fmulti$. Since $\m{F}^*$ also satisfies the $1$st fairness constraint we have
    \[
        \dist(\m{D}, \m{F}^1) \leq 6\dist(\m{D}, \m{F}^*) 
    \]

    Now, let us assume \cref{equn:analysis-fptwo-one-multi} is true for $t = k-1$, that is
    \[
        \dist(\m{I}, \m{F}^{k - 1}) \leq (7^{k - 1} - 1) \dist(\m{I}, \m{F}^*)
    \]

    Now, we prove it for $t = k$.

    \begin{align}
        \dist(\m{F}^{k - 1}, \m{F}^k) &\leq 6 \dist(\m{F}^{k - 1}, \m{F}^*)\label{equn:analysis-fmulti-two} \\
        &\leq 6 (\dist(\m{F}^{k - 1}, \m{I}) + \dist(\m{I}, \m{F}^*)) \n \\ &(\text{by triangle inequality}) \n \\
        &\leq 6 \cdot (7^{k - 1} - 1) \dist(I,\m{F}^*) + 6 \dist(I,\m{F}^*) \n \\
        &(\text{by inductive hypothesis}) \n \\
        &= 6 \cdot 7^{k - 1} \dist(\m{I},\m{F}^*) \label{equn:analysis-fmulti-two-(i)}
    \end{align}

    Here, \cref{equn:analysis-fmulti-two} is true because $\m{F}^*$ also satisfies the $k$th fairness constraint of the algorithm $\fmulti$ and due to \cref{clm:each-step-4-approx}.

    Now, we have
    \begin{align}
        \dist(\m{I}, \m{F}^k) &\leq \dist(\m{I}, \m{F}^{k - 1}) + \dist(\m{F}^{k - 1}, \m{F}^k) \n \\
        &(\text{by triangle inequality})\n \\
        &\leq (7^{k - 1} - 1) \dist(\m{I}, \m{F}^*) + 6 \cdot 7^{k - 1} \dist(\m{I}, \m{F}^*)  \n \\
        & (\text{by inductive hypothesis and \cref{equn:analysis-fmulti-two-(i)}}) \n \\
        &= (7^k - 1) \dist(\m{I}, \m{F}^*) \label{equn:analysis-fmulti-three}
    \end{align}

    Hence, now we can conclude for $t = \log r$, 
    \begin{align}
        \dist(\m{I}, \m{F}^{\log r}) &\leq O(7^{\log r}) \dist(\m{I}, \m{F}^*) \n \\
        &= O(r^{2.8}) \dist(\m{I}, \m{F}^*) \, \, (\textbf{as} \log_27 = 2.8)
    \end{align}

    Thus the algorithm $\fmulti$ computes a $O(r^{2.8})$-close $\fair$ which completes the proof of \cref{lem:analyze-fmulti}.

    Now, let us prove \cref{clm:each-step-4-approx}. To prove \cref{clm:each-step-4-approx}, we establish four intermediate claims: \cref{clm:lower-bound-opt-multi-one}, \cref{clm:lower-bound-opt-multi-two}, \cref{clm:lower-bound-opt-new} and \cref{clm:upper-bound-algo-multi}.

    Let $\m{F}^{t^*}$ denote the closest clustering to $\m{F}^{t - 1}$ that satisfies the $t$th fairness constraint of algorithm $\fmulti$.

    In \cref{clm:lower-bound-opt-multi-one}, \cref{clm:lower-bound-opt-multi-two} and \cref{clm:lower-bound-opt-new} we derive lower bounds on the distance $\dist(\m{F}^{t - 1}, \m{F}^{t^*})$, and in \cref{clm:upper-bound-algo-multi}, we provide an upper bound on the distance $\dist(\m{F}^{t - 1}, \m{F}^t)$, where $\m{F}^t$ is the clustering obtained after the $t$th iteration of algorithm $\fmulti$, starting from the initial clustering $\m{I}$.

    By comparing the bounds from \cref{clm:lower-bound-opt-multi-one}, \cref{clm:lower-bound-opt-multi-two}, \cref{clm:lower-bound-opt-new} and \cref{clm:upper-bound-algo-multi}, we will complete the proof of \cref{clm:each-step-4-approx}.

    To state \cref{clm:lower-bound-opt-multi-one}, we define the surplus and deficit for a cluster \( F_a^{t - 1} \in \mathcal{F}^{t - 1} \).

    Recall \( \mathcal{F}^{t - 1} \) denote the clustering obtained after the \((t - 1)\)th iteration of the $\fmulti$ algorithm and \( \{ B^{t - 1}_1, B^{t - 1}_2, \ldots, B^{t - 1}_{m_{t - 1}} \} \) be the set of color blocks at this iteration. For a color \( c_u \in \chi \), recall, \( p_u \) denote its proportion in the vertex set \( V \), i.e.,
\[
c_1(V): c_2(V): \cdots : c_{|\chi|}(V) = p_1 : p_2 : \cdots : p_{|\chi|}.
\]

WLOG, we assume $p_1 > p_2 >\cdots > p_{|\chi|}$. Note that due to this assumption for two colors $c_u \in B^{t - 1}_{j}$ and $c_v \in B^{t - 1}_{k}$ where $k > j$ we have $p_v < p_u$.

We define the following for a cluster \( F_a^{t - 1} \in \mathcal{F}^{t - 1} \):

Recall the color blocks $B_{2k -1}^{t - 1}$ and $B_{2k}^{t - 1}$ are created in such a way that we have $|B^{t - 1}_{2k - 1}| \geq |B^{t - 1}_{2k}|$. Suppose $\tilde{B^{t - 1}_{2k - 1}} \subseteq B^{t - 1}_{2k - 1}$ such that $|\tilde{B^{t - 1}_{2k - 1}}| = |B^{t - 1}_{2k}|$.

We create an arbitrary pair of colors $(c_j, c_{\hat{j}})$ where $c_j \in \tilde{B^{t - 1}_{2k - 1}}$ and $c_{\hat{j}} \in B^{t - 1}_{2k}$. We say $c_{\hat{j}} \in B^{t - 1}_{2k}$ is a color corresponding to $c_j \in B^{t - 1}_{2k - 1}$. Note for the colors present in $B^{t - 1}_{2k - 1} \setminus \tilde{B^{t - 1}_{2k - 1}}$ we have no corresponding color.

Now we define the following terms

\begin{itemize}
    \item \textbf{Weight of a vertex $v$:} For $v \in V$, suppose it has a color $c_j \in B^{t - 1}_{2k - 1}$ and its corresponding color $c_{\hat{j}} \in B^{t - 1}_{2k}$.

    In this case, since $p_j > p_{\hat{j}}$ we define 
    \[
        w(v) = \frac{p_{\hat{j}}}{p_j}
    \]
    otherwise if $v$ has color $c_{\hat{j}} \in B_{2k}^{t - 1}$ and its corresponding color $c_j \in B_{2k - 1}^{t - 1}$ then we define
    \[
        w(v) = 1
    \]
    Note, $w(v) \leq 1$ for all $v \in V$.
    \item \textbf{Surplus w.r.t. two corresponding colors $c_j$ and $c_{\hat{j}}$ in a cluster $F_a^{t - 1} \in \m{F}^{t - 1}$} is defined as
    \[
        s^a(c_j, c_{\hat{j}}) = \max( 0, c_{\hat{j}}(F_a^{t - 1}) - \frac{p_{\hat{j}}}{p_j}c_j(F_a^{t - 1}))
    \]
    \item \textbf{Surplus w.r.t. consecutive blocks $B^{i - 1}_{2k - 1}$ and $B^{i - 1}_{2k}$ in a cluster $F_a^{t - 1} \in \m{F}^{t - 1}$} is defined as
    \[
        T^k_a = \sum_{c_j \in B^{i - 1}_{2k - 1}} s^a(c_j, c_{\hat{j}})
    \]
    \item \textbf{Surplus of a cluster $F_a^{t - 1} \in \m{F}^{t - 1}$} is defined as
    \[
        T_a = \sum_{k = 1}^{m_{t - 1}} T^k_a
    \]
    where $m_{t - 1}$ is the number of blocks at iteration $(t - 1)$.
    \item \textbf{deficit w.r.t. two corresponding colors $c_j$ and $c_{\hat{j}}$ in a cluster $F_a^{t - 1} \in \m{F}^{t - 1}$} is defined as
    \[
        d^a(c_j, c_{\hat{j}}) = \max (0, \frac{p_{\hat{j}}}{p_j}c_j(F_a^{t - 1}) - c_{\hat{j}}(F_a^{t - 1}))
    \]
    \item \textbf{deficit w.r.t. consecutive blocks $B^{i - 1}_{2k - 1}$ and $B^{i - 1}_{2k}$ in a cluster $F_a^{t - 1} \in \m{F}^{t - 1}$} is defined as
    \[
        D^k_a = \sum_{c_j \in B^{i - 1}_{2k - 1}} d^a(c_j, c_{\hat{j}})
    \] 
    \item \textbf{deficit of a cluster $F_a^{t - 1} \in \m{F}^{t - 1}$} is defined as
    \[
        D_a = \sum_{k = 1}^{m_{t - 1}} D^k_a
    \]
\end{itemize}

    Now, we can state \cref{clm:lower-bound-opt-multi-one}.

    \begin{claim}\label{clm:lower-bound-opt-multi-one}
        \begin{align}
        \dist(\m{F}^{t - 1}, \m{F}^{t^*}) \geq \frac{1}{4} &\sum_{F_a^{t - 1} \in \m{F}^{t - 1}}  T_a \left(\left|F_a^{t - 1}\right| - T_a \right) \n \\
        &+ D_a \left(\left|F_a^{t - 1}\right| -  T_a \right) \n
    \end{align}
    \end{claim}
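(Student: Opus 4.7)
The plan is to adapt the structural argument of~\cref{clm:lower-bound-opt} to the weighted, multi-proportion setting, handling surplus and deficit simultaneously. The underlying intuition is unchanged: for each cluster $F_a^{t-1} \in \m{F}^{t-1}$ and each block pair $(B_{2k-1}^{t-1}, B_{2k}^{t-1})$, the fairness of $\m{F}^{t^*}$ on the combined block $B_k^t$ forces the algorithm to either cut surplus points out of $F_a^{t-1}$ (creating many separated pairs between the surplus and the rest of the cluster) or to merge deficit-filling points in from other clusters (creating many newly-together pairs that were previously separated).

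First I would fix $F_a^{t-1}$ and consider the partition it induces in $\m{F}^{t^*}$: $F_a^{t-1} = X_1 \cup \cdots \cup X_s$ with $X_j \subseteq F_{r_j}^{t^*}$ for pairwise distinct $F_{r_j}^{t^*} \in \m{F}^{t^*}$. For every $X_j$ and every block pair $k$, I would define a local within-piece surplus $\sigma_j^k \subseteq X_j$ (points whose colors are over-represented relative to the proportions on $B_k^t$) and a local within-piece deficit amount $\delta_j^k$, directly mirroring the definitions of $s^a(c_j,c_{\hat{j}})$ and $d^a(c_j,c_{\hat{j}})$ restricted to $X_j$. Truncating as in the proof of~\cref{clm:lower-bound-opt}, the totals satisfy $\sum_{j} |\sigma_j^k| \geq T_a^k$ and $\sum_{j} \delta_j^k \geq D_a^k$. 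Because $\m{F}^{t^*}$ is fair on $B_k^t$, each piece $X_j$ with deficit $\delta_j^k$ must be completed inside $F_{r_j}^{t^*}$ by a set $M_j^k$ of at least $\delta_j^k$ points of the required colors drawn from outside $F_a^{t-1}$.

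Next I would define two disjoint cost buckets analogous to $\costone$ and $\costtwo$. Bucket one counts pairs $(u,v)$ where $u$ lies in the stable part $X_j \setminus (\sigma_j^k \cup \text{deficit-part of } X_j)$ and $v$ lies either in some $\sigma_\ell^k$ with $\ell \neq j$ (separated in $\m{F}^{t^*}$ but together in $\m{F}^{t-1}$), or in $M_j^k$ (together in $\m{F}^{t^*}$ but separated in $\m{F}^{t-1}$). Bucket two handles the remaining cross pairs between surplus, deficit part, and incoming $M_j^k$. Summing bucket one across $j$ and then across $F_a^{t-1}$, and using that the stable mass in each $X_j$ is $|X_j|$ minus its surplus-and-deficit portion, collapses to the two terms $T_a(|F_a^{t-1}| - T_a)$ and $D_a(|F_a^{t-1}| - T_a)$ after bounding the total surplus-plus-deficit mass by $T_a + D_a$ and dropping the nonnegative $T_a^2$, $D_a^2$, and $T_a D_a$ residuals.

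The main obstacle will be the weighting $w(v) = p_{\hat{j}}/p_j \leq 1$. Since $T_a$ and $D_a$ are weighted sums rather than raw cardinalities, I need each inequality to go in the direction that physical points are \emph{at least} as numerous as the weighted surplus/deficit, which is precisely what $w(v) \leq 1$ buys; care is needed to verify this for every occurrence. The prefactor $\frac{1}{4}$ should decompose as one factor of $\frac{1}{2}$ from unordered-pair overcounting in the outer sum $\sum_{F_a^{t-1}}$ (each witnessed pair can be charged from either endpoint's cluster), and a further factor of $\frac{1}{2}$ needed to cleanly disentangle the surplus-labeled pairs from the deficit-labeled pairs when the same $F_a^{t-1}$ contains both kinds of imbalance across different block pairs. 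Arguing this disjointness carefully so that no pair is double-charged, while simultaneously absorbing the weight loss, is the most delicate step and is where I expect the bulk of the technical work to lie.
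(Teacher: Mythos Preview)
Your high-level plan---partition each $F_a^{t-1}$ into pieces $X_1,\dots,X_s$ according to $\m{F}^{t^*}$, localize the surplus to each piece, and then charge separated/merged pairs---is exactly what the paper does. But two specific points in your outline are off and would not go through as written.

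First, the phrase ``deficit-part of $X_j$'' treats $D_a$ as if it were a removable subset of points inside the cluster, parallel to the surplus set. It is not: $D_a$ (and each $d^a(c_j,c_{\hat j})$) counts \emph{missing} points of the under-represented colors, so there is nothing to remove from $X_j$. The paper handles deficit by a different mechanism: it first separates block pairs into a cut set $\cb$ (surplus present) and a merge set $\mb$ (deficit present), so that a given pair $(B_{2k-1}^{t-1},B_{2k}^{t-1})$ in a given cluster is in exactly one of the two. For $\mb$, the paper does \emph{not} decompose the deficit across pieces; instead, for each piece $X_\ell$ it defines the externally-merged set $\hat M_\ell\subseteq F_{r_\ell}^{t^*}\setminus F_a^{t-1}$ and the escaped sets $G_m\subseteq X_m$ (points of the over-represented block that landed in other pieces), and uses the global inequality $|\hat M_\ell|+\sum_{m\neq\ell}|G_m|\ge D_a$. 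This asymmetry between surplus and deficit is the key structural idea you are missing.

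Second, your accounting for the factor $\tfrac14$ misidentifies the second $\tfrac12$. One $\tfrac12$ indeed comes from the outer sum over clusters (a pair $(u,v)$ with endpoints in two different $F_a^{t-1}$ can be charged from either side). The other $\tfrac12$ is \emph{not} from disentangling surplus-labelled pairs from deficit-labelled pairs; in the paper it comes from the inner sum $\sum_{\ell}$ over pieces, because cross-piece pairs $X_m\times X_\ell$ appear once for $\ell$ and once for $m$. The paper's $\costone$ and $\costtwo$ each already carry this inner $\tfrac12$, and are then added and fed into the outer $\tfrac12$. Your ``disentanglement'' story does not produce the right constant and would leave you unable to justify adding the surplus and deficit contributions without further overcounting.
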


    \begin{proof}
        Consider a cluster $F_a^{t - 1} \in \m{F}^{t - 1}$. Suppose in $\m{F}^{t^*}$ the cluster $F_a^{t - 1}$ is partitioned into $X_1, X_2, \ldots, X_s$, more specifically, 

    \begin{itemize}
        \item For all $j \in [t]$, $X_j \subseteq F_{r_j}^{t^*}$ for some $F_{r_j}^{t^*} \in \m{F}^{t^*}$.
        \item For all $j \neq \ell \in [t]$, we have $F_{r_j}^{t^*} \neq F_{r_\ell}^{t^*}$.
        \item $\bigcup_{j \in [s]} X_j = F_a^{t - 1}$.
    \end{itemize}

    According to the definition of $\dist(\m{F}^{t - 1}, \m{F}^{t^*})$, in $\dist(\m{F}^{t - 1}, \m{F}^{t^*})$ we count the number of pairs $(u,v)$ that are present in different clusters in the clustering $\m{F}^{t - 1}$ but in the same cluster in the clustering $\m{F}^{t^*}$ or present in the same cluster in the clustering $\m{F}^{t - 1}$ but present in different clusters in the clustering $\m{F}^{t^*}$.

    To give a lower bound on $\dist(\m{F}^{t - 1}, \m{F}^{t^*})$ we calculate the cost incurred by each cluster $F_a^{t - 1} \in \m{F}^{t - 1}$. Let us describe this formally,
    \begin{align}
        \pay{F_a^{t - 1}} = |&\{(u,v) \mid (u,v \in F_a^{t - 1} \land u \nsim_{\m{F}^{t^*}} v) \n \\
        & \lor (u \in F_a^{t - 1}, v \notin F_a^{t - 1} \land u \sim_{\m{F}^{t^*}} v) \}| \n
    \end{align}
    where $u \sim_{\m{F}^{t^*}} v$ denotes $u$ and $v$ belongs to the same cluster in the clustering $\m{F}^{t^*}$.

    It is straightforward to see that 
    \begin{align}
        \dist(\m{F}^{t - 1}, \m{F}^{t^*}) \geq \frac{1}{2} \sum_{F_a^{t - 1} \in \m{F}^{t - 1} }\pay{F_a^{t - 1}} \label{eq:main-multi}
    \end{align}

    In the above expression, we multiply by $1/2$ because we took summation overall $F_a^{t - 1} \in \m{F}^{t - 1} $. To count a pair $(u,v)$ where $u \in F_a^{t - 1}$ and $v \in F_b^{t - 1}$ for $a \neq b$ we charge $1/2$ when calculating $\pay{F_a^{t - 1}}$ and again $1/2$ when calculating $\pay{F_b^{t - 1}}$. 

    Now, we provide a lower bound on $\pay{F_a^{t - 1}}$. To provide this lower bound, we need to define some terms.

    \begin{itemize}
    \item \textbf{Weight of a color block:}
    \[
    w(B_j^{t - 1}) = \sum_{c_u \in B_j^{t - 1}} p_u.
    \]
    
    \item \textbf{Scaling factor of a color block in cluster \( F_a^{t - 1} \):}
    \[
    h(B_j^{t - 1}) = \frac{c_u(F_a^{t - 1})}{p_u} \quad \text{for any } c_u \in B_j^{t - 1},
    \]
    which is well-defined since
    \[
    \frac{c_v(F_a^{t - 1})}{p_v} = \frac{c_u(F_a^{t - 1})}{p_u} \quad \forall c_u \neq c_v \in B_j^{t - 1}.
    \]
    \end{itemize}

    Create two sets $\cb$ and $\mb$.
    \[
        \cb = \{ (B_i^{t - 1}, B_{i + 1}^{t - 1}) \mid h(B_{i + 1}^{t - 1}) > h(B_i^{t - 1})\}
    \]
    
    \[
        \mb = \{ (B_i^{t - 1}, B_{i + 1}^{t - 1}) \mid h(B_{i + 1}^{t - 1}) \leq h(B_i^{t - 1})\}
    \]
    Informally, the set $\cb$ contains a pair of color blocks $B_i^{t - 1}$ and $B_{i + 1}^{t + 1}$ if the scaling factor of $B_{i + 1}^{t + 1}$ is greater than the scaling factor of $B_{i}^{t + 1}$. In the algorithm $\fmulti$, we cut the surplus for these types of pairs of color blocks. For the pair of color blocks in $\mb$ we merge the deficit.

    Now, for the pairs of color blocks $(B_{i}^{t + 1}, B_{i + 1}^{t + 1}) \in \cb$ let us define some notations w.r.t. a partition $X_j$.

    \begin{itemize}
    \item Surplus w.r.t. two corresponding colors $c_j$ and $c_{\hat{j}}$ in a partition $X_\ell$ is defined as

    \[ 
        \sigma^\ell_{c_j} = c_{\hat{j}}(X_\ell) - \frac{p_{\hat{j}}}{p_j} c_j(X_\ell)
    \]
    
    Here, $(c_j, c_{\hat{j}}) \in B^{t - 1}_{2k - 1} \times B^{t - 1}_{2k}$ where $(B^{t - 1}_{2k - 1}, B^{t - 1}_{2k}) \in \cb$.

    It is straightforward to see that,
    \[
        s^a(c_j, c_{\hat{j}}) \geq \sum_{\ell = 1}^s \sigma^\ell_{c_j}
    \]
    Similar to the proof of \cref{lem:fair-power-of-two} we can redefine $\sigma^\ell_{c_j}$ in such a way that
    \[
        s^a(c_j, c_{\hat{j}}) = \sum_{\ell = 1}^s \sigma^\ell_{c_j}
    \]
    \item Surplus w.r.t. two consecutive blocks $B^{i - 1}_{2k - 1}$ and $B^{i - 1}_{2k}$ in a partition $X_\ell$ is defined as
    \[
        S^k_{\ell} = \sum_{c_j \in B^{t - 1}_{2k - 1}} \sigma^\ell_{c_j}
    \]
    \item Surplus of the partition $X_\ell$ is
    \[
        S_\ell = \sum_{B^{t - 1}_{2k - 1} \in \cb} S^k_\ell
    \]
    It is straightforward to see that 
    \[
        T_a = \sum_{\ell = 1}^s S_\ell.
    \]
\end{itemize}
    Since, $\m{F}^{t^*}$ satisfies the $t$th fairness constraint of the algorithm $\fmulti$ we have for a cluster $F_{r_\ell}^{t^*} \in \m{F}^{t^*}$
    \[
        \dfrac{c_u(F_{r_\ell}^{t^*})}{p_{u}}= \dfrac{c_v(F_{r_\ell}^{t^*})}{p_{v}}\, \,  \forall c_u \in B^{t - 1}_{2k - 1}, c_v \in B^{t - 1}_{2k}
    \]
    recall $X_\ell \subseteq F_{r_\ell}^{t^*} \in \m{F}^{t^*}$.

    Let, $M_\ell \subseteq  F_{r_\ell}^{t^*}\setminus F^{t-1}_{a} $

    To maintain the $t$th fairness constraint of the algorithm $\fmulti$ we must have
    \[
        |M_\ell| \geq \sum_{v \in M_\ell} w(v) \geq S_\ell
    \]
    Reasons behind the above inequalities
    \begin{itemize}
        \item \textbf{$1$st inequality:} $w(v) \leq 1$.
        \item \textbf{$2$nd inequality:} This follows from the requirement that the cluster $F^{t^*}_{r_\ell}$ must satisfy the $t$th fairness constraint of the algorithm $\fmulti$. That is, for every pair $(c_j, c_{\hat{j}}) \in B^{t - 1}_{2k - 1} \times B^{t - 1}_{2k}$, we must ensure that after adding the set $M_\ell$ to $X_\ell$ in the cluster $F^{t^*}_{r_\ell}$, the following holds:
\[
    \sum_{v \in C_j(X_\ell \cup M_\ell)} w(v) = \sum_{v \in C_{\hat{j}}(X_\ell \cup M_\ell)} w(v)
\]
where $C_r(S)$ denotes the set of vertices of color $c_r$ in a subset $S \subseteq V$. In other words, the total weight of the vertices of color $c_j$ and $c_{\hat{j}}$ in the updated cluster must be equal, for every such pair. The surplus $ S_\ell$ precisely captures the imbalance in these weights in $X_\ell$, and hence the total weight added must be at least $S_\ell$ to restore this balance.
    \end{itemize}
    Let, us define $\costone(F_a^{t - 1})$ be the number of pairs $(u,v)$ such that
    \begin{enumerate}
        \item Either $u \in M_\ell$ and $v \in X_\ell$
        \item or $u \in X_m$ and $v \in X_\ell$ for $m \neq \ell$.
    \end{enumerate}
    Hence,
    \begin{align}
        &\costone(F_a^{t - 1}) \n \\
        &\geq \frac{1}{2}\sum_{\ell = 1}^s\left(|M_\ell|(|X_\ell|) + \sum_{m \neq \ell}|X_m|(|X_\ell|)\right) \n \\
        &\geq \frac{1}{2} \sum_{\ell = 1}^s \left(S_\ell(|X_\ell| - S_\ell) + \sum_{m \neq \ell}S_m(|X_\ell| - S_\ell)\right) \n \\
        &\geq \frac{1}{2} \sum_{\ell = 1}^s T_a (|X_\ell| - S_\ell) \n \\
        &\geq \frac{1}{2} T_a (|F_a^{t - 1}| - T_a) \label{eq:cost-one-multi} 
    \end{align}

    Similarly, now for a pair of color blocks $(B_{2k - 1}^{t - 1}, B_{2k}^{t + 1}) \in \mb$ and for a partition $X_\ell$, let us define
    \begin{itemize}
        \item $\hat{M}_\ell \subseteq V \setminus F_a^{t - 1}$ that serves the deficit for the color blocks $B_{2k - 1}^{t - 1}$ and $B_{2k}^{t - 1}$ in the cluster $F_{r_\ell}^{t^*}$.
        \item $G_m = \bigcup_{(B_{2k-1}^{t - 1}, B_{2k}^{t - 1}) \in \mb} F_{r_m}^{t^*} \cap B_{2k - 1}^{t - 1}(F_a^{t - 1})$ : it denotes the part of the set $B_{2k - 1}^{t - 1}(F_a^{t - 1})$ that lies in the cluster $F_{r_m}^{t^*} \in \m{F}^{t^*}$ for $m \neq \ell$
    \end{itemize}

    It is straightforward to see that 
    \[
        |\hat{M}_\ell| + \sum_{m \neq \ell} |G_m| \geq D_a
    \]
    Let, us define $\costtwo(F_a^{t - 1})$ be the number of pairs $(u,v)$ such that
    \begin{enumerate}
        \item Either $u \in \hat{M}_\ell$ and $v \in X_\ell$
        \item or $u \in G_m$ and $v \in X_\ell$ for $m \neq \ell$.
    \end{enumerate}
    Hence,
    \begin{align}
        \costtwo(F_a^{t - 1}) &\geq \frac{1}{2}\sum_{\ell = 1}^s|\hat{M}_\ell||X_\ell| \n \\
        &+ \sum_{m \neq \ell}|G_m||X_\ell| \n \\
        &\geq \frac{1}{2}\sum_{j = 1}^s D_a (|X_\ell| - |S_\ell|) \n \\
        &\geq \frac{1}{2} D_a (|F_a^{t - 1}| - T_a) \label{eq:cost-two-multi} 
    \end{align}
    Since, in $\costone(F_a^{t - 1})$ and $\costtwo(F_a^{t - 1})$ we count disjoint pairs. Hence, 
    \begin{align}
        \pay{F_a^{t - 1}} \geq \costone(F_a^{t - 1}) + \costtwo(F_a^{t - 1}) \label{eq:pay-single-cluster}
    \end{align}

    Now. by \cref{eq:main-multi}, \cref{eq:cost-one-multi}, \cref{eq:cost-two-multi} and \cref{eq:pay-single-cluster} we conclude the proof of \cref{clm:lower-bound-opt-multi-one}.
    
\end{proof}

   \begin{claim}\label{clm:lower-bound-opt-multi-two}
        \begin{align}
        \dist(\m{F}^{t - 1}, \m{F}^{t^*}) \geq &\sum_{F_a^{t - 1} \in \m{F}^{t - 1}} \frac{1}{2}T_a^2  \n
    \end{align}
\end{claim}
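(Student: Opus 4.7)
The plan is to mirror the second half of the proof of Claim \ref{clm:lower-bound-opt} (the $|T_a|^2$ argument in the equi-proportion case), but in the weighted $\fmulti$ setting, charging only those pairs whose two endpoints both lie in surplus regions so that the family I count here stays disjoint from the one already used in Claim \ref{clm:lower-bound-opt-multi-one}.

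Concretely, I would fix a cluster $F_a^{t-1} \in \m{F}^{t-1}$ and reuse all the notation from the proof of Claim \ref{clm:lower-bound-opt-multi-one}: the partition $X_1,\ldots,X_s$ of $F_a^{t-1}$ induced by $\m{F}^{t^*}$, the surplus subsets $S_\ell \subseteq X_\ell$ with $\sum_\ell |S_\ell| = T_a$, and the sets $M_\ell \subseteq F_{r_\ell}^{t^*} \setminus F_a^{t-1}$ that witness the key inequality $|M_\ell| \ge S_\ell$. I would then define a third quantity $\costthree(F_a^{t-1})$, in the same ordered-pair convention used by $\pay$, that counts pairs $(u,v)$ of two types: (a) $u \in S_\ell$ and $v \in S_m$ with $\ell \ne m$, which are co-clustered in $\m{F}^{t-1}$ but separated in $\m{F}^{t^*}$; and (b) $u \in S_\ell$ and $v \in M_\ell$, which are separated in $\m{F}^{t-1}$ but co-clustered in $\m{F}^{t^*}$. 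Every pair counted here has both endpoints in surplus regions, which is exactly what makes it disjoint from the pairs charged in $\costone$ and $\costtwo$ of Claim \ref{clm:lower-bound-opt-multi-one}, whose inner factors were $|X_\ell| - |S_\ell|$ (non-surplus sizes).

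The arithmetic is then the same collapse as in the $\fptwo$ analysis. Case (a) contributes $\sum_\ell \sum_{m \ne \ell} |S_\ell|\,|S_m|$ to $\costthree(F_a^{t-1})$, while case (b) contributes $\sum_\ell |S_\ell|\,|M_\ell| \ge \sum_\ell |S_\ell|^2$ by the previously established bound $|M_\ell| \ge S_\ell$. Adding these yields
\[
\costthree(F_a^{t-1}) \;\ge\; \sum_\ell |S_\ell|^2 \,+\, \sum_{\ell \ne m} |S_\ell|\,|S_m| \;=\; \Bigl(\sum_\ell |S_\ell|\Bigr)^{\!2} \;=\; T_a^{\,2}.
\]
Summing over $F_a^{t-1} \in \m{F}^{t-1}$ and applying the same halving normalization $\dist(\m{F}^{t-1}, \m{F}^{t^*}) \ge \tfrac{1}{2} \sum_{F_a^{t-1}} \pay{F_a^{t-1}} \ge \tfrac{1}{2} \sum_{F_a^{t-1}} \costthree(F_a^{t-1})$ used in Claim \ref{clm:lower-bound-opt-multi-one} (which corrects for each cross-cluster pair being visible from two different $F_a^{t-1}$'s) gives the desired inequality $\dist(\m{F}^{t-1}, \m{F}^{t^*}) \ge \sum_{F_a^{t-1}} \tfrac{1}{2} T_a^{\,2}$.

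The main subtlety I expect is the disjointness bookkeeping: one must verify that the pairs counted by $\costthree$ never coincide with those counted by $\costone$ or $\costtwo$ in Claim \ref{clm:lower-bound-opt-multi-one}, so that these lower bounds may later be combined additively with the forthcoming Claim \ref{clm:lower-bound-opt-new}. Once that is handled, no new ideas beyond those already present in the $\fptwo$ analysis of the $|T_a|^2$ term are needed; the novelty is purely in restricting attention to the weighted surplus sets $S_\ell$ defined for $\fmulti$ and invoking the previously derived $|M_\ell| \ge S_\ell$.
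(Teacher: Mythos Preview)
Your core computation is exactly right and mirrors the paper's proof: sum $|M_\ell|\cdot(\text{something}) + \sum_{m\neq\ell}(\text{something})$, use $|M_\ell|\ge S_\ell$, and collapse to $(\sum_\ell S_\ell)^2=T_a^2$. But there is one technical slip and one unnecessary detour.

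The slip is that in the $\fmulti$ setting $S_\ell$ is \emph{not} a subset of $X_\ell$: it is defined as $S_\ell=\sum_k\sum_{c_j}\sigma^\ell_{c_j}$ where $\sigma^\ell_{c_j}=c_{\hat j}(X_\ell)-\frac{p_{\hat j}}{p_j}c_j(X_\ell)$ is a weighted (generally non-integer) count, so ``$u\in S_\ell$'' is ill-typed. The paper avoids this simply by counting pairs in the full pieces $X_\ell$ rather than in fictitious surplus subsets: define $\pay{F_a^{t-1},X_\ell}$ as the number of pairs with ($u\in M_\ell,\ v\in X_\ell$) or ($u\in X_m,\ v\in X_\ell$, $m\neq\ell$), then use the numeric inequalities $|X_\ell|\ge S_\ell$ and $|M_\ell|\ge S_\ell$ to reach the same $T_a^2$. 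Your arithmetic goes through verbatim after this replacement.

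The unnecessary detour is your concern about disjointness from the pairs counted in Claim~\ref{clm:lower-bound-opt-multi-one}. The paper never combines these claims by adding disjoint pair-sets; each of Claims~\ref{clm:lower-bound-opt-multi-one}, \ref{clm:lower-bound-opt-multi-two}, \ref{clm:lower-bound-opt-new} is a stand-alone lower bound on the \emph{same} quantity $\dist(\m{F}^{t-1},\m{F}^{t^*})$, and in the proof of Claim~\ref{clm:each-step-4-approx} they are applied separately to bound distinct terms of the algorithm's cost (yielding the factor $4+1+1=6$). Once you switch from $S_\ell$ to $X_\ell$ the disjointness you hoped for actually fails, but it was never required.
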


\begin{proof}
    Similar to the proof of the previous claim, we define $S_\ell$ as the surplus of a partition $X_\ell$ and $M_\ell \subseteq V \setminus F_a^{t - 1}$ be the set of points that are merged to $X_\ell$ to satisfy the $t$th fairness constraint of $\fmulti$.

    Let, us define $\pay{F_a^{t - 1}, X_\ell}$ be the number of pairs $(u,v)$ s.t.
    \begin{itemize}
        \item $u \in M_\ell$, $v \in X_\ell$
        \item $u \in X_m$, $v \in X_\ell$ for all $m \neq \ell$.
    \end{itemize}
    Now,
    \begin{align*}
        \dist(\m{F}^{t - 1}, \m{F}^{t^*}) \geq \frac{1}{2} \sum_{F_a^{t - 1} \in \m{F}^{t - 1}} \sum_{\ell = 1}^s \pay{F_a^{t - 1}, X_\ell}
    \end{align*}

    We multiply by $1/2$ to avoid overcounting of pairs. Note we can overcount a pair in the following situations
    \begin{enumerate}[(i)]
        \item Since we take the sum overall $F_a^{t - 1} \in \m{F}^{t - 1}$, we can overcount a pair $(u,v)$ if $u \in F_b^{t - 1}$ and $v \in F_a^{t - 1}$ where $b \neq a$ when considering the cluster $F_b^{t - 1}$ in the summation.
        \item Since we take the sum over all partitions $X_\ell$ of a cluster $F_a^{t - 1}$, we can overcount a pair $(u,v)$ if $u \in X_m$ and $v \in X_\ell$ where $m \neq \ell$ when we consider the partition $X_m$ in the summation.
    \end{enumerate}
    Now, we only need to show
    \[
        \sum_{\ell = 1}^s \pay{F_a^{t - 1}, X_\ell} \geq T_a^2
    \]
    \begin{align}
        \sum_{\ell = 1}^s \pay{F_a^{t - 1}, X_\ell} & \geq \sum_{\ell = 1}^s (|M_\ell| |X_\ell| + \sum_{m \neq \ell}|X_m||X_\ell|) \n \\
        &\geq \sum_{\ell = 1}^s (|M_\ell| S_\ell + \sum_{m \neq \ell}S_m S_\ell) \n \\
        &\geq \sum_{\ell = 1}^s (S_\ell^2 + \sum_{m \neq \ell}S_m S_\ell) \n \\
        &\geq T_a^2 \n
    \end{align}
\end{proof}

\begin{claim}\label{clm:lower-bound-opt-new}
    \[\dist(\m{F}^{t - 1}, \m{F}^{t^*}) \geq \sum_{F_a^{t - 1} \in \m{F}^{t - 1}} \frac{1}{2} D_a^2 \]
\end{claim}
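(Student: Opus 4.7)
The plan is to mirror the argument in \cref{clm:lower-bound-opt-multi-two}, swapping the role of surpluses coming from $\cb$ block pairs for deficits coming from $\mb$ block pairs. Fix a cluster $F_a^{t-1} \in \m{F}^{t-1}$ and let $X_1, \ldots, X_s$ be its partition induced by $\m{F}^{t^*}$, with $X_\ell \subseteq F_{r_\ell}^{t^*}$ and the clusters $F_{r_\ell}^{t^*}$ pairwise distinct. For each $\mb$ pair $(B_{2k-1}^{t-1}, B_{2k}^{t-1})$ I would define a per-partition deficit $\hat{D}_\ell^k$ by the same formula as $D_a^k$ but applied to $X_\ell$ instead of $F_a^{t-1}$, and set $\hat{D}_\ell = \sum_k \hat{D}_\ell^k$. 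I would also let $\hat{M}_\ell \subseteq F_{r_\ell}^{t^*} \setminus F_a^{t-1}$ denote the outside points brought into $F_{r_\ell}^{t^*}$ whose colors lie on the deficient side of $\mb$ block pairs.

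The proof then reduces to three inequalities, after which the same chain of calculations as in \cref{clm:lower-bound-opt-multi-two} yields the bound. The first is $|\hat{M}_\ell| \geq \hat{D}_\ell$, which follows from the weighted balance requirement enforced by the $t$-th fairness constraint of $\fmulti$ inside $F_{r_\ell}^{t^*}$: to erase the deficit $\hat{D}_\ell$ present in $X_\ell$, at least that many weighted units must be imported from outside, and each weight is at most one. The second is $|X_\ell| \geq \hat{D}_\ell$, which is immediate from $\hat{D}_\ell \leq \sum_{c_j} c_j(X_\ell) \leq |X_\ell|$ since $p_{\hat{j}} \leq p_j$ within any $\mb$ pair. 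The third, $\sum_\ell \hat{D}_\ell \geq D_a$, is a color-by-color telescoping: for each pair $(c_j, c_{\hat{j}})$ lying in an $\mb$ block pair, setting $x_\ell = \frac{p_{\hat{j}}}{p_j} c_j(X_\ell) - c_{\hat{j}}(X_\ell)$ we have $\sum_\ell \max(0, x_\ell) \geq \sum_\ell x_\ell = d^a(c_j, c_{\hat{j}})$, where the last equality uses that the cluster-level difference for an $\mb$ pair is already non-negative.

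I would then set
\begin{align*}
\pay{F_a^{t-1}, X_\ell} = \big|\{(u,v) \mid \; & (u \in \hat{M}_\ell \wedge v \in X_\ell) \\
& \lor (u \in X_m,\, m \neq \ell,\, v \in X_\ell)\}\big|,
\end{align*}
observe that each such pair witnesses a disagreement between $\m{F}^{t-1}$ and $\m{F}^{t^*}$ (the first type: separated in $\m{F}^{t-1}$ but together inside $F_{r_\ell}^{t^*}$; the second type: together in $F_a^{t-1}$ but split across $F_{r_\ell}^{t^*} \neq F_{r_m}^{t^*}$), and bound $\sum_\ell \pay{F_a^{t-1}, X_\ell} \geq (\sum_\ell \hat{D}_\ell)^2 \geq D_a^2$ by the same arithmetic that produced $T_a^2$ in \cref{clm:lower-bound-opt-multi-two}. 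Summing over $F_a^{t-1}$ and dividing by $2$ to compensate for each disagreement pair being counted at most twice across the outer sum yields the claimed inequality $\dist(\m{F}^{t-1}, \m{F}^{t^*}) \geq \sum_{F_a^{t-1}} \frac{1}{2} D_a^2$.

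The main hurdle I expect is formalizing $\hat{M}_\ell$ cleanly so that the color bookkeeping behind $|\hat{M}_\ell| \geq \hat{D}_\ell$ is transparent. Because deficits and surpluses involve disjoint color sides of their respective block pairs, $\hat{M}_\ell$ can be chosen disjoint from the $M_\ell$ of \cref{clm:lower-bound-opt-multi-two}, so there is no interference between the two lower bounds; nonetheless one must index colors uniformly across all $\mb$ pairs and track how the fairness constraint forces the inclusion of each deficit-filling color inside $F_{r_\ell}^{t^*}$ with sufficient total weight. Once this accounting is laid out, the rest is a verbatim echo of the surplus analysis.
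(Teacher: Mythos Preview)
Your proposal is correct and the three auxiliary inequalities you isolate all hold for the reasons you sketch; the arithmetic $\sum_\ell(|\hat{M}_\ell||X_\ell|+\sum_{m\neq\ell}|X_m||X_\ell|)\geq (\sum_\ell\hat D_\ell)^2\geq D_a^2$ goes through once $|\hat M_\ell|\geq \hat D_\ell$, $|X_\ell|\geq \hat D_\ell$, and $\sum_\ell\hat D_\ell\geq D_a$ are in hand, and the factor $\tfrac12$ suffices because each disagreement pair is counted at most twice across the double sum.

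The paper's own proof takes a slightly different route. Rather than introducing per-partition deficits $\hat D_\ell$, it fixes an $\ell$ and defines $G_m=\bigcup_{(B_{2k-1},B_{2k})\in\mb} F_{r_m}^{t^*}\cap B_{2k-1}^{t-1}(F_a^{t-1})$, the portion of the \emph{non}-deficient colors of $F_a^{t-1}$ that landed in other partitions. It then uses the single inequality $|\hat M_\ell|+\sum_{m\neq\ell}|G_m|\geq D_a$ (either the deficit is served by outside points $\hat M_\ell$, or the excess $c_j$-mass left $X_\ell$), and defines $\pay{F_a^{t-1},X_\ell}$ using $G_m$ instead of $X_m$. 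This gives directly $\sum_\ell \pay{}\geq \sum_\ell D_a|X_\ell|=D_a|F_a^{t-1}|\geq D_a^2$, invoking $|F_a^{t-1}|\geq D_a$ at the last step. The paper's argument thus avoids the per-partition bookkeeping and the telescoping in your third inequality, at the cost of a less symmetric relationship with the surplus proof. Your approach has the virtue of being a uniform mirror of \cref{clm:lower-bound-opt-multi-two}; the paper's is marginally shorter.
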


\begin{proof}
    For a pair of color blocks $(B_{2k - 1}^{t - 1}, B_{2k}^{t + 1}) \in \mb$ and for a partition $X_\ell$, let us define
    \begin{itemize}
        \item $\hat{M}_\ell \subseteq V \setminus F_a^{t - 1}$ that serves the deficit for the color blocks $B_{2k - 1}^{t - 1}$ and $B_{2k}^{t - 1}$ in the cluster $F_{r_\ell}^{t^*}$.
        \item $G_m = \bigcup_{(B_{2k-1}^{t - 1}, B_{2k}^{t - 1}) \in \mb} F_{r_m}^{t^*} \cap B_{2k - 1}^{t - 1}(F_a^{t - 1})$ : it denotes the part of the set $B_{2k - 1}^{t - 1}(F_a^{t - 1})$ that lies in the cluster $F_{r_m}^{t^*} \in \m{F}^{t^*}$ for $m \neq \ell$
    \end{itemize}

    It is straightforward to see that 
    \[
        |\hat{M}_\ell| + \sum_{m \neq \ell} |G_m| \geq D_a
    \]
    Let, us define $\pay{F_a^{t - 1}, X_\ell}$ be the number of pairs $(u,v)$ s.t.
    \begin{enumerate}
        \item Either $u \in \hat{M}_\ell$ and $v \in X_\ell$
        \item or $u \in G_m$ and $v \in X_\ell$ for $m \neq \ell$.
    \end{enumerate}
    \begin{align*}
        \dist(\m{F}^{t - 1}, \m{F}^{t^*}) \geq \frac{1}{2} \sum_{F_a^{t - 1} \in \m{F}^{t - 1}} \sum_{\ell = 1}^s \pay{F_a^{t - 1}, X_\ell}
    \end{align*}

    We multiply by $1/2$ to avoid overcounting of pairs. Note, we can overcount a pair in the following situations
    \begin{enumerate}[(i)]
        \item Since we take the sum overall $F_a^{t - 1} \in \m{F}^{t - 1}$, we can overcount a pair $(u,v)$ if $u \in F_b^{t - 1}$ and $v \in F_a^{t - 1}$ where $b \neq a$ when considering the cluster $F_b^{t - 1}$ in the summation.
        \item Since we take the sum over all partitions $X_\ell$ of a cluster $F_a^{t - 1}$, we can overcount a pair $(u,v)$ if $u \in G_m$ and $v \in X_\ell$ where $m \neq \ell$ when we consider the partition $X_m$ in the summation. Note $G_m \subseteq X_m$.
    \end{enumerate}
    Now, we only need to show
    \[
        \sum_{\ell = 1}^s \pay{F_a^{t - 1}, X_\ell} \geq D_a^2
    \]
    \begin{align}
        \sum_{\ell = 1}^s \pay{F_a^{t - 1}, X_\ell} &\geq \sum_{\ell = 1}^s\left(|\hat{M}_\ell||X_\ell| + \sum_{m \neq \ell}|G_m||X_\ell|\right) \n \\
        &\geq \sum_{\ell = 1}^s D_a |X_\ell| \n \\
        &\geq  D_a |F_a^{t - 1}| \n \\ 
        &\geq  D_a^2\, \, \text{(\textbf{as}$|F_a^{t - 1}| \geq D_a$)} \n
    \end{align}
\end{proof}

    \begin{claim}\label{clm:upper-bound-algo-multi}
        \begin{align}
        \dist(\m{F}^{t - 1}, \m{F}^t) \leq &\sum_{F_a^{t - 1} \in \m{F}^{t - 1}}  T_a\left(\left|F_a^{t - 1}\right| - T_a \right) \n \\
        &+ D_a \left(\left|F_a^{t - 1}\right| - D_a \right) \n \\
        &+ \frac{1}{2}D_a^2 + \frac{1}{2}T_a^2 \n
    \end{align}
    \end{claim}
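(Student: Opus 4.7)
My plan is to adapt the pair-counting argument of \cref{clm:upper-bound-algo} (the corresponding upper bound for $\fptwo$) to the richer setting of $\fmulti$, where each cluster $F_a^{t-1}$ is acted upon by both a cut (of total size $T_a$) and a merge (of total size $D_a$). Writing $F_a^t = (F_a^{t-1}\setminus \tau_a) \cup \mu_a$ with $|\tau_a|=T_a$ and $|\mu_a|=D_a$, I will enumerate the point pairs whose co-clustering status changes and attribute each change to the appropriate cluster $F_a^{t-1}$. The four contributions I will track per cluster are: (a) pairs broken between $\tau_a$ and the static portion $F_a^{t-1}\setminus\tau_a$; (b) within-$\tau_a$ pairs that get separated together with pairs between each piece of $\tau_a$ and its merge partner in the destination cluster; (c) pairs formed between $\mu_a$ and the static portion; and (d) within-$\mu_a$ pairs that become co-clustered for the first time.

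On the cut side, contribution (a) is exactly $T_a(|F_a^{t-1}|-T_a)$. For contribution (b), the set $\tau_a$ is split by the algorithm (analogously to $\greedymerge$) into pieces $R_1,\dots,R_\ell$ that are merged into different destinations; the same $\frac{1}{2}$-factor charging scheme used in \cref{clm:upper-bound-algo}, namely the identity $\sum_i |R_i|(|R_i|+|\tau_a\setminus R_i|)=|\tau_a|^2$ divided by two, bounds contribution (b) by $\frac{1}{2}T_a^2$. On the merge side, I apply a fully symmetric argument: each vertex in $\mu_a$ comes from the surplus $\tau_b$ of some source cluster $F_b^{t-1}$, and the within-$\mu_a$ pairs together with pairs between $\mu_a$ and its source partners are bounded by $\frac{1}{2}D_a^2$ via the same $\frac{1}{2}$-charging. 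For the bulk merge contribution (c), the naive count $D_a(|F_a^{t-1}|-T_a)$ is recast as $D_a(|F_a^{t-1}|-D_a)$ by distributing each cross-cluster merge-formed pair between its source and destination clusters using the same $\frac{1}{2}$-accounting, absorbing the $D_aT_a$ cross-term into the global charging so as to match the form in the claim.

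Summing the four per-cluster contributions over all $F_a^{t-1}\in\mathcal{F}^{t-1}$, I will verify that each changed pair is counted at least once: if at least one endpoint was cut, it is covered by the cut-side terms at the source cluster; if at least one endpoint was merged in, it is covered by the merge-side terms at the destination cluster; and the $\frac{1}{2}$-factors together with the disjointness of $\tau_a$, $\mu_a$, and the static part $F_a^{t-1}\setminus\tau_a$ prevent any over-counting. The hardest part, I expect, is precisely the merge-side bookkeeping that produces the form $D_a(|F_a^{t-1}|-D_a)$ rather than the more immediate $D_a(|F_a^{t-1}|-T_a)$; carrying it out requires a careful symmetric splitting of each cross-cluster merge-formed pair between source and destination clusters, in direct analogy with (but a finer elaboration of) the within-$\tau_a$ charging argument from \cref{clm:upper-bound-algo}.
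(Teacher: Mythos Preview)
Your four-term decomposition into (a) cut-vs-static, (b) within-$\tau_a$, (c) merge-vs-static, and (d) within-$\mu_a$ is exactly the paper's approach. The paper's proof is a terse paragraph that lists the cut cost $T_a(|F_a^{t-1}|-T_a)$, asserts the merge cost as $D_a(|F_a^{t-1}|-D_a)$, and calls $\tfrac12 T_a^2$ and $\tfrac12 D_a^2$ ``trivial upper bounds'' on the cost of splitting the surplus and of assembling the deficit from several sources. On (a), (b), and (d) you and the paper agree, and your invocation of the $\tfrac12$-charging from \cref{clm:upper-bound-algo} is more explicit than anything the paper writes.

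The divergence is at (c), and you have put your finger on the real issue. You correctly derive the natural count $D_a(|F_a^{t-1}|-T_a)$ and then propose a cross-cluster charging to recast it as $D_a(|F_a^{t-1}|-D_a)$; the paper simply \emph{asserts} the latter form with no justification. Your proposed recasting does not go through as sketched: a merge-formed pair $(u,v)$ with $u\in\mu_a$ and $v$ static in $F_a^{t-1}\setminus\tau_a$ has no natural home among the cut-side terms at $u$'s source cluster, so $\tfrac12$-splitting it between source and destination does not produce the needed cancellation. In fact one can construct instances (take $T_a=0$, $D_a>0$, and draw all deficit points from pairwise distinct source clusters) where even the global sum with $-D_a$ is violated. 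The resolution is that the $-D_a$ appears to be a typo for $-T_a$: the companion lower bound \cref{clm:lower-bound-opt-multi-one} is stated with $D_a(|F_a^{t-1}|-T_a)$, and that is the form against which the upper bound is compared in the proof of \cref{clm:each-step-4-approx}. With the statement corrected, your ``naive count'' already proves the claim and no recasting is needed; your plan then coincides with (and is more carefully argued than) the paper's.
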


    \begin{proof}
        For each cluster $F_a^{t - 1} \in \m{F}^{t - 1}$ we cut the surplus many vertices, $T_a$ and merge deficit many vertices $D_a$ to it.

        Hence, in $\dist(\m{F}^{t - 1}, \m{F}^t)$ for a cluster $F_a^{t - 1} \in \m{F}^{t - 1}$ we count the cost of cutting and merging to it, which is
        \[
            T_a (|F_a^{t - 1}| - T_a) + D_a (|F_a^{t - 1}| - D_a)
        \]
        Again for a cluster $F_a^{t - 1} \in \m{F}^{t - 1}$ the deficit many vertices that we merge can come from several other clusters. A trivial upper bound on this cost is given by $(1/2) \cdot D_a^2$.

        The surplus many vertices $T_a$, that we cut from $F_a^{t - 1} \in \m{F}^{t - 1}$ can further get divided. A trivial upper bound on the cost of dividing $T_a$ many points is $(1/2)\cdot T_a^2$.

        Hence we get,

        \begin{align}
        \dist(\m{F}^{t - 1}, \m{F}^t) \leq &\sum_{F_a^{t - 1} \in \m{F}^{t - 1}}  T_a\left(\left|F_a^{t - 1}\right| - T_a \right) \n \\
        &+ D_a \left(\left|F_a^{t - 1}\right| - D_a \right) \n \\
        &+ \frac{1}{2}D_a^2 + \frac{1}{2}T_a^2 \n
        \end{align}
    \end{proof}
    
    Now we complete the proof of \cref{clm:each-step-4-approx}.
    \begin{proof}[\textbf{Proof of \cref{clm:each-step-4-approx}}:]
        By \cref{clm:upper-bound-algo-multi} we get,
        \begin{align}
        \dist(\m{F}^{t - 1}, \m{F}^t) \leq &\sum_{F_a^{t - 1} \in \m{F}^{t - 1}}  T_a\left(\left|F_a^{t - 1}\right| - T_a \right) \n \\
        &+ D_a \left(\left|F_a^{t - 1}\right| - D_a \right) \n \\
        &+ \frac{1}{2}D_a^2 + \frac{1}{2}T_a^2 \label{eq:upper-bound-multi-zero}
    \end{align}
        By \cref{clm:lower-bound-opt-multi-one} we get
        \begin{align}
            &\sum_{F_a^{t - 1} \in \m{F}^{t - 1}}  T_a\left(\left|F_a^{t - 1}\right| - T_a \right) \n \\
        &+ D_a \left(\left|F_a^{t - 1}\right| - D_a \right) \leq 4 \dist(\m{F}^{t - 1}, \m{F}^{t^*}) \label{eq:upper-bound-multi-one}
        \end{align}
        By \cref{clm:lower-bound-opt-new} we get
        \begin{align}
            \sum_{F_a^{t - 1} \in \m{F}^{i - 1}} \frac{1}{2}D_a^2 \leq \dist(\m{F}^{t - 1}, \m{F}^{t^*}) \label{eq:upper-bound-multi-two}
        \end{align}
        By \cref{clm:lower-bound-opt-multi-two} we get
        \begin{align}
            \sum_{F_a^{t - 1} \in \m{F}^{i - 1}} \frac{1}{2}T_a^2 \leq \dist(\m{F}^{t - 1}, \m{F}^{t^*}) \label{eq:upper-bound-multi-three}
        \end{align}
        Now, by combining \cref{eq:upper-bound-multi-zero}, \cref{eq:upper-bound-multi-one}, \cref{eq:upper-bound-multi-two} and \cref{eq:upper-bound-multi-three} we get,
        \[
            \dist(F^{t - 1}, F^{t}) \leq 6 \dist(F^{t - 1}, F^{t^*})
        \]
    \end{proof}
    We have shown earlier that \cref{clm:each-step-4-approx} implies \cref{lem:analyze-fmulti}. Thus, we complete the proof of \cref{lem:analyze-fmulti}.
\end{proof}

\begin{proof}[Proof of \cref{thm:equitheorem}]
    To prove this theorem, let us describe an algorithm $\fequi$. The algorithm $\fequi$ takes a clustering $\m{D}$ as input. Each vertex of the clustering is colored from the colors $\chi = \{ c_1, c_2, \ldots, c_z\}$ where $|\chi|$ is not a power of two. 
    \paragraph{Describing $\fequi$:} We divide $\chi$ into $\log |\chi|$ many disjoint color groups $G_1, G_2, \ldots, G_{\log |\chi|}$ such that the size of each group, $|G_j|$ where $j \in [\log |\chi|]$ is a power of two. We do it greedily according to the binary representation of $|\chi|$. Consider the binary representation of $|\chi|$. For each index $j \in [\log \lceil|\chi|\rceil]$, if the corresponding bit is 1, create a group of size $2^{j-1}$.


    We apply the algorithm $\fptwo$ to get an intermediate clustering $\m{I} = \{I_1, \ldots, I_k\}$ such that for each cluster $I_j \in \m{I}$ where $j \in [k]$ we have for any pair of colors $c_a, c_b \in G_\ell$ where $\ell \in [\log|\chi|]$, 
    
    \[c_a(I_j) = c_b(I_j)\]

    For a subset $S \subseteq V$, let us define $G_\ell(S)$ as the set of vertices $v \in S$ such that $v$ has a color in $G_\ell$. 
    Now, our goal is to get a clustering $\m{F} = \{ F_1, \ldots, F_s \}$ from $\m{D}$ such that for each cluster $F_k \in \m{F}$ we have $c_u(F_k) = c_v(F_k)$ where $c_u, c_v \in \chi$.

     Given the intermediate clustering \(\mathcal{I}\) as input, the algorithm \(\fmulti\) produces a final $\fair$ \(\mathcal{F} = \{ F_1, \ldots, F_s \}\) and thus for each cluster \( F_j \in \mathcal{F} \), the following proportion holds as stated in \cref{lem:analyze-fmulti}:

\begin{align*}
    &|G_1(F_j)| : |G_2(F_j)| : \ldots : |G_{\log |\chi|}(F_j)| \n \\
    &= |G_1| : |G_2| : \ldots : |G_{\log |\chi|}|
\end{align*}

To apply \cref{lem:analyze-fmulti}, we consider a group of colors $G_\ell$ as a single color $z_\ell$(say). Here the clustering $\m{I}$ is a $p$-divisible clustering because for any cluster $I_j \in \m{I}$ we have $|G_\ell(I_j)|$ is divisible by $|G_\ell|$.

Furthermore, the algorithm \(\fmulti\) ensures uniformity within each group: for every group \( G_{\ell} \) and for any pair of colors \( c_a, c_b \in G_{\ell} \) with \( \ell \in [\log |\chi|] \), it guarantees that \( c_a(F_j) = c_b(F_j) \). As a consequence, for any pair of colors \( c_u, c_v \in \chi \), we have \( c_u(F_j) = c_v(F_j) \), i.e., each color is equally represented within every cluster of \(\mathcal{F}\).

    By \cref{lem:analyze-fmulti} we get that, $\m{F}$ is $O(\log^{2.8} |\chi|)$-close to the clustering $\m{I}$. By \cref{lem:fair-power-of-two} we get that $\m{I}$ is $O(|\chi|^{1.6})$-close to the clustering $\m{D}$. By applying the triangle inequality to the two preceding results, we conclude that the output clustering $\m{F}$ produced by $\fequi$ is $O(|\chi|^{1.6} \log^{2.8} |\chi|)$-close to the input clustering $\m{D}$. 

    Let $\m{F}^*$ be the closest fair clustering to $\m{D}$. Hence, we get,

    \begin{align}
        \dist(\m{D}, \m{F}) &\leq \dist(\m{D}, \m{I}) + \dist(\m{I}, \m{F}) \, \, \text{(triangle inequality)} \n \\
        &\leq \dist(\m{D}, \m{I}) + O(\log^{2.8}|\chi|) \dist(\m{I}, \m{F}^*) \n \\
        &\leq O(|\chi|^{1.6}) \dist(\m{D}, \m{F}^*) + O(\log^{2.8}|\chi|) \dist(\m{I}, \m{F}^*) \n \\
        &\leq O(|\chi|^{1.6}) \dist(\m{D}, \m{F}^*) + O(\log^{2.8}|\chi|) (\dist(\m{D}, \m{I}) \n \\ &+ \dist(\m{D}, \m{F}^*)) \, \, \text{(triangle inequality)} \n \\
        &\leq O(|\chi|^{1.6} \log^{2.8}|\chi| + |\chi| + \log^{2.8}|\chi|) \dist(\m{D}, \m{F}^*) \n \\
        &\leq O(|\chi|^{1.6} \log^{2.8}|\chi|) \dist(\m{D}, \m{F}^*) \n
    \end{align}
    This completes the proof of \cref{thm:equitheorem}.
\end{proof}

\section{Arbitrary Proportion: Proof of \cref{lem:main-multiple-of-p}}

\begin{algorithm}[t]
\caption{$\pdca$}
\label{alg:p-divisible-clustering}
\KwIn{Clustering $\mathcal{D}$, colors $\chi = \{c_1, \ldots, c_t\}$, and ratios $p_1:p_2:\cdots:p_t$}
\KwOut{$p$-divisible clustering $\mathcal{M}$}

\ForEach{$c_j \in \chi$}{
    Create $\sigma_j / p_j$ empty clusters: $\texttt{extra\_clusters} = \{P_1, P_2, \ldots, P_{\sigma_j/p_j}\}$\;

    Initialize $\texttt{CUT} \gets \emptyset$, $\texttt{MERGE} \gets \emptyset$\;
    
    \ForEach{$D_i \in \mathcal{D}$}{
        \lIf{$|\sigma(D_i, c_j)| \leq p_j / 2$}{
            $\texttt{CUT} \gets \texttt{CUT} \cup \{D_i\}$
        }
        \lElse{
            $\texttt{MERGE} \gets \texttt{MERGE} \cup \{D_i\}$
        }
    }

    \While{$\texttt{CUT} \neq \emptyset$}{
        Pick and remove $D_k \in \texttt{CUT}$\;
        Remove surplus: $D_k \gets D_k \setminus \sigma(D_k, c_j)$\;

        \If{$\texttt{MERGE} \neq \emptyset$}{
            \While{$\sigma(D_k, c_j) \neq \emptyset$}{
                \ForEach{$D_\ell \in \texttt{MERGE}$}{
                    $T \gets \min(|\sigma(D_k, c_j)|, |\delta(D_\ell, c_j)|)$-sized subset of $\sigma(D_k, c_j)$\;
                    $D_\ell \gets D_\ell \cup T$\;
                    $\sigma(D_k, c_j) \gets \sigma(D_k, c_j) \setminus T$\;
                    \If{$c_j(D_\ell)$ is a multiple of $p_j$}{
                        $\texttt{MERGE} \gets \texttt{MERGE} \setminus \{D_\ell\}$\;
                    }
                }
            }
        }
        \Else{
            \While{$\sigma(D_k, c_j) \neq \emptyset$}{
                \ForEach{$P_m \in \texttt{extra\_clusters}$}{
                    $Q \gets$ subset of size $\min(p_j, |\sigma(D_k, c_j)|, p_j - |P_m|)$\;
                    $P_m \gets P_m \cup Q$\;
                    $\sigma(D_k, c_j) \gets \sigma(D_k, c_j) \setminus Q$\;
                    \If{$|P_m| = p_j$}{
                        $\texttt{extra\_clusters} \gets \texttt{extra\_clusters} \setminus \{P_m\}$\;
                    }
                }
            }
        }
    }

    \While{$\texttt{MERGE} \neq \emptyset$}{
        Pick $D_k \in \cut \cup \merge$ with minimum $\kappa^j(D_k) - \mu^j(D_k)$\;
        Remove surplus: $D_k \gets D_k \setminus \sigma(D_k, c_j)$\;
        \While{$\sigma(D_k, c_j) \neq \emptyset$}{
            \ForEach{$D_\ell \in \texttt{MERGE}$}{
                $T \gets \min(|\sigma(D_k, c_j)|, |\delta(D_\ell, c_j)|)$-sized subset\;
                $D_\ell \gets D_\ell \cup T$\;
                $\sigma(D_k, c_j) \gets \sigma(D_k, c_j) \setminus T$\;
                \If{$c_j(D_\ell)$ is a multiple of $p_j$}{
                    $\texttt{MERGE} \gets \texttt{MERGE} \setminus \{D_\ell\}$\;
                }
            }
        }
    }
}
\Return{$\mathcal{M}$ composed of updated $\mathcal{D}$ and filled $\texttt{extra\_clusters}$}\;
\end{algorithm}

\begin{proof}[Proof of \cref{lem:main-multiple-of-p}]
For a color $c_j \in \chi$, we created two sets $\cut$ and $\merge$ in the algorithm $\pdca$. Let us now define two cases.

\begin{itemize}

	\item Cut case for color $c_j$: $\cutcase$: 
        
	\[
		\text{If} \, \, \sum_{D_k \in \cut} |\sigma(D_k, c_j)| \geq \sum_{D_k \in \merge} |\mu(D_k, c_j)|
	\]
	\item Merge case for color $c_j$: $\mergecase$:
	\[
		\text{If} \, \, \sum_{D_k \in \cut} |\sigma(D_k, c_j)| < \sum_{D_k \in \merge} |\mu(D_k, c_j)|
	\]

\end{itemize}

For $\cutcase$, let us define some costs incurred by our algorithm $\pdca$

\begin{itemize}
     \item From each cluster $D_k \in \cut$, $\pdca$ cuts the surplus part $\sigma(D_k, c_j)$ from $D_k$. Hence, the cost paid for cutting these surplus points is the number of pairs $(u,v)$ such that $u \in \sigma(D_k, c_j)$ and $v \in (D_k \setminus \sigma(D_k, c_j))$. We denote this by:
     \begin{align}
         \costone(\m{M})^{c_j} =  \sum_{D_k \in \cut} |\sigma(D_k, c_j)| (|D_k| - |\sigma(D_k, c_j)|)\label{equn:cost-paid-one}
     \end{align}
    \item For each cluster $D_m \in \merge$, the algorithm $\pdca$  merges the deficit amount $|\delta(D_m, c_j)|$ to these clusters. Hence, the cost paid for merging the deficit to $D_m$ is the number of pairs $(u,v)$ such that $u \in \delta(D_m, c_j)$ and $v \in D_m$
    
     \begin{align}
         \costtwo(\m{M})^{c_j} =  \sum_{D_m \in \merge} |\delta(D_m, c_j)| |D_m| \label{equn:cost-paid-two}
     \end{align}
    
     \item The $\sigma(D_k, c_j)$ points that are cut from $D_k$ can get merged with the surplus of other clusters $\sigma(D_\ell, c_j)$ (say) which are also cut from a cluster $D_\ell \neq D_k$. We call the cost of merging $\sigma(D_k, c_j)$  with the surplus of other clusters as $\costthree(\m{M})^{c_j}$

     Let, $\sigma(D_k, c_j)$ gets further divided into multiple parts of size $\alpha_1, \ldots, \alpha_t$, so we get,
     \begin{align}
         \costthree(\m{M})^{c_j} \leq \sum_{D_k \in \m{D}} \dfrac{1}{2}\sum_{i = 1}^t\alpha_i (p_j - \alpha_i) \label{equn:cost-paid-three}
     \end{align}
    In the above expression we provide an upper bound on the number of pairs $(u,v)$ such that $u \in \sigma(D_k, c_x)$ and $v \in V \setminus D_k$ that are present together in a cluster in the clustering $\m{M}$. The above expression provides such an upper bound because of the fact that the surpluses which we cut from the clusters in the $\cut$ in our algorithm is used to fulfil the deficit of a cluster $D_m \in \merge$ where $m \neq k,\ell$ and we know $\delta(D_m, c_j) < p_j$. 
     
     \item These $\sigma(D_k, c_j)$ points from $D_k$ can also further be split into several parts $W_1, W_2, \ldots, W_t$ (say). These parts of $\sigma(D_k, c_j)$ points belong to different clusters in $\m{M}$ and thus would incur some cost. We call this cost as $\costfour(\m{M})^{c_j}$.
     \begin{align}
         \costfour(\m{M})^{c_j} =  \sum_{D_k \in \m{D}} \frac{1}{2} \sum_{s = 1}^t |W_s|(\sigma(D_k, c_j) - |W_s|)\label{equn:cost-paid-four}
     \end{align}
    In the above expression, we count the number of pairs $(u,v)$ such that $u,v \in \sigma(D_k, c_j)$ but present in different clusters in the clustering $\m{M}$.
\end{itemize}

For $\mergecase$, let us define some costs incurred by our algorithm $\pdca$.

\begin{itemize}
        \item In this case for a cluster $D_k \in \m{D}$, we may cut multiple subsets of size $p_j$ and a single subset of size $\sigma(D_k, c_j)$. Let us assume $Y_{k,z}$ denotes the $z$th such subset of the cluster $D_k$ and $y_{k,z}$ takes the value $1$ if we cut $z$th such subset from $D_k$. The cost of cutting $z$th such subset from $D_k$ is given as
     \begin{align}
         &\kappa_0(D_k) = |\sigma(D_k, c_j)| (|D_k| -  |\sigma(D_k, c_j)|) \n \\
         &(\text{cost of cutting the $0$th subset})\n \\
         &\kappa_z(D_k) = p_j (|D_k| -|\sigma(D_k, c_j)| - zp_j) \n \\
         &(\text{cost of cutting the $z$th subset for $z \geq 1$})\n 
    \end{align}
    Thus, we define 
    \begin{align}
         &\costfive(\m{M})^{c_j} = \sum_{D_k \in \m{D}} \sum_{z = 0}^t y_{k,z} \kappa_z(D_k) \label{equn:cost-paid-one-merge} \\
         &\text{where} \, \, \left( \text{t} = \frac{c_j(D_k) - |\sigma(D_k,c_j)|}{p_j} \right) \n
     \end{align}
     \item Suppose the algorithm $\pdca$ merges at a cluster $D_m \in \merge'$. Here, $\merge' \subseteq \merge$ denotes the set of clusters where the algorithm $\pdca$ has merged the deficit amount of points. More specifically, it is defined as
     \begin{align*}
         &D_m \in \merge' \iff \exists M_\ell \in \m{M} \n \\ 
         &\text{s.t.}\, \, D_m \subseteq M_\ell
     \end{align*}
        
    Then, the cost paid for merging the deficit to $D_m$ is
    \begin{align}
        \costsix(\m{M})^{c_j} = \sum_{D_m \in \merge'} \delta(D_m, c_j) |D_m| \label{equn:cost-paid-two-merge}
    \end{align}
    
     \item The $|Y_{k,z}|$ points that are cut from $D_k$ can get merged with the subsets $Y_{\ell,z'}$ of some other cluster $D_\ell$. We call the cost of merging a subset $Y_{k,z}$ of $D_k$ with the subset $Y_{\ell,z'}$ of another cluster $D_\ell$ as $\costseven(M)^{c_j}$.

     Let, $Y_{k,z}$ gets further divided into multiple parts of size $\alpha_1, \ldots, \alpha_t$, so we get, 
     \begin{align}
         \costseven(\m{M})^{c_j} \leq  \sum_{D_k \in \m{D}} \dfrac{1}{2}\sum_{i = 1}^t\alpha_i (p_j - \alpha_i) \label{equn:cost-paid-three-merge}
     \end{align}
     \item The $|Y_{k,z}|$ points that are cut from $D_k$ can also further be split into several parts $W_1, W_2, \ldots, W_t$ (say). These parts of $Y_{k,z}$ can belong to different clusters in $\m{M}$ (output of $\pdca$) and thus would incur some cost. We call this cost as $\costeight(M^{c_j})$.
     \begin{align}
         \costeight(\m{M})^{c_j} =  \sum_{D_k \in \m{D}} \frac{1}{2} \sum_{j = 1}^t |W_j|(|W_{i,z}| - |W_j|)
     \end{align}
\end{itemize}

There is a cost which can occur in both the $\cutcase$ and $\mergecase$.
\begin{itemize}
    \item Suppose for a cluster $D_k \in \m{D}$, deficit of $c_j$ and another color $c_r \in \chi$ is filled up by the subsets of some other clusters $D_\ell$ and $D_m$ in $\m{D}$ respectively such that $\ell \neq m$ then this would incur some cost which is the number of pairs $(u,v)$ such that $u \in \delta(D_k, c_j)$ and $v \in \delta(D_k, c_r)$.
     \begin{align}
         \costnine(\m{M})^{c_j} = \sum_{D_k \in D} |\delta(D_k, c_j)| |\delta(D_k, c_r)|
    \end{align}
\end{itemize}

Let us define $\pay(\m{M})^{c_j}$ as the number of pairs $(u,v)$ such that at least one of $u$ and $v$ is colored $c_j$ and the following conditions are true.
\begin{itemize}
    \item $u$ and $v$ are present in the same cluster in $\m{D}$ but in separate clusters in $\m{M}$.
    \item $u$ and $v$ are present in the separate clusters in $\m{D}$ but in the same cluster in $\m{M}$.
\end{itemize}

It is straightforward to see that,
\[
    \pay{\m{M}}^{c_j} \leq \sum_{i = 1}^9 \text{cost}_i(\m{M})^{c_j}
\]
and
\begin{align}
    \dist(\m{D}, \m{M}) = \sum_{c_j \in \chi} \pay{\m{M}}^{c_j} \label{eq:main-equation-multiple-of-p}
\end{align}
Now, to prove the above \cref{lem:main-multiple-of-p}, we take the help of the following claims from \cite{chakraborty2025towards}.

\begin{claim}\cite{chakraborty2025towards} \label{clm:one-chakraborty}
    $\costone(\m{M})^{c_j} + \costtwo(\m{M})^{c_j} + \costthree(\m{M})^{c_j} + \costfour(\m{M})^{c_j} \leq 3.5 \dist(\m{D}, \m{M}^*)$
\end{claim}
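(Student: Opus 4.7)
My plan is to deduce the bound by exhibiting per-cluster lower bounds on $\dist(\m{D},\m{M}^*)$ that, summed and combined with the $\cutcase$ hypothesis, dominate each of the four algorithmic costs. The structural fact I will exploit is that in any $p$-divisible clustering, the $c_j$-colored points of every input cluster $D_k$ must be redistributed so that each output cluster holds a multiple of $p_j$ such points; this forces $\m{M}^*$ either to \emph{extract} the surplus $\sigma(D_k,c_j)$ from the bulk $D_k\setminus\sigma(D_k,c_j)$, or to \emph{inject} into $D_k$ an amount at least $|\delta(D_k,c_j)|$ of $c_j$-points from other clusters. Either option contributes an identifiable set of separated-or-merged pairs to $\dist(\m{D},\m{M}^*)$.

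The first step would be to argue that for every $D_k\in\cut$, the optimum pays at least $|\sigma(D_k,c_j)|(|D_k|-|\sigma(D_k,c_j)|)$: if $\m{M}^*$ extracts the surplus, this is exactly the count of separated pairs witnessed inside $D_k$; if $\m{M}^*$ instead injects a deficit of size $p_j-|\sigma(D_k,c_j)|\geq p_j/2$, the united cross-cluster pairs number at least $(p_j-|\sigma(D_k,c_j)|)|D_k|$, which is at least as large since $|\sigma(D_k,c_j)|\le p_j/2$. Summing over $\cut$ and dividing by two to undo the standard per-pair double-charging gives $\costone(\m{M})^{c_j}\le 2\,\dist(\m{D},\m{M}^*)$. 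A symmetric analysis for $D_m\in\merge$, where $|\delta(D_m,c_j)|<p_j/2$ makes injection the cheaper option in $\m{M}^*$, yields $\costtwo(\m{M})^{c_j}\le 2\,\dist(\m{D},\m{M}^*)$. The care required here is that these two counting schemes produce pair-bounds that are disjoint as multisets, so one cannot simply add them; I will have to isolate them to separate cluster orbits before combining.

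The second step handles $\costthree$ and $\costfour$, both bounded by quadratic expressions in the cut fragments $\alpha_i$ and $|W_s|$ of the surplus sets. Using $\alpha_i(p_j-\alpha_i)\leq \alpha_i\cdot p_j/2$ (no fragment exceeds $p_j$) and $|W_s|(|\sigma(D_k,c_j)|-|W_s|)\leq |W_s|\cdot|\sigma(D_k,c_j)|$, I can upper bound $\costthree+\costfour$ by a constant multiple of $\sum_{D_k\in\cut}|\sigma(D_k,c_j)|\cdot p_j/2$. The $\cutcase$ hypothesis then lets me charge this mass against $\sum_{D_m\in\merge}|\delta(D_m,c_j)|\cdot p_j$, and since every $D_m\in\merge$ satisfies $|D_m|\geq |\sigma(D_m,c_j)|> p_j/2$, this expression is at most a constant times $\costtwo(\m{M})^{c_j}$, hence at most a constant times $\dist(\m{D},\m{M}^*)$.

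The main obstacle will be tightening the constants to exactly $3.5$ rather than a looser bound. The delicate point is that pairs lower-bounding $\costone$ (cuts from $D_k\in\cut$), pairs lower-bounding $\costtwo$ (injections into $D_m\in\merge$), and pairs absorbing $\costthree+\costfour$ via the $\cutcase$ inequality can silently overlap, and a naive union-bound would only give 4 or 5. I will partition the pair space of $\dist(\m{D},\m{M}^*)$ into disjoint buckets (surplus-to-bulk separations, deficit-to-bulk unifications, surplus-to-surplus rearrangements) and verify that each contributes exclusively to at most one of the four costs; the saving factor of $1/2$ on $\costthree+\costfour$ arising from the tight inequality $|\sigma(D_k,c_j)|\leq p_j/2$, and the double-charging factor of $1/2$ built into $\costthree,\costfour$ themselves, together should let me compress the total constant to $3.5$.
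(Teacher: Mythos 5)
You should first note that the paper does not actually prove \cref{clm:one-chakraborty}: it is imported as a black box from \cite{chakraborty2025towards} (the Cut-Case analysis of their two-colour algorithm, invoked here colour by colour), so your attempt must stand on its own rather than be matched against an in-paper argument.

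As written, it does not stand, and the break is in your handling of $\costthree(\m{M})^{c_j}+\costfour(\m{M})^{c_j}$. Bounding them by a constant times $\sum_{D_k\in\cut}|\sigma(D_k,c_j)|\cdot p_j/2$ is fine, but you then invoke the $\cutcase$ hypothesis to ``charge this mass against'' $\sum_{D_m\in\merge}|\delta(D_m,c_j)|\cdot p_j$ and ultimately against $\costtwo(\m{M})^{c_j}$. The hypothesis states $\sum_{D_k\in\cut}|\sigma(D_k,c_j)|\geq\sum_{D_m\in\merge}|\delta(D_m,c_j)|$, i.e.\ it bounds the merge-deficit mass by the cut-surplus mass, which is the opposite of what your charging needs; and the conclusion it would deliver, $\costthree+\costfour\lesssim\costtwo$, is simply false. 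For a counterexample, take every cluster to consist of exactly $p_j/2$ points of colour $c_j$ and nothing else: then $\merge=\emptyset$ and $\costtwo(\m{M})^{c_j}=0$, yet the cut surpluses must be packed into the extra clusters $P_m$ of size $p_j$, so $\costthree(\m{M})^{c_j}=\Theta\bigl(p_j\sum_{D_k\in\cut}|\sigma(D_k,c_j)|\bigr)>0$. The correct charge for $\costthree,\costfour$ has to come from a lower bound on $\dist(\m{D},\m{M}^*)$ itself, namely that $\m{M}^*$ must also consolidate these residual $c_j$-points into clusters whose $c_j$-count is a positive multiple of $p_j$ and therefore pays $\Omega(p_j)$ per surplus point, not from $\costtwo$. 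A secondary gap: in your first step the dichotomy ``OPT either extracts $\sigma(D_k,c_j)$ or injects the full deficit'' is not exhaustive, since $\m{M}^*$ may split $D_k$ across several output clusters and repair each piece separately by a mix of insertions and removals; making that per-cluster bound rigorous requires the partition-based accounting used in the paper's other lower-bound arguments (compare \cref{clm:lower-bound-opt-multi-one}). Together with the disjointness bookkeeping you explicitly defer, the constant $3.5$ is not reached by this route.
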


\begin{claim}\cite{chakraborty2025towards} \label{clm:two-chakraborty}
    $\costfive(\m{M})^{c_j} + \costsix(\m{M})^{c_j} + \costseven(\m{M})^{c_j} + \costeight(\m{M})^{c_j} \leq 3 \dist(\m{D}, \m{M}^*)$
\end{claim}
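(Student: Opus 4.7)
The plan is to bound each of the four algorithmic costs $\costfive$, $\costsix$, $\costseven$, $\costeight$ against $\dist(\m{D}, \m{M}^*)$ via a charging scheme against the optimal $p$-divisible clustering $\m{M}^*$. First I would unpack what happens in $\mergecase$: the algorithm $\pdca$ iteratively selects a cluster $D_k$ minimizing the cut--merge difference $\kappa^j(D_k) - \mu^j(D_k)$, extracts from it the surplus $\sigma(D_k, c_j)$ together with some number of full $p_j$-sized blocks $Y_{k,1}, \dots, Y_{k,t}$, and then redistributes the extracted points into the deficit clusters $D_m \in \merge'$. The costs $\costfive$ and $\costsix$ record pairs broken at the source (surplus plus blocks separated from $D_k$) and pairs newly formed at the destinations (deficit points attached to $D_m$), while $\costseven$ and $\costeight$ record, respectively, pairs between a cut block and another cut block that land in distinct destination clusters, and pairs inside a single block that is itself split across different destinations.

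My main strategy would rest on the structural fact that, for each affected cluster $D_k$ and color $c_j$, any $p$-divisible clustering $\m{M}^*$ must itself convert $c_j(D_k)$ into a multiple of $p_j$. This forces $\m{M}^*$ to pay either at least $|\sigma(D_k, c_j)|\cdot(|D_k| - |\sigma(D_k, c_j)|)$ for cutting or at least $|\delta(D_k, c_j)|\cdot |D_k|$ for merging. Because $\pdca$ in $\mergecase$ greedily picks the cluster with the smallest $\kappa^j - \mu^j$, its per-cluster cut/merge choice is within a constant factor of the cheaper of the two OPT options, and I would use this to charge $\costfive + \costsix$ directly against the cut and merge pairs $\m{M}^*$ pays at each affected cluster, yielding a combined factor of $2$.

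The hard part will be bounding $\costseven$ and $\costeight$ without losing the constant. Since a single $p_j$-block $Y_{k,z}$ may be split among several destinations, and destinations receive points from several source blocks, a naive pair-counting could blow up. The fix is to exploit the constraints $|Y_{k,z}| = p_j$, $|\delta(D_m, c_j)| < p_j$, the blocks extracted from the same $D_k$ are pairwise disjoint with combined size at most $|D_k|$, and each destination has deficit strictly less than $p_j$; a telescoping argument then shows $\costseven + \costeight$ is dominated by the OPT pairs already charged in the $\costfive + \costsix$ step, contributing at most one additional unit. Combining the three charges would yield
\[
    \costfive(\m{M})^{c_j} + \costsix(\m{M})^{c_j} + \costseven(\m{M})^{c_j} + \costeight(\m{M})^{c_j} \;\leq\; 3\,\dist(\m{D}, \m{M}^*),
\]
as claimed.
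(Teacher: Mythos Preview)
The paper does not actually prove this claim. It is stated with the citation \cite{chakraborty2025towards} attached and is invoked as a black-box result from that prior work (alongside the analogous Claims~\ref{clm:one-chakraborty} and~\ref{clm:three-chakraborty}) in order to assemble the proof of \cref{lem:main-multiple-of-p}. There is therefore no in-paper argument to compare your proposal against.

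That said, a brief comment on your sketch itself: the overall shape of the charging argument you describe is the right genre, but the step where you say ``because $\pdca$ in $\mergecase$ greedily picks the cluster with the smallest $\kappa^j - \mu^j$, its per-cluster cut/merge choice is within a constant factor of the cheaper of the two OPT options'' is doing a lot of work that you have not justified. In $\mergecase$ the algorithm may cut \emph{several} full $p_j$-blocks from the same cluster $D_k$ (this is exactly what the indicator $y_{k,z}$ and the costs $\kappa_z(D_k)$ record), whereas $\m{M}^*$ need only pay a single cut-or-merge cost at $D_k$; so a naive per-cluster comparison does not give a constant. The actual argument must globally match the total volume of extracted blocks against the total deficit $\sum_{D_m\in\merge'}|\delta(D_m,c_j)|$ and use the minimum-$(\kappa^j-\mu^j)$ selection rule to argue that each extra block cut is no more expensive than the merge it enables, which is what yields the factor~$2$ for $\costfive+\costsix$. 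Your treatment of $\costseven+\costeight$ via the bound $|\delta(D_m,c_j)|<p_j$ is on the right track but would need to be made precise to show it contributes only one additional unit.
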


\begin{claim}\cite{chakraborty2025towards} \label{clm:three-chakraborty}
    $\costnine(\m{M})^{c_j} \leq \dist(\m{D}, \m{M}^*)$
\end{claim}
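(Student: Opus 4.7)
The plan is to prove $\costnine(\m{M})^{c_j}\le \dist(\m{D},\m{M}^*)$ by an injective charging scheme that maps every pair counted on the left-hand side to a distinct pair contributing to $\dist(\m{D},\m{M}^*)$. I would work one cluster $D_k\in \m{D}$ at a time, together with the auxiliary color $c_r$ that $\pdca$ pairs with $c_j$ at $D_k$, and sum at the end.

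First I would locate, inside the optimal $p$-divisible clustering $\m{M}^*$, the cluster $M^*$ that holds the largest copy of $D_k$. The main sub-case is $D_k\subseteq M^*$; when $D_k$ is instead split across multiple clusters of $\m{M}^*$, the resulting splitting pairs already contribute to $\dist(\m{D},\m{M}^*)$ in numbers that dominate $|\delta(D_k,c_j)|\cdot|\delta(D_k,c_r)|$ and absorb the charge directly. In the main sub-case, $p$-divisibility of $M^*$ combined with the residues $c_j(D_k)\bmod p_j=p_j-|\delta(D_k,c_j)|$ and $c_r(D_k)\bmod p_r=p_r-|\delta(D_k,c_r)|$ forces $M^*$ to import at least $|\delta(D_k,c_j)|$ extra $c_j$-points and at least $|\delta(D_k,c_r)|$ extra $c_r$-points from $V\setminus D_k$. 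I would fix subsets $U$ and $W$ of these extras of the required sizes and consider the bipartite product $U\times W$; every such pair is co-clustered in $\m{M}^*$, and whenever its two endpoints originate from different $\m{D}$-clusters, the pair is separated in $\m{D}$ and hence lies in $\dist(\m{D},\m{M}^*)$.

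The main obstacle is the degenerate situation in which many of the extras of both colors landing in $M^*$ come from a single donor cluster $D_\ell\in \m{D}$; then the corresponding extra-extra pair is already co-clustered in $\m{D}$ and is not itself in $\dist(\m{D},\m{M}^*)$. I would resolve this by case-splitting on the fate of $D_\ell$: if $D_\ell\subseteq M^*$, then all $|D_k|\cdot|D_\ell|$ cross-cluster pairs between $D_k$ and $D_\ell$ live in $\m{M}^*$ while being separated in $\m{D}$, which easily absorbs the charge; if instead $D_\ell$ is split across $M^*$ and another cluster of $\m{M}^*$, then the extras donated to $M^*$ form cross-cluster pairs with the remainder of $D_\ell$, again yielding a large reservoir of pairs in $\dist(\m{D},\m{M}^*)$.

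The final, most delicate step will be enforcing global disjointness of these charges, so that no pair in $\dist(\m{D},\m{M}^*)$ is used twice as we sum over different $D_k$ (and, where applicable, over different auxiliary color pairs $\{c_j,c_r\}$). I would enforce this by tagging every charged pair with the identity of its receiving cluster $M^*$ together with the unordered color pair $\{c_j,c_r\}$ that triggered the charge, and verifying that distinct $(D_k,c_j,c_r)$-combinations produce charges in pairwise disjoint tag cells. Combining the per-cluster bounds with this disjointness check then yields the desired inequality $\costnine(\m{M})^{c_j}\le \dist(\m{D},\m{M}^*)$.
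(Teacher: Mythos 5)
You should first know that the paper does not actually prove this claim: it is imported verbatim (as Claims~\ref{clm:one-chakraborty}--\ref{clm:three-chakraborty}) from the two-color analysis of \cite{chakraborty2025towards}, so your argument has to stand entirely on its own, and as written it has genuine gaps. The most serious one is that you never use \emph{which} clusters actually incur $\costnine$. In $\pdca$ the deficit of a color is only filled at clusters lying in $\merge$ for that color, i.e.\ clusters with $|\sigma(D_k,c_j)|>p_j/2$, which forces $|\delta(D_k,c_j)|<p_j/2$, $|\delta(D_k,c_r)|<p_r/2$ and $|D_k|>c_j(D_k)+c_r(D_k)>(p_j+p_r)/2$. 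If, as in your plan, one tries to charge $|\delta(D_k,c_j)|\cdot|\delta(D_k,c_r)|$ for \emph{every} cluster, the inequality is simply false: take the ratio $p:(p+1)$, $p(p+1)$ tiny clusters each containing one $c_j$-point and one $c_r$-point (deficits $p-1$ and $p$, product $\Theta(p^2)$ each, total $\Theta(p^4)$), plus one monochromatic $c_r$-cluster of size $p+1$; grouping $p$ tiny clusters with one extra $c_r$-point yields a $p$-divisible clustering of cost $\Theta(p^3)$. The same example kills your split-case remark that ``splitting pairs dominate $|\delta(D_k,c_j)|\cdot|\delta(D_k,c_r)|$'': splitting off a single point costs only $|D_k|-1$ pairs, which can be arbitrarily smaller than the deficit product. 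So any correct proof must exploit the merge-case size lower bound; your proposal never does, and the same omission undermines your same-donor patch, since $|D_k|\cdot|D_\ell|$ need not dominate $x\cdot y$ when $|D_k|$ is small.

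The second gap is in the main sub-case itself. When $D_k\subseteq M^*$ but $M^*$ also contains other $\m{D}$-clusters, the residues can cancel modulo $p_j$ (e.g.\ two clusters each with $c_j$-surplus $p_j/2$), so $M^*$ need not import $|\delta(D_k,c_j)|$ points of color $c_j$ from $V\setminus D_k$ at all; your sets $U$ and $W$ may not exist, and the charge must instead go to the cross pairs between the co-located $\m{D}$-clusters (which is where the merge-case size bound is again needed). Finally, the disjointness device you propose does not work as described: tagging a charged pair by the receiving cluster $M^*$ and the unordered color pair $\{c_j,c_r\}$ gives the \emph{same} tag to two different receivers $D_k,D_{k'}$ contained in the same $M^*$, which is exactly the situation in which your $U\times W$ products coincide, so injectivity of the charging is not established. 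In short, the skeleton (charge each counted pair injectively into the symmetric difference with $\m{M}^*$) is reasonable, but the quantitative steps that make the constant-$1$ bound true are missing, and one of them (restriction to merge-case clusters) is indispensable rather than a technicality.
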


Now we complete the proof of \cref{lem:main-multiple-of-p}

    By \cref{clm:one-chakraborty}, \cref{clm:two-chakraborty}, \cref{clm:three-chakraborty} and \cref{eq:main-equation-multiple-of-p} we get
    \begin{align}
    \dist(\m{D}, \m{M}) &= \sum_{c_j \in \chi} 3.5 \dist(\m{D}, \m{M}^*) + 3 \dist(\m{D}, \m{M}^*) \n \\&+ \dist(\m{D}, \m{M}^*) \n \\
    &= \sum_{c_j \in \chi} 7.5 \dist(\m{D}, \m{M}^*) \n \\
    &= O(|\chi|) \dist(\m{D}, \m{M}^*) \n
\end{align}
\end{proof}

\section{Implication to Fair Correlation Clustering: Completing the Proof of \cref{lem:correlation_clustering}}
 Let us start by recalling the fair correlation clustering problem.
\paragraph{Fair Correlation Clustering.}
A clustering $\m{F}^*$ is called fair correlation clustering if given a correlation clustering instance $G$, $\cost{\m{F}^*}$ is minimum among all clusterings $\m{C}$ and it is also a $\fair$.


\paragraph{$\beta$-Approximate Fair Correlation Clustering.}
A fair clustering $\m{F}$ is called a \emph{$\beta$-approximate fair correlation clustering} if:
\[
\cost{\m{F}} \leq \beta \cdot \cost{\m{F}^*}.
\]

Given any arbitrary clustering $\m{C}$ let us construct its corresponding correlation clustering instance $G_{\m{C}}$ in the following way.
\begin{itemize}
    \item Let, $G_{\m{C}}$ be a complete graph that consists of the vertices in $\m{C}$.
    \item Each edge $(u,v)$ is labelled ``$+$'' if $u$ and $v$ are in the same cluster in $\m{C}$.
    \item $(u,v)$ is labelled ``$-$'' if $u$ and $v$ are in different clusters in $\m{C}$.
\end{itemize}
We know for a correlation clustering instance $G$, by definition for a clustering $\m{K}$ on $G$ we have
\begin{align*}
    \cost{\m{K}} = &\text{Total number of intercluster ``$+$'' and} \\ &\text{intracluster ``$-$'' edges}
\end{align*}
Let us now define, for two correlation clustering instances $G$ and $H$
\begin{align*}
    \dist(G,H) = &\text{Number of pairs $(u,v)$ that are labelled} \\
    &\text{``$+$'' in $G$ and ``$-$'' in $H$ or viceversa.}
\end{align*}
It is easy to see that for any clustering $\m{K}$,
\begin{align*}
    \cost{\m{K}} = \dist(G, G_{\m{K}}).
\end{align*}

Let $\m{F}^*$ be the optimal fair correlation clustering of $G$. We need to prove that

\begin{align*}
    \dist(G, G_{\m{F}}) \leq (\gamma + \beta + \gamma\beta) \dist(G, G_{\m{F}^*}).
\end{align*}

To do this, observe the following:
\begin{align}
\dist(G, G_{\mathcal{F}}) &\leq \dist(G, G_{\mathcal{D}}) + \dist(G_{\mathcal{D}}, G_{\mathcal{F}}) \label{eq:triangle-1} \\
&\leq \beta \cdot \dist(G, G_{\mathcal{F}^*}) + \dist(G_{\mathcal{D}}, G_{\mathcal{F}}) \label{eq:approx-B} \\
&\leq \beta \cdot \dist(G, G_{\mathcal{F}^*}) + \gamma \cdot \dist(G_{\mathcal{D}}, G_{\mathcal{D}^*}) \label{eq:alpha-close} \\
&\leq \beta \cdot \dist(G, G_{\mathcal{F}^*}) + \gamma \cdot \dist(G_{\mathcal{D}}, G_{\mathcal{F}^*}) \label{eq:optimal-fair} \\
&\leq \beta \cdot \dist(G, G_{\mathcal{F}^*}) + \gamma ( \dist(G, G_{\mathcal{F}^*}) \n \\ &+ \dist(G, G_{\mathcal{D}}) ) \label{eq:triangle-2} \\
&\leq \beta \cdot \dist(G, G_{\mathcal{F}^*}) + \gamma \cdot \dist(G, G_{\mathcal{F}^*})\n \\ &+ \gamma \beta \cdot \dist(G, G_{\mathcal{F}^*}) \label{eq:final-bound} \\
&= (\gamma + \beta + \gamma \beta) \cdot \dist(G, G_{\mathcal{F}^*}) \notag
\end{align}
where:
\begin{itemize}
    \item \eqref{eq:triangle-1} follows from the triangle inequality on distance between correlation instances,
    \item \eqref{eq:approx-B} uses the fact that \( \mathcal{D} \) is a \( \beta \)-approximation to the optimal fair clustering,
    \item \eqref{eq:alpha-close} uses that \( \mathcal{F} \) is \( \gamma \)-close to \( \mathcal{D} \),
    \item \eqref{eq:optimal-fair} uses that \( \mathcal{F}^* \) is a fair clustering,
    \item \eqref{eq:triangle-2} again applies the triangle inequality,
    \item \eqref{eq:final-bound} substitutes the bound from \eqref{eq:approx-B}.
\end{itemize}
This completes the proof.
\section{Implication to Fair Consensus Clustering}

In this section, we prove the existence of an algorithm that outputs $O(|\chi|^{1.6} \log^{2.8}|\chi|)$-approximate fair consensus clustering in the $(1:1:\cdots:1)$ case and an $O(|\chi|^{3.8})$-approximate fair consensus clustering in the general $(p_1:p_2:\cdots:p_{|\chi|})$ case.

Before that, let us formally define Consensus and Fair Consensus Clustering.

\begin{definition}[Consensus Clustering]
    Given a set of clusterings $\m{C}_1, \m{C}_2, \ldots, \m{C}_n$, a clustering $\m{C}^*$ is a consensus clustering if it minimizes the objective
    \[
        \left( \sum_{i = 1}^n \dist(C_i, \m{C}^*)^\ell\right)^{1/\ell}
    \]
    for any $\ell \in \mathbb{Z}^+$
\end{definition}

\begin{definition}[Fair Consensus Clustering]
    Given a set of clusterings $\m{C}_1, \m{C}_2, \ldots, \m{C}_n$, a clustering $\m{F}^*$ is a  fair consensus clustering if it minimizes the objective
    \[
        \left( \sum_{i = 1}^n \dist(C_i, \m{F}^*)^\ell\right)^{1/\ell}
    \]
    and also fair. Here $\ell$ is any positive integer.
\end{definition}

\begin{definition}[$\beta$-approximate Fair Consensus Clustering]
    A clustering $\m{F}$ is called a $\beta$-approximate Fair Consensus Clustering if the following is true
    \begin{align*}
        &\left( \sum_{i = 1}^n \dist(C_i, \m{F})^{\ell}\right)^{1/\ell} \n \\
        &\leq \beta \left( \sum_{i = 1}^n \dist(C_i, \m{F}^*)^{\ell}\right)^{1/\ell}
    \end{align*}
\end{definition}

Now we are ready to state the theorem

\begin{theorem}\label{thm:consensus-clustering}
    Given a set of points $V$, where each $v \in V$ has a color from the set $\chi$. There exists an algorithm that, given a set of clusterings $\m{C}_1, \ldots, \m{C}_m$, finds an $O(|\chi|^{1.6} \log^{2.8}|\chi|)$ approximate consensus fair clustering $\m{F}$ in the $(1:1:\cdots:1)$ case and an $O(|\chi|^{3.8})$ approximate consensus fair clustering in the general $(p_1:p_2:\cdots:p_{|\chi|})$ case in $O(m^2|V|^2)$ time. 
\end{theorem}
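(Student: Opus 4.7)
The plan is to reduce the problem to the closest fair clustering algorithms $\fequi$ and $\fgen$ established in \cref{thm:equitheorem} and \cref{thm:arbitrary-proportion} via the classical ``pick-and-fairify'' recipe, adapted from the median/center objectives treated in~\cite{chakraborty2025towards} to the general $\ell_p$ objective by substituting Minkowski's inequality wherever the ordinary triangle inequality on sums was used.

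\textbf{Algorithm.} First, compute $\dist(\m{C}_i, \m{C}_j)$ for every pair $(i,j)\in [m]\times [m]$ in $O(|V|^2)$ time per pair, and select the input clustering $\m{C}_j$ minimizing $\Phi(j) := \bigl(\sum_{i} \dist(\m{C}_i, \m{C}_j)^\ell\bigr)^{1/\ell}$; this dominates the overall runtime at $O(m^2 |V|^2)$. Second, invoke $\fequi$ (in the equi-proportion case) or $\fgen$ (in the general case) on $\m{C}_j$ to obtain a $\gamma$-close fair clustering $\m{F}$ in $O(|V|\log |V|)$ time, where $\gamma = O(|\chi|^{1.6}\log^{2.81}|\chi|)$ or $O(|\chi|^{3.81})$ by \cref{thm:equitheorem} and \cref{thm:arbitrary-proportion}, respectively, and return $\m{F}$.

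\textbf{Analysis.} Let $\m{F}^*$ denote an optimal fair consensus clustering and set $\mathrm{OPT} := \bigl(\sum_i \dist(\m{C}_i, \m{F}^*)^\ell\bigr)^{1/\ell}$. I would first show $\Phi(j) \leq 2\,\mathrm{OPT}$: letting $k^* := \argmin_k \dist(\m{F}^*, \m{C}_k)$, the bound $m \cdot \dist(\m{F}^*, \m{C}_{k^*})^\ell \leq \mathrm{OPT}^\ell$ gives $m^{1/\ell}\dist(\m{F}^*, \m{C}_{k^*}) \leq \mathrm{OPT}$, and a pointwise triangle inequality followed by Minkowski yields $\Phi(j) \leq \Phi(k^*) \leq \mathrm{OPT} + m^{1/\ell}\dist(\m{F}^*, \m{C}_{k^*}) \leq 2\,\mathrm{OPT}$. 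Next, since $\m{F}^*$ is fair we have $\dist(\m{C}_j, \optcl{\m{C}_j}) \leq \dist(\m{C}_j, \m{F}^*)$, so the closest-fair guarantee gives $\dist(\m{C}_j, \m{F}) \leq \gamma\,\dist(\m{C}_j, \m{F}^*)$. The pointwise bound $\dist(\m{C}_i, \m{F}) \leq \dist(\m{C}_i, \m{C}_j) + \gamma\,\dist(\m{C}_j, \m{F}^*)$ followed by Minkowski gives
\begin{align*}
\Bigl(\sum_i \dist(\m{C}_i, \m{F})^\ell\Bigr)^{1/\ell} \leq \Phi(j) + \gamma\,m^{1/\ell}\dist(\m{C}_j, \m{F}^*),
\end{align*}
and a final Minkowski step applied to $\dist(\m{C}_j, \m{F}^*) \leq \dist(\m{C}_j, \m{C}_i) + \dist(\m{C}_i, \m{F}^*)$ produces $m^{1/\ell}\dist(\m{C}_j, \m{F}^*) \leq \Phi(j) + \mathrm{OPT} \leq 3\,\mathrm{OPT}$, yielding the overall $(2+3\gamma)\,\mathrm{OPT} = O(\gamma)\,\mathrm{OPT}$ bound.

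\textbf{Main obstacle.} The reduction itself is conceptually straightforward once the two theorems from \cref{sec1} and \cref{sec2} are in hand; the subtle point for general $\ell$ is that the additive error $\gamma\,m^{1/\ell}\dist(\m{C}_j, \m{F}^*)$ has an explicit $m^{1/\ell}$ factor that must be absorbed into $\mathrm{OPT}$ uniformly in $\ell$, without invoking the constant-of-$\ell$ blow-ups (like $(a+b)^\ell \leq 2^{\ell-1}(a^\ell+b^\ell)$) that naive manipulations would incur. The ``reverse'' Minkowski bound $m^{1/\ell}\dist(\m{C}_j, \m{F}^*) \leq \Phi(j) + \mathrm{OPT}$ is precisely what handles this issue, and is the single technical observation on which the proof rests.
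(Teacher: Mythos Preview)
Your proof is correct and follows the same ``pick an input and fairify'' template that the paper's black-box \cref{lem:consensus-chakraborty} from~\cite{chakraborty2025towards} encapsulates; the paper simply invokes that lemma to obtain an $(\alpha+2)$-approximation, whereas you unpack the argument explicitly for the general $\ell_p$ objective via Minkowski, arriving at a $(2+3\gamma)$-approximation. Both give the stated $O(\gamma)$ bound. The one substantive difference is that the $(\alpha+2)$ constant in the cited lemma is obtained by fairifying \emph{every} input $\m{C}_i$ and returning the best resulting fair clustering, which lets the analysis take $i=k^*$ and use $m^{1/\ell}\dist(\m{C}_{k^*},\m{F}^*)\le \mathrm{OPT}$ directly; your single-fairification variant trades this tighter constant for a slightly simpler algorithm, and your ``reverse Minkowski'' step is exactly what recovers the lost ground up to the factor of~3.
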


To prove the above theorem we need the help of the following lemma, which is implied from an algorithm given by \cite{chakraborty2025towards}.

\begin{lemma}\cite{chakraborty2025towards}\label{lem:consensus-chakraborty}
    Given a set of points $V$, where each point $v \in V$ has a color from the set $\chi$. Suppose there exists an algorithm that finds an $\alpha$-close fair clustering $\m{N}$ to a given clustering $\m{D}$ in $O(|V| \log |V|)$ time, then there exists an algorithm such that given $n$ input clusterings, it finds an $(\alpha + 2)$ approximate Fair Consensus Clustering in $O(m^2|V|^2)$ time. 
\end{lemma}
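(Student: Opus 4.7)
The plan is to prove \cref{lem:consensus-chakraborty} via the classic ``best candidate from the input'' reduction: for each input clustering $\m{C}_i$ we convert it into a fair clustering using the given $\alpha$-close algorithm, and then output whichever of these $m$ fair candidates has the smallest consensus objective. Concretely, the algorithm does the following. For each $i \in [m]$, invoke the hypothesized algorithm on $\m{C}_i$ to obtain a fair clustering $\m{F}_i$ that is $\alpha$-close to $\m{C}_i$. Then for each $i$ compute the consensus objective $\obj{\m{F}_i} = \left(\sum_{j=1}^{m} \dist(\m{C}_j, \m{F}_i)^{\ell}\right)^{1/\ell}$, and return $\m{F} := \arg\min_{\m{F}_i} \obj{\m{F}_i}$.

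The analysis rests on two facts: $\dist(\cdot,\cdot)$ on clusterings is a metric (it is the symmetric-difference distance on unordered pairs and hence satisfies the triangle inequality), and the $\ell$-norm is itself a norm. Let $\m{F}^{*}$ be the optimal fair consensus clustering and write $\optconval = \obj{\m{F}^{*}}$. Fix any $i \in [m]$. Applying the triangle inequality twice gives, for every $j$,
\[
\dist(\m{C}_j, \m{F}_i) \le \dist(\m{C}_j, \m{F}^{*}) + \dist(\m{F}^{*}, \m{C}_i) + \dist(\m{C}_i, \m{F}_i).
\]
Since $\m{F}^{*}$ is a {\fair} and $\m{F}_i$ is $\alpha$-close to $\m{C}_i$, we have $\dist(\m{C}_i, \m{F}_i) \le \alpha \cdot \dist(\m{C}_i, \optcl{\m{C}_i}) \le \alpha \cdot \dist(\m{C}_i, \m{F}^{*})$, which upgrades the bound to $\dist(\m{C}_j, \m{F}_i) \le \dist(\m{C}_j, \m{F}^{*}) + (\alpha+1)\dist(\m{F}^{*}, \m{C}_i)$. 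Taking the $\ell$-norm over $j \in [m]$ on both sides and using the triangle inequality for $\|\cdot\|_{\ell}$ yields
\[
\obj{\m{F}_i} \le \optconval + (\alpha+1)\, m^{1/\ell}\, \dist(\m{F}^{*}, \m{C}_i).
\]
By an averaging argument, there exists an index $i^{\star}$ with $\dist(\m{F}^{*}, \m{C}_{i^{\star}})^{\ell} \le \tfrac{1}{m}\sum_{j} \dist(\m{F}^{*}, \m{C}_j)^{\ell} = \tfrac{1}{m}\optconval^{\ell}$, i.e.\ $\dist(\m{F}^{*}, \m{C}_{i^{\star}}) \le m^{-1/\ell}\optconval$. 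Plugging this into the previous display gives $\obj{\m{F}_{i^{\star}}} \le (\alpha+2)\optconval$, and since the algorithm outputs the best $\m{F}_i$, the returned clustering inherits this bound.

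For the running time, computing each $\m{F}_i$ costs $O(|V|\log|V|)$ by hypothesis, contributing $O(m|V|\log|V|)$ in total. Evaluating $\obj{\m{F}_i}$ requires computing $\dist(\m{C}_j, \m{F}_i)$ for all $j$; each such distance can be computed in $O(|V|^{2})$ time (for every pair of points, check whether they are together in both clusterings, together in neither, or separated), so the evaluation phase contributes $O(m^{2}|V|^{2})$, dominating the total cost of $O(m^{2}|V|^{2})$. The only mild subtlety to verify is that $\dist$ satisfies the triangle inequality when restricted to arbitrary (possibly non-fair) clusterings, which is immediate from its definition as a set-symmetric-difference count over pairs; the rest of the proof is a straightforward combination of triangle inequalities with the vector-norm triangle inequality and the averaging step.
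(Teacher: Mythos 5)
Your proof is correct. Note first that the paper itself does not prove this lemma: it imports it wholesale from \cite{chakraborty2025towards} ("implied from an algorithm given by..."), so there is no in-paper argument to compare against line by line. Your reconstruction is the standard pick-the-best-fairified-input reduction, which is almost certainly the argument behind the cited result: fairify each input clustering $\m{C}_i$ with the hypothesized $\alpha$-close algorithm and return the candidate minimizing the consensus objective. The key steps all check out: the clustering distance is the symmetric-difference count over co-clustered pairs and hence a metric; the two applications of the triangle inequality together with $\dist(\m{C}_i,\m{F}_i)\le \alpha\,\dist(\m{C}_i,\optcl{\m{C}_i})\le \alpha\,\dist(\m{C}_i,\m{F}^*)$ (valid because $\m{F}^*$ is fair) give $\dist(\m{C}_j,\m{F}_i)\le \dist(\m{C}_j,\m{F}^*)+(\alpha+1)\dist(\m{F}^*,\m{C}_i)$; Minkowski's inequality (valid since $\ell\ge 1$) plus the constant-vector norm $m^{1/\ell}\dist(\m{F}^*,\m{C}_i)$ and the averaging choice of $i^\star$ with $\dist(\m{F}^*,\m{C}_{i^\star})\le m^{-1/\ell}\optconval$ yield the $(\alpha+2)$ bound, which the argmin output inherits. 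The running-time accounting ($O(m|V|\log|V|)$ for fairification, $O(m^2|V|^2)$ for evaluating all pairwise distances) matches the stated bound. The one thing your writeup buys beyond the paper is that the lemma becomes self-contained and visibly holds for every $\ell\in\mathbb{Z}^+$, rather than resting on a black-box citation.
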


\begin{proof}[Proof of \cref{thm:consensus-clustering}]
    We know that given an input clustering $\m{D}$ the algorithm $\fequi$ outputs an $O(|\chi|^{1.6}\log^{2.8}|\chi|)$ close-fair clustering $\m{F}$ to $\m{D}$ in $(1:1:\cdots:1)$ case and the algorithm $\fgen$ outputs $O(|\chi|^{3.8})$ close-fair clustering $\m{F}$ to $\m{D}$ for arbitrary ratios.

    Hence, by using \cref{lem:consensus-chakraborty} we get that there exists an algorithm that finds a $O(|\chi|^{1.6} \log^{2.8}|\chi|)$ approximate fair consensus clustering in $(1:1:\cdots:1)$ case and $O(|\chi|^{3.8})$ approximate fair consensus clustering for arbitrary ratios in $O(m^2|V|^2)$ time.
\end{proof}

\section{Hardness: Proof of \cref{lem:yes.instance}}\label{app:hardness}
\yesinstance*
\begin{proof}
    It suffices to construct a fair clustering $ \hfairoptclss $ satisfying $ \dist(\hinpclss, \hfairoptclss ) = \tau $.

    Suppose $S$ is a $ \hyes $ instance of the $ \thrp $. Then there exists a partition $ S_1, S_2, \ldots, S_{n/3} $ of $ S = \{ x_1, x_2, \ldots, x_n\}$ such that for all $ 1\le i \le n/3 $, $ \card{S_{i}}= 3 $ and
    \begin{align}
        \sum_{x_j \in S_i}x_j = T \text{ where } T = \dfrac{\sum_{x_k \in S} x_k}{\frac{n}{3}}. \nonumber
    \end{align}
    Let, $ S_i = \{ x_{i_1}, x_{i_2}, x_{i_3}\} $. By our construction of $ (\hinpclss, \tau) $, we have $ \card{\hrcls{i_{j}}} = x_{i_j}$, for $j \in \{1,2,3\}$.

    $ \bullet $ If $ k=3 $, we construct $ \hfairoptclss $ by merging $ \hrcls{i_{j}}$ with $ \card{\hrcls{i_{j}}} $ points of color $ c_{2} $ and $ \card{\hrcls{i_{j}}} $ points of color $ c_{3} $ in $ \hgbcls{i} $ for $ j \in \{ 1, 2, 3\} $. More formally,
    \begin{align}
        \hfairoptclss = \left\{ \left(\hgbcls{i_{j}} \cup \hrcls{i_{j}} \right) \middle| i \in [n/3], j \in \{1, 2, 3\} \right \} .\nonumber
    \end{align}
    where $ \hgbcls{i_{j}} \subseteq \hgbcls{i} $ such that $ \hgbcls{i_{j}} $ consists of $ \card{\hrcls{i_{j}}} $ points of color $ c_{2} $ and $ \card{\hrcls{i_{j}}} $ points of color $ c_{3} $, for $\ \in \{1, 2, 3\} $.
    
    It can be seen that $ \hfairoptclss $ is a $\fair$ because for each cluster $ \hfaircls = (\hgbcls{i_{j}}\cup \hrcls{i_{j}}) \in \hfairoptclss $ we have $ \card{c_{1}( \hfaircls )} = \card{\hrcls{i_{j}}} = x_{i_{j}} $, $ \card{c_{2}( \hfaircls )} = \card{c_2(\hgbcls{i_{j}})} = x_{i_{j}} $, and $ \card{c_{3}( \hfaircls )} = \card{c_3(\hgbcls{i_{j}})} = x_{i_{j}} $.

    It remains to show that
    \begin{align}
        \dist(\hinpclss, \hfairoptclss) = 2\sum_{i=1}^{n}x_{i}^{2} + 2\sum_{i=1}^{n}x_{i}(T-x_{i}) = \tau. \nonumber
    \end{align}
    Indeed, for each cluster $ \hrcls{i_{j}} $, merging with $ 2\card{\hrcls{i_{j}}}=2x_{i_{j}} $ points from $ \hgbcls{i_{j}} $ costs $ 2x_{i_{j}}^{2} $. Summing this costs for all such clusters results in $ 2\sum_{i=1}^{n}x_{i}^{2} $. Finally, splitting each cluster $ \hgbcls{i} $ into three clusters $ \hgbcls{i_{j}} $ of size $ 2x_{i_{j}} $, for $ j\in \{1,2,3\} $ incurs a cost of $ \frac{1}{2}\sum_{j=1}^{3}2x_{i_{j}}(2T - 2x_{i_{j}}) $. Summing this costs for all clusters $ \hgbcls{i} $ results in $ 2 \sum_{i=1}^{n}x_{i}(T-x_{i}) $.

    Hence, $(\hinpclss,\tau) $ is a $ \hyes $ instance of the $ \threeclsf$.

    $ \bullet $ If $ k\geq 4 $, $ \hfairoptclss $ is constructed by merging each $ \hgbcls{i} $ with three clusters $ \hrcls{i_{1}}, \hrcls{i_{2}} $, and $ \hrcls{i_{3}} $. In other words,
    \begin{align}
        \hfairoptclss = \left\{ \hfaircls[i]=\left(\hgbcls{i}\cup \hrcls{i_{1}}\cup \hrcls{i_{2}}\cup \hrcls{i_{3}}\right)| i\in [n/3] \right\}. \nonumber
    \end{align}
    The fairness of $ \hfairoptclss $ is ensured since each cluster $ \hfaircls[i]\in \hfairoptclss $, $ \card{c_{j}(\hfaircls[i])} = \card{c_{j}(\hgbcls{i})} = T $, for $ 2\leq j\leq k $, and $ \card{c_{1}(\hfaircls[i])} = \card{\hrcls{i_{1}}} + \card{\hrcls{i_{2}}} + \card{ \hrcls{i_{3}} } = T $.

    The distance $ \dist(\hinpclss, \hfairoptclss ) $ consists of the following cost. For each $ \hgbcls{i} $, merging with $ T $ points of color $ c_{1} $ from $ \hrcls{i_{1}}, \hrcls{i_{2}}, \hrcls{i_{3}} $ incurs the cost $ \card{\hgbcls{i}}T = (k-1)T^{2} $. For each $ i=1,2, \dots, n/3 $, the cost of merging the three clusters $ \hrcls{i_{1}}, \hrcls{i_{2}}, \hrcls{i_{3}} $ together is $ \frac{1}{2} \sum_{j=1}^{3}x_{i_{j}}(T-x_{i_{j}}) $. Overall, we have
    \begin{align}
        \dist(\hinpclss, \hfairoptclss ) = \sum_{i=1}^{n/3}(k-1)T^{2} + \dfrac{1}{2}\sum_{i=1}^{n}x_{i}(T-x_{i}) = \tau.\nonumber
    \end{align}
    This concludes that $ (\hinpclss, \tau) $ is a $ \hyes $ instance of $ \clsf{k} $.
\end{proof}

Our proof for~\cref{lem:no.instance} utilizes the following result.
\begin{lemma}[{\cite[Lemma 45, Lemma 49]{chakraborty2025towards}}]\label{lem:no.instance.closest.p.fair}
    Given $ S=\{x_{1},x_{2}, \dots, x_{n}\} $ a $ \hno $ instance of $ \thrp $.

    Given an integer $ p\geq 2 $. Consider a clustering $ \hclspfinpclss = \{\hbcls{1}, \hbcls{2}, \dots, \hbcls{n/3}, \hrcls{1}, \hrcls{2}, \dots, \hrcls{n}\} $ over a set of red-blue colored points $ V' $ where the ratio between the total number of blue and red points is $ p $. In $ \hclspfinpclss $, each $ \hbcls{i} $ is a monochromatic blue cluster of size $ T $, and each $ \hrcls{i} $ is a monochromatic red cluster of size $ x_{i} $. Let $ \tau $ be defined as in our reduction with $ k=p+1 $, that is,
    \begin{align}
        \tau = \begin{cases}
            \dfrac{n}{3}\sum_{n=1}^{n/3}pT^{2} + \dfrac{1}{2}\sum_{i=1}^{n}x_{i}(T-x_{i}), &\text{ if }p\geq 3\\
            2\sum_{i=1}^{n}x_{i}^{2} + 2 \sum_{i=1}^{n}x_{i}(T-x_{i}), &\text{ if } p = 2
        \end{cases}.\nonumber
    \end{align}
    Then, for every $ \fair $ $ \hclspfoptclss $ over $ V' $, it must hold that $ \dist(\hclspfinpclss, \hclspfoptclss ) > \tau $.
\end{lemma}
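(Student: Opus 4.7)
I argue by contradiction: suppose there exists a $\fair$ clustering $\hclspfoptclss$ over $V'$ with $\dist(\hclspfinpclss, \hclspfoptclss) \leq \tau$, and derive a valid 3-partition of $S$, contradicting the $\hno$ assumption.

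The key structural observation is that $x_{i} \in (T/4, T/2)$ forces any subset $I \subseteq [n]$ with $\sum_{i \in I} x_{i} = T$ to satisfy exactly $|I| = 3$. My first step is to identify the \emph{canonical} optimal structure. In the regime $p \geq 3$, the cheapest fair clustering should merge each blue cluster $\hbcls{i}$ (of size $pT$) with three intact red clusters summing to $T$; a direct merge-cost count gives distance exactly $\tau$ and immediately encodes a valid 3-partition. In the regime $p = 2$, the dual canonical clustering instead splits each blue $\hbcls{i}$ into three sub-clusters of sizes $2x_{i_{1}}, 2x_{i_{2}}, 2x_{i_{3}}$, each merged with the matching red cluster $\hrcls{i_{j}}$; this also achieves exactly $\tau$ if and only if a 3-partition exists. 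The threshold $p = 1+\sqrt{2}$ from the introductory remark is precisely where these two canonical costs cross, explaining the two cases in the definition of $\tau$.

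The main work — and the main obstacle — is to rule out all non-canonical fair clusterings. My plan is a convexity-based exchange argument: for any monochromatic input cluster of size $s$ split into pieces $s_{1}, \dots, s_{t}$ across output clusters, the contribution to $\dist(\hclspfinpclss, \hclspfoptclss)$ is $\binom{s}{2} - \sum_{j}\binom{s_{j}}{2}$, which is strictly convex in the piece sizes. I would show that any ``cross-cut'' — a red cluster $\hrcls{i}$ split across two output clusters, paired with matching blue fragments so that the $p:1$ ratio is preserved — can be locally uncrossed (the two red pieces reunited while blue fragments are rearranged) without increasing cost. Iterating such uncrossings drives any $\hclspfoptclss$ to a canonical clustering whose cost is at most $\dist(\hclspfinpclss, \hclspfoptclss)$; if this value is $\leq \tau$, the canonical structure directly induces an exact 3-partition of $S$, contradicting the $\hno$ assumption. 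The two-case split ($p \geq 3$ vs.\ $p = 2$) reflects which uncrossing direction is cheaper: in the first regime, red clusters prefer to stay intact and clump in triples, while in the second, blue clusters are small enough that slicing them across reds is locally optimal.

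A final subtlety is getting strict inequality $\dist > \tau$ (not merely $\geq \tau$): equality in every uncrossing step requires the swapped fragments to have zero size, so any truly non-canonical clustering pays strictly more than $\tau$; and the only canonical clustering achieving exactly $\tau$ would require a valid 3-partition, which does not exist. Combining these two observations completes the contradiction.
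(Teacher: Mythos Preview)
The paper does not prove this lemma at all: it is imported verbatim from \cite[Lemma~45, Lemma~49]{chakraborty2025towards} and used as a black box in the proof of \cref{lem:no.instance}. So there is no ``paper's own proof'' to compare against; the authors simply cite the result.

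Your outline is a reasonable reconstruction of what such a proof presumably looks like, and you correctly identify the two canonical structures and why the case split occurs. However, the proposal as written leaves the entire technical burden untouched. You invoke convexity of the \emph{split} cost $\binom{s}{2}-\sum_j\binom{s_j}{2}$, but $\dist(\hclspfinpclss,\hclspfoptclss)$ also contains a \emph{merge} term (pairs from distinct input clusters placed together in an output cluster), and your uncrossing move explicitly ``rearranges blue fragments'' --- which changes the merge cost in a way you never account for. An uncrossing that reunites red pieces while shuffling blue pieces to restore the $p{:}1$ ratio can easily increase the merge cost by more than it saves in split cost; showing that it does not is exactly the combinatorial inequality that needs to be proved, and your proposal simply asserts it. Likewise, the claim that iterating uncrossings terminates at a canonical clustering (rather than, say, cycling or landing at a non-canonical local optimum) needs a potential function, which you do not supply. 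Finally, your strict-inequality argument (``swapped fragments have zero size'') only addresses one direction of degeneracy; it does not rule out a non-canonical fair clustering that happens to have cost exactly $\tau$ without coming from an uncrossing sequence.

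In short: the skeleton is right, but the uncrossing lemma \emph{is} the proof, and you have stated it rather than proved it.
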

\begin{remark}
    Our arguments can be extended to show that the problem of finding a closest fair clustering to a given clustering, under arbitrary color ratios, is also $\npc$. The only difference is that we need to adjust the definition of $ \tau $ in our reduction to account for the arbitrary color ratios. Specifically, if the input clustering $ \hinpclss $ has color ratios $ c_{1}(V) : c_{2}(V) : \dots : c_{k}(V) = p_{1}:p_{2}:\dots :p_{k} $, where $ 1\leq p_{1}\leq p_{2}\leq \dots \leq p_{k} $ are positive integers, then we set
    \begin{align}
        \tau = \dfrac{n}{3}\left(p_{2}+p_{3}+\dots + p_{k}\right) p_{1} T^{2} + \dfrac{1}{2}\sum_{i=1}^{n}p_{1}^{2}x_{i}\left(T-x_{i}\right), \nonumber
    \end{align}
    if $ \frac{p_{2}+p_{3}+\dots+p_{k}}{p_{1}} > 1+\sqrt{2} $, and
    \begin{align}
        \tau = &\sum_{i=1}^{n}p_{1}(p_{2}+p_{3}+\dots+p_{k})x_{i}^{2} \nonumber \\ 
               +&\dfrac{1}{2}\sum_{i=1}^{n}(p_{2}+p_{3}+\dots+p_{k})^{2}x_{i}(T-x_{i}), \nonumber
    \end{align}
    if $ \frac{p_{2}+p_{3}+\dots+p_{k}}{p_{1}} < 1+\sqrt{2} $.

    The correctness of this reduction follows analogously to the proofs of~\cref{lem:yes.instance} and~\cref{lem:no.instance}. We note that in the arguments establishing the mapping from $ \hno $ instance of $ \thrp $ to a $ \hno $ instance of $ \clsf{k} $, we employ a variant of~\cref{lem:no.instance.closest.p.fair}, in which the ratio between the number of blue and red points is $ p/q $, with $ p>q\geq 1 $ being positive integers. The case $ p/q > 1+\sqrt{2} $ is addressed in~\cite[Remarks 46]{chakraborty2025towards}, while the case $ p/q < 1+\sqrt{2} $ is handled in~\cite[Remarks 50]{chakraborty2025towards}. This separation explains the two different definitions of $ \tau $ in our reduction.
\end{remark}

\clearpage

\end{document}